\documentclass{article}
\usepackage{enumitem}
 \usepackage[preprint]{neurips_2025}

\usepackage[utf8]{inputenc} 
\usepackage[T1]{fontenc}    
\usepackage{hyperref}       
\usepackage{url}            
\usepackage{booktabs}       
\usepackage{amsfonts}       
\usepackage{nicefrac}       
\usepackage{microtype}      
\usepackage{xcolor}
\usepackage{tabularx}
\usepackage{amsthm}
\usepackage{float}
\usepackage[section]{placeins}
\usepackage[ruled,vlined,linesnumbered]{algorithm2e}
\SetKwInput{KwIn}{Input}
\SetKwInput{KwOut}{Output}

\DontPrintSemicolon
\usepackage{listings}
\usepackage{multirow}
\usepackage{amsmath,amssymb,mathtools}
\usepackage{natbib} 
\usepackage{bm}
\usepackage{hyperref}
\usepackage{graphicx}
\usepackage{subcaption}
\usepackage{amsthm}

\newcommand{\R}{\mathbb{R}}

\providecommand{\keywords}[1]{%
  \par\nopagebreak[4]\vspace{0.15ex}%
  \noindent{\footnotesize\textbf{Keywords: }%
  \begingroup
  \def\and{ \ensuremath{\cdot} }%
  \normalfont #1%
  \endgroup}%
  \par\vspace{0.1ex}
}

\newtheorem{theorem}{Theorem}[section]
\newtheorem{proposition}[theorem]{Proposition}
\newtheorem{lemma}[theorem]{Lemma}
\theoremstyle{remark}
\newtheorem{remark}[theorem]{Remark}

\title{Sinkhorn-Drifting Generative Models}

\author{
  \textbf{Ping He$^{1}$ \quad Om Khangaonkar$^{2}$}\\[0.55ex]
  \textbf{Hamed Pirsiavash$^{2}$ \quad Yikun Bai$^{3,\ddagger}$ \quad Soheil Kolouri$^{1,\ddagger}$}\\[0.8ex]
  {\normalfont $^{1}$Department of Computer Science, Vanderbilt University}\\[-0.15ex]
  {\normalfont $^{2}$Department of Computer Science, University of California, Davis}\\[-0.15ex]
  {\normalfont $^{3}$Department of Computer Science, Purdue University}\\[-0.15ex]
  {\normalfont \texttt{ping.he@vanderbilt.edu, omkhanga@ucdavis.edu, hpirsiav@ucdavis.edu}}\\[-0.15ex]
  {\normalfont \texttt{bai195@purdue.edu, soheil.kolouri@vanderbilt.edu}}
}

\makeatletter
\renewcommand{\@maketitle}{%
  \vbox{%
    \hsize\textwidth
    \linewidth\hsize
    \vskip 0.08in
    \@toptitlebar
    \centering
    {\LARGE\bf \@title\par}
    \@bottomtitlebar
    \vskip -0.04in
    \begin{tabular}[t]{c}\bf\rule{\z@}{14\p@}\@author\end{tabular}%
    \vskip 0.20in \@minus 0.06in
  }%
}
\renewcommand{\@notice}{}
\makeatother

\renewenvironment{abstract}%
{%
  \vskip 0.015in%
  \centerline{\large\bf Abstract}%
  \vspace{0.05ex}%
  \begin{quote}%
}%
{%
  \par%
  \end{quote}%
  \vskip 0.35ex%
}
\begin{document}

\maketitle
\enlargethispage{3\baselineskip}
\begingroup
\renewcommand{\thefootnote}{\fnsymbol{footnote}}
\setcounter{footnote}{2}
\footnotetext{Corresponding Author.}
\endgroup
\begin{figure}[H]
\centering
\vspace{-0.18in}
\includegraphics[width=0.95\linewidth]{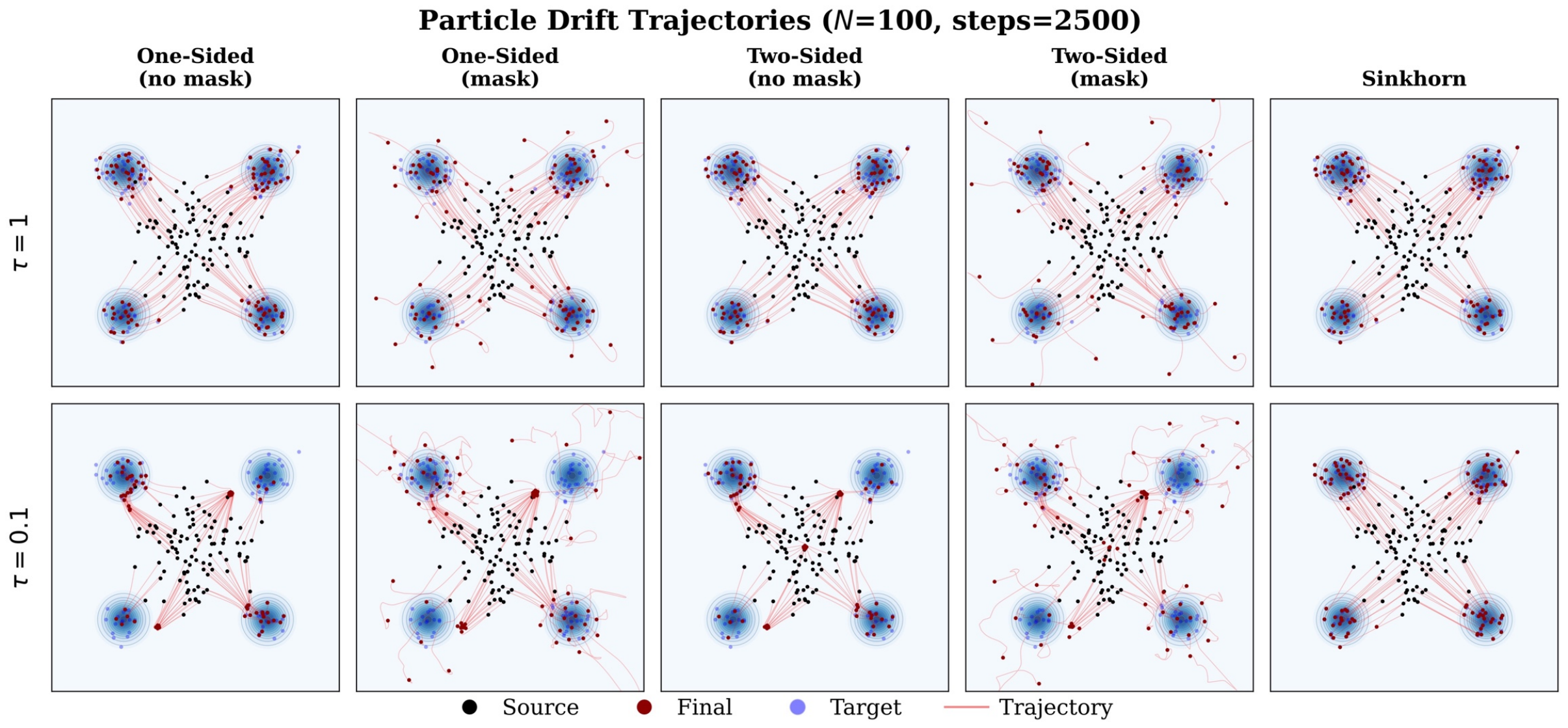}
\vspace{-0.04in}
\setlength{\abovecaptionskip}{5pt}
\setlength{\belowcaptionskip}{4pt}
\caption{Drift trajectories across different values of $\tau$ under one-sided, two-sided, and Sinkhorn normalization, with and without self-distance masking. The Sinkhorn trajectories are generated with a fixed number of iterations, $30$.}
\vspace{-0.13in}
\label{fig:drift_trajectories_eps_sweep}
\end{figure}

\begin{abstract}
We establish a theoretical link between the recently proposed “drifting” generative dynamics and gradient flows induced by the Sinkhorn divergence. In a particle discretization, the drift field admits a cross-minus-self decomposition: an attractive term toward the target distribution and a repulsive/self-correction term toward the current model, both expressed via one-sided normalized Gibbs kernels. We show that Sinkhorn divergence yields an analogous cross-minus-self structure, but with each term defined by entropic optimal-transport couplings obtained through two-sided Sinkhorn scaling (i.e., enforcing both marginals). This provides a precise sense in which drifting acts as a surrogate for a Sinkhorn-divergence gradient flow, interpolating between one-sided normalization and full two-sided Sinkhorn scaling. Crucially, this connection resolves an identifiability gap in prior drifting formulations: leveraging the definiteness of the Sinkhorn divergence, we show that zero drift (equilibrium of the dynamics) implies that the model and target measures match. Experiments show that Sinkhorn drifting reduces sensitivity to kernel temperature and improves one-step generative quality, trading off additional training time for a more stable optimization, without altering the inference procedure used by drift methods. These theoretical gains translate to strong low-temperature improvements in practice: on FFHQ-ALAE at the lowest temperature setting we evaluate, Sinkhorn drifting reduces mean FID from $187.7$ to $37.1$ and mean latent EMD from $453.3$ to $144.4$, while on MNIST it preserves full class coverage across the temperature sweep. Project page: \url{https://mint-vu.github.io/SinkhornDrifting/}.

\keywords{Drifting \and Optimal Transport \and One-Step Generative Models}
\end{abstract}

\section{Introduction}
Drifting Models~\cite{deng2026generative} propose a one-step generative modeling paradigm that, during training, minimizes the ``drift'' between the pushforward of a source distribution and the target data distribution. They define a cross-minus-self (attractive–repulsive) drift that pulls samples toward the data distribution while pushing them away from the current model, and leverage it as a stop-gradient regression signal during training to steer the generator in the induced direction; at test time, generation is a single forward pass. On ImageNet $256\times256$, drift models \cite{deng2026generative} achieve state-of-the-art performance among one-step methods, with FID competitive with multi-step diffusion and flow-based generators. Interestingly, the construction in~\cite{deng2026generative} is closely related to earlier generative modeling approaches~\cite{li2017mmd,unterthiner2018coulomb} based on maximum-mean discrepancy (MMD), which also combine attractive and repulsive forces to shape the generator. However, it is important to note that unlike standard MMD formulations, drifting models introduce an additional normalization that breaks the interpretation as a gradient flow of MMD, hence making them different from MMD gradient flow approaches.

Despite this strong empirical performance, a fundamental theoretical question remains open. The drifting field between the source and target distributions, $p$ and $q$, denoted $V_{q,p}$, is constructed so that $p=q$ implies $V_{q,p}\equiv 0$. However, the converse---namely, that $V_{q,p}\equiv 0$ implies the generated and target distributions coincide---is not established in general. The identifiability argument provided by \cite{deng2026generative} relies on a non-degeneracy assumption (linear independence of bilinear interaction vectors) that may not hold for all configurations of kernels and distributions. This leaves a gap between the training objective, which minimizes $\|V\|^2$, and the generative modeling goal of achieving $q = p$.

In this paper, we make an observation that connects the Drifting Model dynamics to the theory of Sinkhorn Divergences~\cite{feydy2019interpolating}, a well-studied family of loss functions interpolating between optimal transport (OT) and maximum mean discrepancy (MMD), which have been used for generative modeling in prior work \cite{genevay18a,salimans2018improving}. Specifically, we show that at the particle level, both the drifting field and the gradient flow of the Sinkhorn divergence share the same algebraic structure: a \emph{cross-minus-self} barycentric projection $V(X) = P_{\mathrm{cross}}Y - P_{\mathrm{self}}X$, where the coupling matrices $P$ are row-stochastic. The two formulations differ only in how these couplings are constructed: Drifting Models use (approximate) row-normalization of a Gibbs kernel, while the Sinkhorn divergence uses entropic optimal transport plans obtained by full two-sided Sinkhorn scaling.

This connection offers several insights. First, it situates Drifting Models within a well-understood variational framework, revealing that they implement an approximate version of a Sinkhorn-divergence gradient flow. Second, it explains \emph{why} the heuristic and empirically observed two-sided normalization used in the actual Drift implementation \cite{deng2026generative} (a geometric mean of row- and column-softmax) improves over pure row-normalization: it moves the coupling closer to the doubly stochastic Sinkhorn solution. Third, it clarifies the identifiability gap: the Sinkhorn divergence satisfies $S_\tau(\alpha,\beta) = 0 \Leftrightarrow \alpha = \beta$ rigorously, and this guarantee traces directly to the global marginal constraints enforced by two-sided scaling, constraints that one-sided normalization lacks. The additional computational overhead of Sinkhorn normalization during training buys us temperature-stable, theoretically grounded drift fields with fewer ad hoc engineering tricks required for training, while keeping inference time \emph{exactly} the same.


\section{Background and Notations}

\subsection{Drift Method}
\paragraph{Drift Velocity Field.}
Let $p$ denote a fixed target distribution on $\mathbb{R}^d$, and let $q$ denote the current model distribution. The Drift is constructed as a velocity field $V_{q,p}:\mathbb{R}^d\to \mathbb{R}^d$ of the form
\begin{equation}
V_{q,p}(x) = V_p^+(x) - V_q^-(x),
\label{eq:drift_field_def}
\end{equation}
where each term is a kernel-weighted (and normalized) average of displacement vectors. Concretely, given a positive kernel $k:\mathbb{R}^d\times\mathbb{R}^d\to \R_+$ and the normalizers
\begin{equation}
Z_p(x):=\int k(x,y)\,dp(y),\qquad Z_q(x):=\int k(x,y)\,dq(y),\nonumber
\end{equation}
the Drift update is
\begin{equation}
V_p^+(x) := \frac{1}{Z_p(x)}\int k(x,y)(y-x)\,dp(y),\qquad
V_q^-(x) := \frac{1}{Z_q(x)}\int k(x,y)(y-x)\,dq(y).
\label{eq:drift_terms}
\end{equation}
Intuitively, $V_p^+(x)$ attracts $x$ toward the target $p$ (a kernel barycenter step), while $V_q^-(x)$ subtracts an analogous attraction toward the current model $q$, producing a repulsive (self-correction) effect.

We use the Gibbs kernel $k(x,y)=e^{-\frac{C(x,y)}{\tau}}$\footnote{The kernel used in \cite{deng2026generative} is $e^{-\frac{\|x-y\|}{\tau}}$; here we instead use the classical Gaussian kernel.} where $C(x,y)$ is a cost function between $x$ and $y$; by default, we set $C(x,y)=\frac{1}{2}\|x-y\|^2$ and use constant $\tau>0$.

In the discrete setting, let $p=\frac{1}{n}\sum_{j=1}^n\delta_{y_j}$ and $q=\frac{1}{n}\sum_{i=1}^n\delta_{x_i}$ be empirical measures with samples $Y=\{y_j\}_{j=1}^n$ and $X=\{x_i\}_{i=1}^n$. Then
\begin{align}
&V_{q,p}(x_i)=\sum_{j=1}^n P^{\mathrm{drift}}_{XY}[i,j]y_j-
\sum_{j=1}^n P^{\mathrm{drift}}_{XX}[i,j]x_j\label{eq:drift_discrete}\\
&\text{where }P_{XY}^{\mathrm{drift}}[i,j]:=\frac{k(x_i,y_j)}{\sum_{l=1}^n k(x_i,y_l)},\quad P_{XX}^{\mathrm{drift}}[i,j]:=\frac{k(x_i,x_j)}{\sum_{l=1}^n k(x_i,x_l)}\nonumber
\end{align}

\paragraph{Drift generative model.}
We define a one-step generative model:
\begin{align}
f_\theta: \mathbb{R}^d\to \mathbb{R}^d, \qquad \epsilon\mapsto x_\theta:=f_\theta(\epsilon) \label{eq:f_theta}
\end{align}
where $\epsilon\sim p_{\epsilon}$ for some prior distribution $p_{\epsilon}$.
The drift flow is the probability path $q_\theta:=(f_\theta)_\# p_\epsilon$  generated by $V_{q_\theta,p}$:
\begin{align}
  \dot{x_{\theta}}=V_{q_\theta,p}(x_\theta),\forall x_\theta=f_\theta(\epsilon),\label{eq:drift_flow}  
\end{align}
which is approximated by its forward Euler process in time discretization scheme:
\begin{align}
x_{\theta_{k+1}}=x_{\theta_{k}}+V_{q_\theta,p}(x_{\theta_k}).\label{eq:drift_flow_discrete}  
\end{align}
The training loss is then set to be
\begin{align}
\mathcal{L}^{\text{drift}}:=\frac{1}{2}\mathbb{E}_{\epsilon}\Big[\|f_\theta(\epsilon)-\text{sg}\big(f_\theta(\epsilon)+V_{q_\theta,p_{\text{data}}}(f_\theta(\epsilon))\big)\|^2\Big],\label{eq:drift_loss}
\end{align}
where $\text{sg}$ is the stop-gradient operator, and the optimization scheme is based on gradient descent:
\begin{align}
  \theta \gets -\eta\nabla_\theta \mathcal{L}^{\text{drift}},\nabla_\theta \mathcal{L}^{\text{drift}}= -\mathbb{E}_{\epsilon}[J_f(\theta,\epsilon)^\top V_{q_\theta,p_{\text{data}}}(f_\theta(\epsilon))],
  \label{eq:drift_gradient}
\end{align}
where $\eta$ is the learning rate. It can be verified that the above parameter update \eqref{eq:drift_gradient} induces the drift flow \eqref{eq:drift_flow_discrete}. We refer to Appendix~\ref{sec:drift_model} for details.

\subsection{Entropic OT and Sinkhorn divergence}\label{sec:entropic_ot_sinkhorn}
Let $C:\mathbb{R}^d\times\mathbb{R}^d\to \R_+$ be a measurable cost mentioned in the above section, and choose probability measures $\alpha,\beta\in\mathcal{P}(\mathbb{R}^d)$. Furthermore, we assume $\alpha,\beta\in \mathcal{P}_C(\mathbb{R}^d):=\{p\in\mathcal{P}(\mathbb{R}^d):\int c(x,0)dp(x)<\infty\}$. 
The entropic OT is defined as
\begin{equation}
\mathrm{OT}_{\tau}(\alpha,\beta)
:=\min_{\pi\in\Pi(\alpha,\beta)}\int C(x,y)d\pi(x,y)
 +\tau D_{KL}\!\left(\pi\,\middle\|\,\alpha\otimes\beta\right),
\label{eq:entropic_ot}
\end{equation}
where $\alpha\otimes\beta$ is the outer-product measure, $\Pi(\alpha,\beta)$ denotes the set of couplings with marginals $\alpha$ and $\beta$, and $D_{KL}(\cdot~||~\cdot)$
is the Kullback--Leibler (KL) divergence.
The Sinkhorn Divergence \cite{genevay18a, feydy2019interpolating} is the debiased functional
\begin{equation}
S_{\tau}(\alpha,\beta)
:=
\mathrm{OT}_{\tau}(\alpha,\beta)
-\frac12 \mathrm{OT}_{\tau}(\alpha,\alpha)
-\frac12 \mathrm{OT}_{\tau}(\beta,\beta),
\label{eq:sinkhorn_div}
\end{equation}
interpolating OT (as $\tau\to 0$) and an MMD/energy-type geometry (as $\tau\to\infty$).


\subsection{Sinkhorn algorithm}
Let $\alpha=\sum_{i=1}^n\mathrm{p}^\alpha_i\delta_{x_i},\beta=\sum_{i=1}^m\mathrm{p}^\beta_i\delta_{y_i}$ be discrete probability measures on $\mathbb{R}^d$. The entropic OT problem \eqref{eq:entropic_ot} becomes: 
$$
OT_\tau(\alpha,\beta)=\min_{\pi\in\Pi(\mathrm{p}^\alpha,\mathrm{p}^\beta)}\sum_{i,j=1}^{n,m}C(x_i,y_j)\pi_{i,j}+\tau D_{KL}(\pi\parallel \mathrm{p}^\alpha\otimes\mathrm{p}^\beta)
$$
where $\Pi(\mathrm{p}^\alpha,\mathrm{p}^\beta)=\{\pi\in\mathbb{R}^{n\times m}:\pi \bold{1}_m=\mathrm{p}^\alpha, \pi^\top \bold{1}_n=\mathrm{p}^\beta\}, D_{KL}(\pi\parallel \mathrm{p}^\alpha\otimes \mathrm{p}^\beta):=\sum_{i,j}\log(\frac{\pi_{i,j}}{\mathrm{p}^\alpha_i\mathrm{p}^\beta_j})\pi_{i,j}$. The discrete entropic OT problem admits an efficient solution via the Sinkhorn--Knopp
(iterative proportional fitting) algorithm, which can be interpreted as alternating
Bregman projections onto the marginal constraints \cite{cuturi2013sinkhorn}.
In matrix form, starting from the Gibbs kernel
$K := \exp(-C/\tau)$, the algorithm alternates row and column rescalings:
\begin{equation}
\begin{cases}
\pi^{(0)} \gets K,\\
\pi^{(\ell)} \gets \mathrm{diag}\Big(\dfrac{\mathrm p^\alpha}{\pi^{(\ell-1)}\mathbf 1_m}\Big)\pi^{(\ell-1)}, & \text{if $\ell$ is odd},\\
\pi^{(\ell)} \gets \pi^{(\ell-1)}\mathrm{diag}\Big(\dfrac{\mathrm p^\beta}{(\pi^{(\ell-1)})^\top \mathbf 1_n}\Big), & \text{if $\ell$ is even}.
\end{cases}
\label{eq:sinkhorn}
\end{equation}
where the divisions are elementwise.
By \cite{cuturi2013sinkhorn,knight2008sinkhorn,thibault2021overrelaxed}, the Sinkhorn algorithm converges in a number of iterations that typically scales polylogarithmically with the problem size and inversely with the regularization strength (the precise rate depends on the assumptions).

\subsection{Wasserstein Gradient flows.}
Let $\mathcal{P}(\mathbb{R}^d)$ denote the set of all probability measures defined in $\mathbb{R}^d$. A convenient way to describe dynamics on distributions over $\mathbb{R}^d$ is via a \emph{probability path}
(or curve of measures) $(q_t)_{t\in[0,1]}$, where each $q_t$ is a probability measure on $\mathbb{R}^d$.

Given an energy functional $\mathcal{F}(q)$, the \emph{Wasserstein gradient flow} of $\mathcal{F}$ is
the steepest-descent evolution with respect to Wasserstein geometry: 

\begin{equation}
\partial_t q_t +\nabla\cdot\left(q_t v_t\right)=0, v_t =-\nabla \frac{\delta \mathcal{F}}{\delta q}(q_t)
\label{eq:wgf_formal}
\end{equation}
where $\frac{\delta \mathcal{F}}{\delta q}$ denotes the first variation of $\mathcal{F}$, and $\nabla\cdot$ denotes the divergence. For discrete  $q_t:=q_{X_t}:=\sum_{i=1}^n\mathrm{q}_i\delta_{x_t^i}$, \eqref{eq:wgf_formal} induces the following particle dynamics:
\begin{equation}
\dot x_t^i = v_t(x_t^i)
= -\frac{1}{q_j}\nabla_{x_t^i} \big(\mathcal{F}(q_{X_t})\big),\qquad j=1,\ldots,n.
\label{eq:wgf_particle}
\end{equation}
A forward Euler discretization yields
\begin{equation}
x_{k+1}^i = x_k^i-\eta\,\frac{1}{q_i}\nabla_{x^i}\big(\mathcal{F}(q_{X_k})\big),\qquad i=1,\ldots,n.
\label{eq:wgf_step}
\end{equation}
where $x_{k}:=x_{t_{k}}$ and $t_{k}=\frac{k}{T}$ is the time discretization (w.r.t. learning rate, $\frac{1}{T}$).

\section{Wasserstein Gradient Flow of Sinkhorn-Divergence}
\subsection{Sinkhorn Divergence Flow Model}
We consider the quadratic loss $c(x,y)=\frac{1}{2}\|x-y\|^2$ and let $\mathcal{P}_2(\mathbb{R}^d)=\mathcal{P}_{c}(\mathbb{R}^d):=\{p\in\mathcal{P}(\mathbb{R}^d):\int |x|^2 dp(x)<\infty\}$. We suppose $\alpha,\beta\in\mathcal{P}_2(\mathbb{R}^d)$.  For convenience, we denote $OT_\tau^l(\alpha,\beta)=\sum_{i,j=1}^n c(x_i,y_j)\,(\pi_{\alpha,\beta})_{i,j}^l$, where $\pi^l_{\alpha,\beta}$ is the transport plan obtained from the $l$-th iteration of the Sinkhorn algorithm \eqref{eq:sinkhorn}. We then define
$$S_{\tau}^l(\alpha,\beta)=OT_\tau^l(\alpha,\beta)-\frac{1}{2}OT_\tau^l(\alpha,\alpha)-\frac{1}{2}OT_\tau^l(\beta,\beta).$$
This is the corresponding ``Sinkhorn divergence'' where $S_\tau(\alpha,\beta)=S_\tau^\infty(\alpha,\beta)$ and $\pi^\infty_{\alpha,\beta}$ is the solution of $S_\tau(\alpha,\beta)$. Let $p:=p_{\mathrm{data}}\in\mathcal{P}_2(\mathbb{R}^d)$ be a fixed probability measure, and define the functional $\mathcal{F}(q):=S_\tau(p,q)$ with initial $q_0\in\mathcal{P}_2(\mathbb{R}^d)$.  
We consider the Wasserstein gradient flow \eqref{eq:wgf_formal}. 

\begin{proposition}\label{pro:sinkhorn_wgf}
Under the finite-sample approximation $\hat{p}_{\mathrm{data}}=\sum_{j=1}^n\frac{1}{n}\delta_{y^j}$ and $\hat{q}:=\hat{q}_X=\sum_{i=1}^n\frac{1}{n}\delta_{x^i}$, the above Wasserstein gradient flow becomes:
\begin{align}
  \dot{x}^i=:V^\infty_{\hat{q},\hat{p}}(x^i)=\sum_{j=1}^n (n\pi_{XY}^\infty)_{ij}y^j-\sum_{j=1}^n (n\pi_{XX}^\infty)_{ij}x^j,\quad \forall i;\label{eq:sinkhorn_flow}
\end{align} 
and the Euler forward step \eqref{eq:wgf_step} becomes:
$$x^i_{k+1}\gets x^i_k+\eta V_{q_X,p}^\infty(x^i_k),\qquad\forall i,$$ 
where $\pi_{XY}^\infty=\pi^\infty_{q_X,p}$ and $\pi_{XX}^\infty$ is defined similarly.
\end{proposition}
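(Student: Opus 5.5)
The plan is to apply the particle formula \eqref{eq:wgf_particle} with $\mathcal{F}(\hat q_X)=S_\tau(\hat p,\hat q_X)$ and $\mathrm q_i=1/n$. That gives $\dot x^i=-n\nabla_{x^i}S_\tau(\hat p,\hat q_X)$. The term $\frac12\mathrm{OT}_\tau(\hat p,\hat p)$ does not depend on $X$, so only $\nabla_{x^i}\mathrm{OT}_\tau(\hat q_X,\hat p)$ and $\frac12\nabla_{x^i}\mathrm{OT}_\tau(\hat q_X,\hat q_X)$ need to be computed. Both gradients will come from the dual (Kantorovich/Schrödinger potential) formulation of entropic OT together with an envelope (Danskin) argument.

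\textbf{Step 1 (cross term).} For discrete measures with fixed weights, $\mathrm{OT}_\tau(\hat q_X,\hat p)$ is the minimum over the fixed polytope $\Pi(\mathrm p^\alpha,\mathrm p^\beta)$ of a function that is strictly convex in $\pi$ and smooth in $X$ (the entropy term does not involve $X$). The minimizer $\pi^\infty_{XY}$ is unique, so Danskin's theorem gives
\[
\nabla_{x^i}\mathrm{OT}_\tau(\hat q_X,\hat p)=\sum_j (\pi^\infty_{XY})_{ij}\nabla_x c(x^i,y^j)=\sum_j(\pi^\infty_{XY})_{ij}(x^i-y^j).
\]
The row marginal constraint gives $\sum_j(\pi^\infty_{XY})_{ij}=1/n$, so this equals $\frac1n x^i-\sum_j(\pi^\infty_{XY})_{ij}y^j$.

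\textbf{Step 2 (self term).} This is where I expect the main subtlety: in $\mathrm{OT}_\tau(\hat q_X,\hat q_X)$ the point $x^i$ enters both as a source and as a target. I would write $\mathrm{OT}_\tau(\hat q_X,\hat q_{X'})$, take the partial derivatives with respect to each argument by Danskin as in Step 1, and then set $X'=X$. The derivative in the first argument is $\sum_j(\pi^\infty_{XX})_{ij}(x^i-x^j)$. The derivative in the second argument is $\sum_j(\pi^\infty_{XX})_{ji}(x^i-x^j)$. Uniqueness of the plan and symmetry of the cost make $\pi^\infty_{XX}$ symmetric: if $\pi$ is optimal then so is $\pi^\top$, with the same value. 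The two contributions are therefore equal, and the factor $\frac12$ cancels the doubling. Using the marginal constraint again,
\[
\tfrac12\nabla_{x^i}\mathrm{OT}_\tau(\hat q_X,\hat q_X)=\tfrac1n x^i-\sum_j(\pi^\infty_{XX})_{ij}x^j.
\]

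\textbf{Step 3 (assemble).} Subtracting the two gradients, the $\frac1n x^i$ terms cancel. Multiplying by $-n$ yields
\[
\dot x^i=\sum_j(n\pi^\infty_{XY})_{ij}y^j-\sum_j(n\pi^\infty_{XX})_{ij}x^j,
\]
which is \eqref{eq:sinkhorn_flow}. Substituting this velocity into \eqref{eq:wgf_step} gives the stated Euler update.

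To be careful about differentiability, I would note that the minimizer is unique, lies in the interior (the Sinkhorn plan $\mathrm{diag}(u)K\mathrm{diag}(v)$ has strictly positive entries), and depends smoothly on $X$ by the implicit function theorem applied to the Sinkhorn fixed-point equations. Danskin therefore applies. An equivalent route that avoids Danskin is to differentiate the dual objective: the gradient of the potentials' contribution vanishes at optimality, because the marginal constraints are exactly the first-order conditions.
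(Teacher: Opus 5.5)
Your proposal is correct and takes essentially the same route as the paper: a Danskin/envelope argument on the primal entropic problem over the fixed polytope $\Pi$ gives $\nabla_{x^i}\mathrm{OT}_\tau=\sum_j(\pi^\infty_{XY})_{ij}(x^i-y^j)$, and the result then follows from the quadratic cost and $\dot x^i=-n\nabla_{x^i}\mathcal F$. You are more careful than the paper in two places: you treat the self term explicitly (contributions from both marginals, made equal by $\pi^\infty_{XX}=(\pi^\infty_{XX})^\top$ and cancelled against the $\tfrac12$), and you track the factor $n$ via the row marginal $\sum_j(\pi^\infty)_{ij}=1/n$, whereas the paper writes only ``similarly'' and states the slightly loose identity $\nabla_{x^i}\mathcal F=-V$, which should read $-\tfrac1n V$.
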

\begin{remark}
In the general probability distribution case, $p\in \mathcal{P}_2(\mathbb{R}^d)$, the Wasserstein gradient flow \eqref{eq:wgf_formal} induces:
\begin{align}
  \dot{x}=\int (y-x)d\pi^\infty_{q,p}(y|x)-\int (y-x)d\pi^\infty_{q,q}(y|x):=V_{q,p}^\infty(x),\label{eq:sinkhorn_drif_general}  
\end{align}
which can be treated as the general version of $V_{\hat{q},\hat{p}}$, where $\pi^\infty(\cdot|x)$ is the conditional measure based on $\pi^\infty$.  Appendix~\ref{sec:sinkhorn_wgf} provides the formal proofs.
\end{remark}





\begin{algorithm}[t]
\caption{Sinkhorn Drifting Field}
\label{alg:split_sinkhorn_drift}
\begin{lstlisting}[language=Python, mathescape=true, escapeinside={(*}{*)}, basicstyle=\footnotesize\ttfamily, aboveskip=2pt, belowskip=2pt, lineskip=-0.2ex]
def sinkhorn_drift(X, Y_pos, Y_neg, (*$\tau$*), T):
  # X:     [N, d]   generated features
  # Y_pos: [N+, d]  positive samples 
  # Y_neg: [N-, d]  negative samples 
  # temperature (*$\tau$*) > 0,
  # Sinkhorn iterations T (an Odd number) (T=1 recovers Deng et al. [2])
  
  # pairwise distances
  D_pos = cdist$^2$(X, Y_pos)          # [N, N+]
  D_neg = cdist$^2$(X, Y_neg)          # [N, N-]

  # Gibbs kernels (logits)
  L_pos = exp(-D_pos / (*$\tau$*))
  L_neg = exp(-D_neg / (*$\tau$*))

  # uniform marginals (or user-specified)
  r = (*$\mathbf{1}_N$*) / N
  c_pos = (*$\mathbf{1}_{N_+}$*) / N_pos
  c_neg = (*$\mathbf{1}_{N_-}$*) / N_neg

  # split Sinkhorn couplings
  (*$\Pi$*)_pos = Sinkhorn(L_pos, r, c_pos, T)   # [N, N+]
  (*$\Pi$*)_neg = Sinkhorn(L_neg, r, c_neg, T)   # [N, N-]

  # row-normalize the Sinkhorn couplings
  P_pos = RowNormalize((*$\Pi$*)_pos)
  P_neg = RowNormalize((*$\Pi$*)_neg)

  # cross-minus-self drift
  V = P_pos @ Y_pos - P_neg @ Y_neg    # [N, d]
  return V
\end{lstlisting}
\end{algorithm}








\subsection{Relation to Drift Field Method}
 Similar to $V^\infty_{q,p}$ defined in \eqref{eq:sinkhorn_flow},  for each $l\in\mathbb{N}$, we introduce the flow: 
$$\dot{x}^i=V^l_{\hat{q}_X,\hat{p}}(x^i)=\sum_{j=1}^n(n\pi_{XY}^l)_{ij}(y_j-x_i)-\sum_{j=1}^n(n\pi_{XX}^l)(x_j-x_i).$$
Thus we have: 
\begin{proposition}
When $l=1$,
the Euler discretization
$$x_{k+1}^i = x_k^i + \eta\, V^l_{\hat{q}_X,\hat{p}}(x^i),\qquad i\in[1:n]$$
coincides with the Drift dynamic \eqref{eq:drift_particles_vfield} as proposed in Deng et al. \cite{deng2026generative}.
\end{proposition}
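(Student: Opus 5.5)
The plan is to compute the first Sinkhorn iterate explicitly and compare it entry by entry with the drift couplings of \eqref{eq:drift_discrete}. By \eqref{eq:sinkhorn}, with $\pi^{(0)}=K$ and $l=1$ odd, only the row rescaling has been applied. With uniform marginals $\mathrm p^\alpha=\frac1n\mathbf 1_n$ this gives
\[
(\pi^1_{XY})_{ij}=\frac{1}{n}\,\frac{k(x_i,y_j)}{\sum_{l=1}^n k(x_i,y_l)},\qquad (\pi^1_{XX})_{ij}=\frac{1}{n}\,\frac{k(x_i,x_j)}{\sum_{l=1}^n k(x_i,x_l)},
\]
where $k(x,y)=e^{-C(x,y)/\tau}$ is the Gibbs kernel of the paper. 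Hence $n\pi^1_{XY}=P^{\mathrm{drift}}_{XY}$ and $n\pi^1_{XX}=P^{\mathrm{drift}}_{XX}$ hold exactly. This identification is the key step.

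Next I simplify $V^1$. The matrices $n\pi^1_{XY}$ and $n\pi^1_{XX}$ are row-stochastic, so $\sum_j (n\pi^1)_{ij}x_i=x_i$ for each of them. The $-x_i$ displacements in the two sums of $V^1$ therefore both contribute $-x_i$ and cancel. This leaves
\[
V^1_{\hat q_X,\hat p}(x_i)=\sum_j P^{\mathrm{drift}}_{XY}[i,j]\,y_j-\sum_j P^{\mathrm{drift}}_{XX}[i,j]\,x_j ,
\]
which is exactly \eqref{eq:drift_discrete}, i.e. $V_{q,p}(x_i)$ with $p,q$ the empirical measures. Equivalently, it is the discretization of \eqref{eq:drift_terms}: the normalizers $Z_p(x_i),Z_q(x_i)$ become the row sums of $K$ (the factors $1/n$ cancel).

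Finally I match the Euler step. Substituting gives $x^i_{k+1}=x^i_k+\eta V_{q,p}(x^i_k)$, which is the forward-Euler drift update \eqref{eq:drift_flow_discrete}, with the step size $\eta$ absorbed as in the parameter-update derivation \eqref{eq:drift_gradient}. That is the dynamic referred to as \eqref{eq:drift_particles_vfield}.

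The main obstacle is bookkeeping rather than mathematics. I must confirm that the iteration index is counted so that $l=1$ means exactly one row normalization, not a column step, since otherwise one gets a column-softmax variant. I must also confirm that the factor $n$ in $V^l$ is precisely what turns $\pi^1$ into a row-stochastic matrix under uniform weights. A remark can note that for the self term the diagonal $k(x_i,x_i)=1$ is included, consistent with the unmasked drift; the masked version corresponds to setting $C(x_i,x_i)=\infty$.
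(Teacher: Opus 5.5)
Your proposal is correct and follows the paper's proof. The paper's argument is exactly the identification $n\pi^1_{XY}=P^{\mathrm{drift}}_{XY}$ and $n\pi^1_{XX}=P^{\mathrm{drift}}_{XX}$ after the single odd-step row rescaling, so the fields, and hence the Euler updates, coincide; you also spell out the uniform-marginal computation and the row-stochastic cancellation of the $-x_i$ terms, which the paper leaves implicit.
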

\begin{proof}
Since $l=1$, we have $P_{XY}^{\text{drift}}=n\pi^l_{XY}$ and $P_{XX}^{\text{drift}}=n\pi^l_{XX}$. Thus,
$$
V^l_{\hat{q},\hat{p}}(x)=V^{\text{drift}}_{\hat{q},\hat{p}}(x),
$$
where $V^{\text{drift}}_{\hat{q},\hat{p}}$ is defined in \eqref{eq:drift_particles_vfield}. We immediately obtain the conclusion.
\end{proof}

\subsection{Training Loss and Algorithm.}
In the generative model setting, we suppose $x=f_\theta(\epsilon),\epsilon\sim p_{\epsilon}$ where $p_\epsilon$ is some prior distribution (e.g. Gaussian), $f_\theta:\mathbb{R}^d\to \mathbb{R}^d$ is a generator. 
By Theorem 1 in \cite{feydy2019interpolating}, under some regular conditions of $p,q$, we have 
\begin{equation}
S_\tau(p,q) = 0 \quad\Longleftrightarrow\quad p = q.
\label{eq:sinkhorn_posdef}
\end{equation}
\begin{remark}
$S_\tau(\alpha,\beta)=0$ iff $\alpha=\beta$ under some regular conditions. 
The proof proceeds via the strict convexity of the Sinkhorn negentropy $F_\tau(\alpha):=-\tfrac12\mathrm{OT}_\tau(\alpha,\alpha)$, which yields a Bregman divergence (the ``Hausdorff divergence'' $H_\tau$) satisfying $0 \leq H_\tau(\alpha,\beta) \leq S_\tau(\alpha,\beta)$. Since $H_\tau(\alpha,\beta)=0$ implies $\alpha=\beta$ by strict convexity, so does $S_\tau(\alpha,\beta)=0$.
\end{remark}

In this case, we have the following: 
\begin{proposition}\label{pro:sinkhorn_identity}
Fix $l\in \mathbb{N}\cup\{+\infty\}$. If the two empirical measures coincide, i.e., $\hat p=\hat q$, then the level-$l$ drift field vanishes:
\begin{align}
V_{\hat q,\hat p}^l(x^i)=0,\qquad \forall i\in[1\!:\!n].\nonumber
\end{align}
\end{proposition}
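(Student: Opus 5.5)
If $\hat p=\hat q$, then the two empirical measures have the same support with the same uniform weights. After reindexing, we may take $y^j=x^j$ for all $j$. (If points repeat, we group them with multiplicity; the formula for $V^l$ only involves the lists $\{x_j\}$ and $\{y_j\}$ with weights $1/n$.) The strategy is to show that, under this identification, the couplings in the cross and self terms coincide at every Sinkhorn level $l$. The two sums in $V^l_{\hat q,\hat p}$ then cancel term by term.

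\textbf{Step 1: same kernel and marginals.} The cross problem uses the Gibbs kernel $K_{XY}[i,j]=\exp(-C(x_i,y_j)/\tau)$ with marginals $\mathrm p^\alpha=\mathbf 1_n/n$ and $\mathrm p^\beta=\mathbf 1_n/n$. The self problem uses $K_{XX}[i,j]=\exp(-C(x_i,x_j)/\tau)$ with the same marginals. Since $y_j=x_j$, we get $K_{XY}=K_{XX}$ entrywise.

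\textbf{Step 2: induction over the Sinkhorn iterations.} The recursion \eqref{eq:sinkhorn} is a deterministic map of the kernel and the two marginal vectors. Both iterations start from the same $\pi^{(0)}$, and each row or column rescaling applies the same diagonal scaling to identical matrices. By induction on $l$, $\pi^l_{XY}=\pi^l_{XX}$ for every finite $l$.

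\textbf{Step 3: the case $l=\infty$.} Here I would use either of two arguments. One is to pass to the limit of the iterates, which converge by the cited convergence of the Sinkhorn algorithm. The other is to note that $\pi^\infty$ is the unique minimizer of the strictly convex problem \eqref{eq:entropic_ot}, and this problem has identical data in the two cases. Either way, $\pi^\infty_{XY}=\pi^\infty_{XX}$.

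\textbf{Step 4: cancellation.} Write $\pi:=\pi^l_{XY}=\pi^l_{XX}$. Then
\[
V^l_{\hat q,\hat p}(x_i)=\sum_j (n\pi)_{ij}(y_j-x_i)-\sum_j (n\pi)_{ij}(x_j-x_i)=\sum_j (n\pi)_{ij}(y_j-x_j)=0,
\]
which holds for every $i$.

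The only delicate point is the reindexing step. The formula for $V^l$ depends on the orderings of the samples $X$ and $Y$, so one must check that $V^l$ is invariant under permutations of the $y_j$. This holds because permuting $Y$ permutes the columns of $K_{XY}$. Sinkhorn scaling with uniform column marginals commutes with column permutations, so $\pi_{XY}$ has its columns permuted correspondingly, and the sum $\sum_j \pi_{ij}y_j$ is unchanged. With this invariance in hand, the reduction to $y_j=x_j$ is justified, and the rest is routine.
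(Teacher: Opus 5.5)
Your proof is correct and follows essentially the paper's argument. Both rest on the column-permutation equivariance of the Sinkhorn iterates under uniform marginals: the paper writes this as $\pi^l_{XY}=\pi^l_{XX}\mathbf{P}^\top$ and substitutes $\mathbf{Y}=\mathbf{P}\mathbf{X}$ directly, while you use it to reorder $Y$ so that $y_j=x_j$ and then compare identical iterations. Your explicit treatment of $l=\infty$, via uniqueness of the entropic minimizer or passage to the limit, fills in what the paper only asserts with ``similarly.''
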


When $l=\infty$, this admits a gradient-flow interpretation: since $S_\tau(\hat p,\hat q)$ is minimized at $\hat p=\hat q$ (and equals $0$), the associated gradient-flow velocity is zero, hence $V^{\infty}_{\hat q,\hat p}(x^i)=0$.

\begin{remark}
The same ``$p=q\Rightarrow V=0$'' conclusion holds beyond the empirical case. For general probability measures, one can define the Sinkhorn velocity (for $l=\infty$, see \eqref{eq:sinkhorn_drif_general}) and show that the drift vanishes whenever $p=q$; see Appendix~\ref{sec:sinkhorn_model}.
\end{remark}

At this optimum, then we have $x_{k+1}=x_k+V_{q_X,p}^l(x_k)=x_k$. This motivates the following training loss:
\begin{align}
\mathcal{L}^{Sinkhorn}:=\mathcal{L}^\infty:=\mathbb{E}_{\epsilon}[\|f_\theta(\epsilon)-\text{sg}(f_\theta(\epsilon)+V^l_{q_\theta,p}(f(\epsilon)))\|^2]\label{eq:sinkhorn_drift_loss}.
\end{align}
In practice, both $p=p_{\mathrm{data}}$ and $q_\theta:=(f_\theta)_\#p_\epsilon$ are intractable, so we work with empirical measures. Specifically, we draw $n$ i.i.d.\ samples $\epsilon^1,\ldots,\epsilon^n\sim p_\epsilon$ and $x^1,\ldots,x^n\sim p$, and define
$\hat{p}_\epsilon=\sum_{i=1}^n\delta_{\epsilon^i}$ and $\hat{p}=\sum_{i=1}^n\delta_{x^i}$. This yields $\hat{q}_\theta=\sum_{i=1}^n\delta_{f_\theta(\epsilon^i)}$, and the loss \eqref{eq:sinkhorn_drift_loss} admits the Monte Carlo approximation
\begin{align}
\mathcal{L}^{Sinkhorn}\approx\mathbb{E}_{\hat{p}_\epsilon,\hat{p}}\mathbb{E}_{\epsilon\sim \hat{p}_\epsilon}[\|f_\theta(\epsilon)-\text{sg}(f_\theta(\epsilon)+V^l_{\hat{q}_\theta,\hat{p}}(f(\epsilon)))\|^2]\label{eq:sinkhorn_drift_loss_mc}.
\end{align}
Here, the randomness comes from the i.i.d.\ samples $\boldsymbol{\epsilon}:=\{\epsilon^1,\ldots,\epsilon^n\}$ and the empirical distribution $\hat{p}$ obtained by i.i.d.\ sampling from $p_{\mathrm{data}}$.

Algorithm~\ref{alg:split_sinkhorn_drift} summarizes the computation of the Sinkhorn drifting field $V^\infty_{\hat{p},\hat{q}_\theta}$ used in our implementation. Here,
$\operatorname{RowNormalize}(\Pi)[i,j]
=\Pi[i,j]/\sum_k \Pi[i,k]$,
so that each row of $P$ sums to $1$ and defines a barycentric weight vector. Importantly, setting $T=1$ recovers Deng et al. \cite{deng2026generative}. Note that, in practice $Y_{neg}$ can be set to $X$, where for drift self-distances are masked, but Sinkhorn does not require self-distance masking.

\subsection{Discussion of the identity} 
\label{sec:identifiability} 
In this section, we discuss the case $V^\infty_{q,p}=0$. We start from the general setting $V^\infty_{q,p}\equiv 0$ on a compact set $\Omega$ and next consider the empirical approximation. 

\paragraph{Zero Sinkhorn drift in a smooth-density setting.}
We first record a continuous analogue in a regular regime where the feasible class is convex and the vanishing of the Wasserstein gradient implies full first-order stationarity.

\begin{proposition}\label{pro:identity_general_smooth}
Let $\Omega\subset\mathbb R^d$ be a \textbf{connected} compact domain, and let
$p=\rho_p(x)\,dx$ and $q=\rho_q(x)\,dx$ be probability measures on $\Omega$
with densities $\rho_p,\rho_q\in C^1(\Omega)$ and $\rho_q>0$ on $\Omega$.
Assume that $q\mapsto S_\tau(p,q)$ is differentiable at $q$. If $V^\infty_{q,p}\equiv 0$ on $\Omega$, then $q=p$.
\end{proposition}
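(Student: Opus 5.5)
The plan is to identify $V^\infty_{q,p}$ with minus the spatial gradient of the first variation of $\mathcal F(q)=S_\tau(p,q)$, use connectedness of $\Omega$ to turn ``zero gradient'' into ``constant first variation'', and then combine this first-order stationarity with convexity of $S_\tau(p,\cdot)$ to conclude that $q$ is the global minimizer, hence $q=p$ by \eqref{eq:sinkhorn_posdef}.

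\textbf{Step 1: identifying the first variation.} Write the entropic OT dual potentials: $(f_{q,p},g_{q,p})$ for $\mathrm{OT}_\tau(q,p)$ and the symmetric potential $h_q$ for $\mathrm{OT}_\tau(q,q)$. By the standard differentiability result of \cite{feydy2019interpolating}, which we may use since we assume $q\mapsto S_\tau(p,q)$ is differentiable at $q$, the first variation is
\begin{equation*}
\frac{\delta \mathcal F}{\delta q}(q)=f_{q,p}-h_q
\end{equation*}
up to an additive constant. The potentials are smooth, being soft-min expressions of the quadratic cost. Differentiating the Schrödinger equation $f_{q,p}(x)=-\tau\log\int e^{(g_{q,p}(y)-c(x,y))/\tau}\,dp(y)$ with $c(x,y)=\tfrac12\|x-y\|^2$ gives
\begin{equation*}
\nabla f_{q,p}(x)=x-\int y\,d\pi^\infty_{q,p}(y|x).
\end{equation*}
The same computation for $h_q$ gives $\nabla h_q(x)=x-\int y\,d\pi^\infty_{q,q}(y|x)$. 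Subtracting, we get $\nabla\frac{\delta\mathcal F}{\delta q}(q)=-V^\infty_{q,p}$, which matches \eqref{eq:sinkhorn_drif_general}. This is also the content of the Remark following Proposition~\ref{pro:sinkhorn_wgf}.

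\textbf{Step 2: constancy.} If $V^\infty_{q,p}\equiv0$ on $\Omega$, then $\phi:=\frac{\delta\mathcal F}{\delta q}(q)$ has zero gradient on $\Omega$. Since $\Omega$ is connected and $\phi$ is $C^1$, $\phi\equiv \kappa$ is constant. For any probability measure $\mu$ on $\Omega$, the directional derivative of $\mathcal F$ at $q$ in direction $\mu-q$ is therefore
\begin{equation*}
\int\phi\,d(\mu-q)=\kappa(1-1)=0.
\end{equation*}
The set $\mathcal P(\Omega)$ is convex, so the segment $q+t(\mu-q)$ is admissible. Here $\rho_q>0$ ensures the perturbations stay in the regime where the first-variation formula holds; if needed, one can take $\mu$ with $C^1$ density, so that $q_t$ stays in the same smooth positive class for small $t$. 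So $q$ is a first-order stationary point of $\mathcal F$ over the convex set of measures on $\Omega$.

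\textbf{Step 3: convexity and conclusion.} Feydy et al.\ show that $S_\tau(p,\cdot)$ is convex on compact domains. This holds because $\mathrm{OT}_\tau(q,p)$ is jointly convex in $q$ as a supremum of linear functionals of $q$ over the dual, while $-\tfrac12\mathrm{OT}_\tau(q,q)$ is convex; this is the Sinkhorn negentropy from the Remark above. For a convex functional, vanishing directional derivatives in all feasible directions imply that $q$ is a global minimizer. Hence $\mathcal F(q)\le \mathcal F(p)=0$, and since $S_\tau\ge0$, we get $S_\tau(p,q)=0$. Then \eqref{eq:sinkhorn_posdef} gives $q=p$.

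I expect the main obstacle to be Step 3: justifying convexity of $q\mapsto S_\tau(p,q)$ together with the passage from a Gateaux derivative to global optimality. In particular, one must ensure that the directional derivative along $q+t(\mu-q)$ really equals $\int\phi\,d(\mu-q)$ for arbitrary $\mu$, not just smooth perturbations. I would first try to rely on the cited theorems of \cite{feydy2019interpolating}, which cover compact domains and Lipschitz costs. As a fallback, I would use density of smooth $\mu$ together with weak continuity of $S_\tau$ on compact $\Omega$.
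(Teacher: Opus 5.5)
Your proposal is correct and follows essentially the same route as the paper. Both arguments identify $V^\infty_{q,p}=-\nabla\frac{\delta F}{\delta q}$, use connectedness of $\Omega$ to make the first variation constant so that every directional derivative along $r-q$ vanishes, and then use convexity of $S_\tau(p,\cdot)$ to get $S_\tau(p,q)\le S_\tau(p,p)=0$, concluding by definiteness. The differences are minor: you derive the gradient identity directly from the Schr\"odinger equations for the dual potentials, whereas the paper cites its formal gradient-flow proposition. Also, the obstacle you anticipate is milder than you fear, since the argument only needs the single direction $p-q$, and $p$ already has a $C^1$ density.
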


Informally, the argument is as follows. In the continuous setting with smooth, strictly positive densities on a connected compact domain, the Sinkhorn drift is precisely the Wasserstein gradient-flow velocity associated with the functional $q \mapsto S_\tau(p,q)$, namely $V^\infty_{q,p}(x) = -\nabla_x \frac{\delta}{\delta q} S_\tau(p,q)(x)$. Therefore, if $V^\infty_{q,p}\equiv 0$, then the first-variation potential $\frac{\delta}{\delta q} S_\tau(p,q)$ must be spatially constant on $\Omega$ since $\Omega$ is connected. Since $q$ has a smooth, strictly positive density, every sufficiently small smooth zero-mass perturbation remains admissible; thus, the directional derivative of $S_\tau(p,\cdot)$ at $q$ vanishes in every feasible direction; in other words, $q$ is a stationary point of $q \mapsto S_\tau(p,q)$. Finally, the Sinkhorn divergence is definite and, for fixed $p$, strictly convex in $q$, so its unique stationary point and unique minimizer is $q=p$ \cite{feydy2019interpolating}. The appendix makes each of these steps precise.

Regarding the empirical approximation $V^\infty_{\hat{p},\hat{q}}$, we demonstrate the following:

\begin{proposition}
Under some regular conditions, let $\hat{p}$ and $\hat{q}$ be empirical distributions of size $n$, i.i.d.\ sampled from $p$ and $q$, respectively. Then
\begin{align}
\mathbb{E}\left[\|V_{q,p}^\infty - V^\infty_{\hat{q},\hat{p}}\|^2\right]
\lesssim \tau^{1-d/2}\log(n)\,n^{-1/2}.\nonumber
\end{align}
\end{proposition}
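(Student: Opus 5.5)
We will reduce the statement to known sample-complexity results for entropic optimal transport potentials. The route goes through the dual (Schrödinger) potentials. For the quadratic cost on a compact domain, the entropic plan has the Gibbs form
\[
d\pi^\infty_{q,p}(x,y)=\exp\!\big((f(x)+g(y)-c(x,y))/\tau\big)\,dq(x)\,dp(y).
\]
Hence the conditional barycenter is
\[
\int y\,d\pi^\infty_{q,p}(y|x)=x-\nabla f(x),
\]
with an analogous expression for $\pi^\infty_{q,q}$ and its symmetric potential $h$. Consequently $V^\infty_{q,p}(x)=\nabla h(x)-\nabla f(x)$, and likewise $V^\infty_{\hat q,\hat p}=\nabla\hat h-\nabla\hat f$, where the empirical potentials are extended to all $x$ by the Sinkhorn fixed-point (soft-min) formula. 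By the triangle inequality it then suffices to bound
\[
\mathbb{E}\|\nabla f-\nabla\hat f\|^2_{L^2(q)}\quad\text{and}\quad \mathbb{E}\|\nabla h-\nabla\hat h\|^2_{L^2(q)}
\]
separately. The norm in the statement is interpreted in $L^2(q)$, or as a sup norm over the compact support, which is one of the ``regular conditions.''

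\textbf{Key steps.}
\begin{enumerate}[label=(\roman*)]
\item Write the extended potentials as log-sum-exp functionals,
\[
\hat f(x)=-\tau\log\int e^{(\hat g(y)-c(x,y))/\tau}\,d\hat p(y),
\]
and their gradients as conditional means $x-\mathbb{E}_{\hat\pi(\cdot|x)}[y]$. This shows that all potentials and their derivatives are uniformly bounded and smooth on the compact support, with $C^k$ norms scaling like $\tau^{1-k}$ (Genevay et al.; Mena--Niles-Weed).
\item Use the stability of the Sinkhorn fixed-point map, a contraction in Hilbert's projective metric on the compact set, to bound the potential error $\|f-\hat f\|_\infty+\|g-\hat g\|_\infty$ by the supremum of empirical-process deviations $\sup_x|(\hat p-p)(e^{(g-c(x,\cdot))/\tau})|$ and the corresponding term for $\hat q-q$.
\item Differentiate the soft-min representation to get
\[
\nabla f-\nabla\hat f=\mathbb{E}_{\hat\pi(\cdot|x)}[y]-\mathbb{E}_{\pi(\cdot|x)}[y],
\]
which is controlled by the potential error from step (ii) plus an empirical-process term over the function class $\{y\,e^{(g(y)-c(x,y))/\tau}\}_x$.
\end{enumerate}

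\textbf{Controlling the empirical processes.} These terms are bounded by Rademacher complexity or chaining over a class of functions lying in a $C^s$ ball of radius about $\tau^{-s}$ with $s\approx d/2$, following the Genevay et al.\ sample-complexity argument. This gives rate $n^{-1/2}$ with constant $\tau^{1-d/2}$ after optimizing $s$. The $\log n$ factor comes either from a union or covering bound over a net of $x$ values, used to pass to a uniform or $L^2(q)$ statement, or from the evaluation of the variance at the random points. A further subtlety is that $\hat q$ is evaluated at its own samples; we handle this by exploiting the uniform-in-$x$ bound. Squaring and taking expectations, with boundedness used to control the tails, then yields the claim. Precise exponents are treated loosely, consistent with the $\lesssim$ in the statement.

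\textbf{Main obstacle.} The main obstacle is step (ii): obtaining stability of the potentials with a polynomial, rather than exponential ($e^{\mathrm{diam}^2/\tau}$), dependence on $\tau$. We will first attempt the strong-convexity approach of the semi-dual (Rigollet--Stromme), which yields the bound
\[
\|f-\hat f\|^2_{L^2}\lesssim\text{(dual suboptimality)},
\]
and control this by empirical processes. If the constants still degrade exponentially, we will absorb the factor into the ``regular conditions'' by fixing a bounded domain and treating diameter-dependent constants as $O(1)$.
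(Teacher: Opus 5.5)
Your opening reduction matches the paper's. For the quadratic cost, $\nabla f(x)=x-\int y\,d\pi^\infty_{q,p}(y\mid x)$, so your $\nabla h-\nabla f$ is exactly $T_{q,p}-T_{q,q}$. The paper likewise uses the triangle inequality to split the error into a cross-map and a self-map estimation error.

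The gap is in how you bound each piece. The content of the statement is the explicit polynomial factor $\tau^{1-d/2}$, with constants uniform in $\tau\le\tau_0$; otherwise it would simply read $\lesssim\log(n)\,n^{-1/2}$. Neither of your tools delivers this factor:
\begin{itemize}
\item The Birkhoff--Hilbert contraction of the Sinkhorn map has ratio $\tanh(\Delta/4)$, with projective diameter $\Delta\le\mathrm{diam}(\Omega)^2/\tau$ for the quadratic cost. Sup-norm stability of the potentials therefore costs a factor of order $e^{c\,\mathrm{diam}(\Omega)^2/\tau}$. Passing from $\|g-\hat g\|_\infty$ to the soft-min barycenter costs a further $1/\tau$.
\item The Genevay et al.\ argument you invoke gives $e^{\kappa/\tau}(1+\tau^{-\lfloor d/2\rfloor})\,n^{-1/2}$, and it concerns the OT value, not the map. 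Optimizing $s$ does not produce $\tau^{1-d/2}$.
\item Your empirical-process term over $\{y\,e^{(g(y)-c(x,y))/\tau}\}_x$, once divided by the soft-min normalizer, has envelope $\sup_{x,y}\tilde\pi(x,y)$, the density of $\pi^\infty_{q,p}$ with respect to $q\otimes p$. Nothing in your plan bounds this by $\tau^{-d/2}$; under compactness alone it is only of order $e^{O(\mathrm{diam}(\Omega)^2/\tau)}$.
\end{itemize}
The semi-dual strong-concavity alternative is not carried out, and you concede it may also degrade exponentially. Absorbing these exponentials into the \emph{regular conditions} proves a strictly weaker statement.

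The paper never controls potentials in sup norm. For the one-sample error it follows Pooladian--Niles-Weed. It uses the variational identity $\|T^{q,\hat p}_\tau-T^{q,p}_\tau\|^2_{L^2}=4a\sup_h\iint\chi\,d\pi^{q,\hat p}$, with $\chi(x,y)=h(x)^\top(y-T^{q,p}_\tau(x))-a\|h(x)\|^2$. It then evaluates the entropic dual of $\mathrm{OT}_\tau(q,\hat p)$ at the test function $\tau\chi+f_\tau+g_\tau$. This bounds $\iint\chi\,d\pi^{q,\hat p}$ by three kinds of terms:
\begin{itemize}
\item a population exponential-moment term, nonpositive by Hoeffding because $y-T^{q,p}_\tau(x)$ is centered under $\pi^{q,p}(\cdot\mid x)$;
\item fluctuations of the OT value and of the single potential, $\int g_\tau\,d(p-\hat p)$, which carry the $(\tau^{-1}+\tau^{-d/2})\log(n)\,n^{-1/2}$ rate and are the source of the $\log n$;
\item an empirical process over the $d$-parameter family $\{y\mapsto e^{v^\top(y-T^{q,p}_\tau(x))-a\|v\|^2}\,\tilde\pi(x,y)\}_{v\in\mathbb{R}^d}$, handled by chaining.
\end{itemize}
The envelope bound $\sup\tilde\pi\lesssim\tau^{-d/2}$ is exactly where the regularity assumptions enter. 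The density lower bound in (A1) and the Hessian condition (A4), $\nabla^2 g_\tau\preceq(1-c_0)I$, make $y\mapsto\tilde\pi(x,y)p(y)$ strongly log-concave at scale $\sqrt\tau$. Taking the prefactor $4a$ of order $\tau$ then yields $\tau^{1-d/2}$. The two-sample step is imported from Theorem~5 of Pooladian--Niles-Weed.

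To make your potential-based route reach the stated rate, you need substitutes for these two ingredients: a stability estimate polynomial in $1/\tau$, and a $\tau^{-d/2}$ bound on the entropic density.
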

We refer to Appendix \ref{sec:sample_complexity} for the formal statement and proofs.

In practice, the above statistical conclusion implies that when the distribution is  supported in the connected compact domain, and the batch size $n$ is sufficiently large and $V_{\hat{p},\hat{q}}$ is sufficiently small, we have high confidence that $V^\infty_{q,p}=0$, which implies the model converges to the target distribution.

\paragraph{Zero Empirical Sinkhorn Drift implies the Identity.}
We introduce the following statement.

The above identity statement relies on sampling from the continuous true distribution $p,q$ rather than $\hat{p},\hat{q}$. Now we focus on the empirical-measure setting and discuss when the condition $V^{\infty}_{q,p}=0$ forces $p=q$. Throughout, we work under the following assumptions.
\begin{remark}\label{cond:identity}
Let $p = \frac{1}{n} \sum_{j=1}^n \delta_{y_j}$ be a fixed empirical measure supported on $n$ distinct points in $\mathbb{R}^d$. Let $q = \frac{1}{n} \sum_{i=1}^n \delta_{x_i}$ be an empirical measure with particle locations $x_i \in \mathbb{R}^d$.
\begin{itemize}
    \item Non-degeneracy: the points $\{x_i\}_{i=1}^n$ are pairwise distinct ($x_i \neq x_k$ for $i \neq k$).
    \item Stationarity: the particle gradient vanishes, $\nabla_{x_i} F(X) = 0$ for all $i = 1, \dots, n$.
\end{itemize}
\end{remark}

We start with the unregularized case $\tau=0$.

\begin{proposition}\label{pro:identity_tau=0}\label{thm:identity}
Define the Sinkhorn divergence objective $F(X)=S_\tau(p,q)$ and assume $\tau=0$. Under Remark~\ref{cond:identity}, we have $p=q$ (equivalently, $\{x_i\}$ is a permutation of $\{y_j\}$).
\end{proposition}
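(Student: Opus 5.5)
The plan is to reduce the statement to a finite assignment problem and then read off stationarity one optimal permutation at a time. Everything hinges on the fact that at $\tau=0$ the debiasing terms disappear.

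\textbf{Step 1 (self terms vanish).} With $\tau=0$ and $c(x,y)=\tfrac12\|x-y\|^2$, I will check that $\mathrm{OT}_0(q,q)=0$ and $\mathrm{OT}_0(p,p)=0$. The diagonal coupling $\tfrac1n\,\mathrm{Id}$ is admissible and has zero cost, and every cost is nonnegative. Since this holds for every $X$, the self term contributes nothing to $\nabla_{x_i}F$. Consequently
\[
F(X)=\mathrm{OT}_0(p,q_X).
\]
The non-degeneracy assumption in Remark~\ref{cond:identity} guarantees that the optimal self-coupling is exactly the diagonal one. This matches the particle formula \eqref{eq:sinkhorn_flow}, where the self term then reduces to $x_i$.

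\textbf{Step 2 (reduction to permutations).} Both measures are uniform on $n$ atoms, so $\Pi(\tfrac1n\mathbf 1,\tfrac1n\mathbf 1)$ is $\tfrac1n$ times the Birkhoff polytope. By Birkhoff--von Neumann the linear program attains its minimum at a permutation matrix, which gives
\[
F(X)=\min_{\sigma\in\mathfrak S_n} f_\sigma(X),\qquad
f_\sigma(X):=\frac1{2n}\sum_{i=1}^n\|x_i-y_{\sigma(i)}\|^2 .
\]
Thus $F$ is a pointwise minimum of finitely many smooth quadratics, with $\nabla_{x_i}f_\sigma(X)=\tfrac1n\bigl(x_i-y_{\sigma(i)}\bigr)$. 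Let $A(X)$ denote the set of permutations attaining the minimum.

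\textbf{Step 3 (stationarity forces exact matching).} This is the main obstacle, because $F$ need not be differentiable when $|A(X)|>1$. I will therefore interpret the condition $\nabla_{x_i}F(X)=0$ of Remark~\ref{cond:identity} as differentiability of $F$ at $X$ with zero gradient. By Danskin's theorem the one-sided directional derivative is
\[
F'(X;h)=\min_{\sigma\in A(X)}\ell_\sigma(h),\qquad
\ell_\sigma(h):=\tfrac1n\sum_i\langle x_i-y_{\sigma(i)},h_i\rangle .
\]
A zero gradient means $F'(X;h)=0$ and $F'(X;-h)=0$ for every $h$. The first gives $\min_{\sigma\in A(X)}\ell_\sigma(h)=0$. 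The second, after the sign flip, gives $\max_{\sigma\in A(X)}\ell_\sigma(h)=0$. Hence $\ell_\sigma(h)=0$ for every $\sigma\in A(X)$ and every $h$, so $x_i=y_{\sigma(i)}$ for all $i$.

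When $F$ is differentiable this is also just the statement that $V^\infty=0$ in \eqref{eq:sinkhorn_flow} with a permutation plan. If one instead insists on the barycentric form $x_i=n\sum_j\pi_{ij}y_j$ with a non-extremal optimal $\pi$, I would fall back on the same argument. Danskin shows that differentiability forces all optimal permutations to share one gradient, and that gradient is zero.

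\textbf{Step 4 (conclusion).} From $x_i=y_{\sigma(i)}$ with $\sigma$ a bijection, $\{x_i\}$ is a permutation of $\{y_j\}$. Therefore $q=\frac1n\sum_i\delta_{y_{\sigma(i)}}=p$.
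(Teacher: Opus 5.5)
Your proof is correct, and its skeleton is the paper's. At $\tau=0$ the self term drops out, Birkhoff--von Neumann makes the cross coupling a permutation, and stationarity reads $x_i=y_{\sigma(i)}$.

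Where you differ is Step~3. The paper simply asserts that the optimal coupling $\pi^\infty_{q,p}$ \emph{is} a permutation matrix and rewrites stationarity as $n\pi^\infty_{q,p}Y-X=0$. It never addresses configurations where the optimal plan is not unique and $F=\min_\sigma f_\sigma$ has a kink. Your Danskin argument handles exactly that case: a vanishing gradient forces $x_i=y_{\sigma(i)}$ for \emph{every} optimal $\sigma$, so no choice of plan is needed. In fact it only uses $F'(X;\cdot)\ge 0$, since a linear functional that is nonnegative on all of $\mathbb{R}^{nd}$ vanishes, so it covers any local minimizer.

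This care is not cosmetic. In $\mathbb{R}^2$ take $Y=\{(\pm1,\pm1)\}$ and $X=\{(\pm1,0),(0,\pm1)\}$.
\begin{itemize}
\item Each $x_i$ is at squared distance $1$ from two of the $y_j$ and at squared distance $5$ from the other two.
\item Both perfect matchings on the distance-$1$ pairs are optimal.
\item Their average $\pi$, which is the $\tau\to0$ limit of the entropic plans, satisfies $n\pi Y=X$, with all points distinct and $p\neq q$.
\end{itemize}
So the barycentric form of stationarity used in the paper's proof does not by itself give identifiability. What excludes this configuration is that $F$ is not differentiable there, which is precisely the hypothesis your reading of Remark~\ref{cond:identity} makes explicit.

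A smaller difference: since $\mathrm{OT}_0(q_X,q_X)\equiv 0$, your argument never uses the non-degeneracy of $X$. The paper invokes it only to identify the self plan with $\tfrac1n I_n$.
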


For the regularized regime $0<\tau<\infty$, we can prove identifiability when $n=2$.
\begin{proposition}
\label{pro:identity_n=2}
Suppose $n = 2$, with $x_1 \neq x_2$ and $y_1 \neq y_2$.
If $V^{\infty}_{q,p} = 0$, then $\{y_1, y_2\} = \{x_1, x_2\}$ as sets, hence $p_X = p_Y$.
\end{proposition}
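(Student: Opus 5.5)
The plan is to exploit the fact that for $n=2$ with uniform marginals every coupling is a one-parameter family. Then the two zero-drift equations reduce to a scalar fixed-point equation that can be solved explicitly.

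\textbf{Step 1 (closed form for $2\times 2$ couplings).} Any $\pi\in\Pi(\tfrac12\mathbf 1,\tfrac12\mathbf 1)$ has the form $\pi=\tfrac12\begin{pmatrix} s & 1-s\\ 1-s & s\end{pmatrix}$ with $s\in[0,1]$, so $n\pi$ is this matrix without the factor $\tfrac12$. The entropic optimum is a diagonal scaling $\mathrm{diag}(u)K\mathrm{diag}(v)$ of the Gibbs kernel. Diagonal scalings preserve the cross-ratio $\pi_{11}\pi_{22}/(\pi_{12}\pi_{21})$, so
\[
\Big(\tfrac{s}{1-s}\Big)^2=\frac{K_{11}K_{22}}{K_{12}K_{21}}=\exp\!\Big(\tfrac{C_{12}+C_{21}-C_{11}-C_{22}}{\tau}\Big).
\]
For $C(x,y)=\tfrac12\|x-y\|^2$ the exponent equals $\langle x_1-x_2,\,y_1-y_2\rangle/\tau$. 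Writing $u=x_1-x_2$ and $w=y_1-y_2$, this gives
\[
2s_{XY}-1=\tanh\!\big(\langle u,w\rangle/(4\tau)\big),\qquad 2s_{XX}-1=\tanh\!\big(\|u\|^2/(4\tau)\big).
\]
The second value is strictly positive because $x_1\neq x_2$.

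\textbf{Step 2 (reduce the zero-drift equations).} By Proposition~\ref{pro:sinkhorn_wgf}, $V^\infty_{q,p}(x_1)=V^\infty_{q,p}(x_2)=0$ reads
\[
s y_1+(1-s)y_2=s_Xx_1+(1-s_X)x_2,\qquad (1-s)y_1+sy_2=(1-s_X)x_1+s_Xx_2,
\]
where $s=s_{XY}$ and $s_X=s_{XX}$. Adding the two equations gives $y_1+y_2=x_1+x_2$, so the midpoints agree. Subtracting them gives $(2s-1)w=(2s_X-1)u$.

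Since $2s_X-1>0$ and $u\neq0$, the right-hand side is nonzero. Hence $2s-1\neq 0$, and $w=\lambda u$ for some scalar $\lambda\neq 0$.

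\textbf{Step 3 (solve the scalar equation).} Substitute $w=\lambda u$ and set $t=\|u\|^2/(4\tau)>0$. Step 2 then becomes $g(\lambda):=\lambda\tanh(\lambda t)=\tanh(t)=g(1)$.

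The function $g$ is even, $g(0)=0$, and $g$ is strictly increasing on $(0,\infty)$, since it is a product of two positive increasing factors. Therefore $\lambda=\pm1$.

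If $\lambda=1$, then $w=u$. Combined with equal midpoints this gives $y_1=x_1$ and $y_2=x_2$. If $\lambda=-1$, the same argument gives $y_1=x_2$ and $y_2=x_1$. In both cases $\{y_1,y_2\}=\{x_1,x_2\}$, hence $p_X=p_Y$.

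\textbf{Main obstacle.} The only delicate point is Step 1: justifying the cross-ratio invariance and correctly identifying the exponent $\langle u,w\rangle$ in the cost expansion. Everything after that is elementary. The assumption $y_1\neq y_2$ is not actually needed beyond consistency, because $w\neq0$ already follows from Step 2.
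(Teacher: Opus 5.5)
Your proof is correct. It shares the paper's skeleton: equal means, closed-form $\tanh$ expressions for the two $2\times 2$ entropic plans, collinearity of $y_1-y_2$ with $x_1-x_2$, then strict monotonicity of $t\mapsto t\tanh(ct)$. The difference is the key fact you use to compute the coupling parameter.

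The paper translates so that $\bar x=\bar y=0$ (Lemma~\ref{lem:mean}), which makes the cost matrix invariant under the simultaneous swap of rows and columns. It then invokes uniqueness of the entropic plan (Lemma~\ref{lem:symmetric_scaling}) to get equal scalings $a_1=a_2$, $b_1=b_2$, whence $\alpha=K_{11}/(K_{11}+K_{12})$. You instead read the parameter off the cross-ratio $\pi_{11}\pi_{22}/(\pi_{12}\pi_{21})$, which is invariant under diagonal scaling. This needs neither centering nor a uniqueness argument, and it applies to any configuration. It also shows at once that the cross plan depends only on $\langle x_1-x_2,\,y_1-y_2\rangle$.

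Parametrizing $w=\lambda u$ and using that $g(\lambda)=\lambda\tanh(\lambda t)$ is even then handles the paper's two cases $\hat b=\pm\hat a$ simultaneously. Your constants are also consistent throughout. The paper's intermediate exponents are off by constant factors, which is harmless there only because the same factor ends up on both sides of each final scalar equation.

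What the paper's route buys is a structural statement: the entropic plan inherits the symmetries of the cost. Your cross-ratio computation instead relies on the $2\times 2$ transport polytope being one-dimensional. For this particular statement, yours is shorter and more transparent.

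Your remark that $y_1\neq y_2$ is not needed is also right. In your argument $w\neq 0$ follows from $(2s-1)w=(2s_X-1)u\neq 0$, whereas the paper's polar parametrization of $y_1,y_2$ about their mean requires it.
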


\begin{remark}
In the same setting, the condition $V^{\mathrm{drift}}_{q,p}(x_i)=0$ for all $i$ does not imply $p=q$; see Appendix~\ref{sec:drift_model} for a counterexample and discussion. The authors of \cite{deng2026generative} also note (Section~6 and Appendix~C.1) that the converse implication $V_{q,p}\equiv 0 \Rightarrow p=q$ is \textbf{not guaranteed in general} for their construction.
Their identifiability argument (Appendix~C.1) further relies on a \emph{non-degeneracy assumption}: the bilinear interaction vectors $\{U_{ij}\}_{i<j}$ arising from a basis expansion of $p$ and $q$ are assumed to be linearly independent. This generic condition can fail for specific choices of kernel, test points, and particle configurations.
\end{remark}

For general $n\ge 3$, whether $V^{\infty}_{q,p}=0$ implies $p=q$ remains open. For $n\ge 3$, we show that $V^\infty(X)=0$ on $\text{supp}(q_X)$ implies that $q_X$ is a stationary point of the functional $q_X \mapsto S_\tau(p,q_X)$ when restricted to the submanifold of empirical measures supported on $n$ points; all proofs are deferred to Appendix~\ref{sec:identity_2}.






\section{Computational tradeoff}

Sinkhorn scaling makes each \emph{training} step more expensive than Drift-style one- (or partial two-sided) normalization, since it requires a small number of iterative updates rather than a single softmax pass. Importantly, this additional cost is confined to training: at \emph{inference} time, our method has the \emph{same} computational cost as drifting approaches, because generation still amounts to the same learned network evaluation.

By accepting slightly slower training, Sinkhorn offers two key benefits: improved stability with respect to the temperature parameter and stronger theoretical rigor (e.g., principled two-sided balancing and clearer identifiability/structure). In turn, this reduces reliance on ad hoc engineering tricks often needed to make drifting methods work in practice, such as masking self-distances, averaging the drift across a set of temperatures, or related heuristics.

\section{Numerical Studies}

\subsection{Drift Behavior for Varying $\tau$}
\label{sec:drift_tau}
We perform an $\tau$-sweep over $\{0.01, 0.1, 1.0, 10.0\}$ to study how the kernel temperature affects the drifting trajectories under three normalization schemes: (i)~one-sided normalization (softmax over $y$), (ii)~two-sided normalization ($\sqrt{A_{\mathrm{row}} \odot A_{\mathrm{col}}}$), and (iii)~full
Sinkhorn (100 iterations), using $N{=}100$ source and target samples with step size $\eta{=}0.1$ and $500$ Euler steps. The results for $\tau\in\{0.1,1.0\}$ are shown in Figure \ref{fig:drift_trajectories_eps_sweep}, while the full results are moved to Appendix~\ref{app:drift_tau} due to space constraints. At small $\tau$ (e.g., $= 0.01$), the kernel $k(x,y) = \exp(-\|x - y\|/\tau)$ is sharply peaked, so each source particle couples almost exclusively to its nearest target (for positive and negative) causing the drift to collapse, while the Sinkhorn drift in fact gets closer to the 2-Wasserstein gradient flow (i.e., flow in straight lines).  

To better see why, consider the negative (repulsion) term, $V^{-}_q$. Since the
negative samples $y^{-}$ are drawn from the same batch as $x$
(i.e., $y^{-} \in \{x_1, \ldots, x_N\}$), the self-distance
$\|x_i - x_i\| = 0$ yields a kernel value $k(x_i, x_i) =
\exp(0) = 1$, while for any $j \neq i$ with $\|x_i - x_j\| >
0$, we have $k(x_i, x_j) = \exp(-\|x_i - x_j\|/\tau)
\to 0$ as $\tau \to 0$. After normalization (e.g.,
softmax), virtually all weight concentrates on the
self-interaction, but this term contributes $(x_i - x_i) = 0$
to the mean-shift vector. Hence $V^{-}_q(x_i) \approx
\mathbf{0}$, and the repulsion effectively vanishes. Without
repulsion, the drift reduces to attraction-only, which
as shown by \cite{deng2026generative} could be
catastrophic. To avoid this degeneracy,
\cite{deng2026generative} masks out the self-interaction by
assigning a large surrogate self-distance (e.g., $10^6$) in the
negative logits (cf.\ \texttt{dist\_neg += eye(N) * 1e6} in
their Algorithm~2), ensuring that the softmax weight on the
diagonal is negligible and the repulsion term remains
well-defined even at low temperatures. While practical, this self masking completely changes the drift dynamics as shown in Figure \ref{fig:drift_trajectories_eps_sweep} (columns denoted with `mask').

\subsection{Toy Experiments}
\label{toy_exp}
\paragraph{Setup.}
To evaluate the three normalization schemes in a generative learning setting,
we train a two-layer MLP generator $f_\theta : \mathbb{R}^2 \to \mathbb{R}^2$
mapping Gaussian noise to 2D target distributions using the respective drifting
losses \eqref{eq:drift_loss} and~\eqref{eq:sinkhorn_drift_loss}.
We compare one-sided, two-sided, and Sinkhorn normalization across
$\tau \in \{0.01, 0.05, 0.1\}$ on two target distributions:
\textbf{8-Gaussians} (a mixture of 8 isotropic Gaussians arranged symmetrically,
posing a multimodal coverage challenge) and \textbf{Checkerboard} (a multi-connected planar distribution).
Each run uses $N=500$ samples per mini-batch and trains for $5{,}000$ iterations
with Adam (lr\,$=10^{-3}$).
We report the squared 2-Wasserstein distance $W_2^2$ between generated and target samples, evaluated every 100 steps.
\paragraph{Results.}
Figure~\ref{fig:toy_generative} shows the generated distribution at the final
iteration and the $W_2^2$ convergence curves.
\textit{Mode coverage:}
At $\tau=0.1$, one-sided normalization collapses to a single mode on 8-Gaussians
($W_2^2\approx 7$--$8$), while Sinkhorn covers all 8 modes and achieves
$W_2^2<1$.
Two-sided normalization partially improves over one-sided but remains unstable.
\textit{Temperature sensitivity:}
As $\tau$ decreases, one-sided and two-sided methods become increasingly prone
to mode collapse on 8-Gaussians, whereas Sinkhorn remains comparatively stable
across all three values of $\tau$.
This is consistent with the identifiability guarantee of
Theorem~\ref{thm:identity}: at small $\tau$ the Gibbs kernel concentrates mass
on nearest neighbors, and without global marginal balance the coupling
degenerates; Sinkhorn's two-sided constraints prevent this collapse.
On Checkerboard, all three methods achieve similar $W_2^2$, indicating
that the advantage of Sinkhorn is most pronounced for multimodal targets with
isolated modes.
Taken together, these results confirm that the theoretical precision of Sinkhorn
normalization translates to measurable empirical gains in the generative training
setting.
Additional results on a broader set of distributions and with the Laplacian
kernel are provided in Appendix~\ref{app:toy}.

\begin{figure}[t]
\centering
\includegraphics[width=\linewidth]{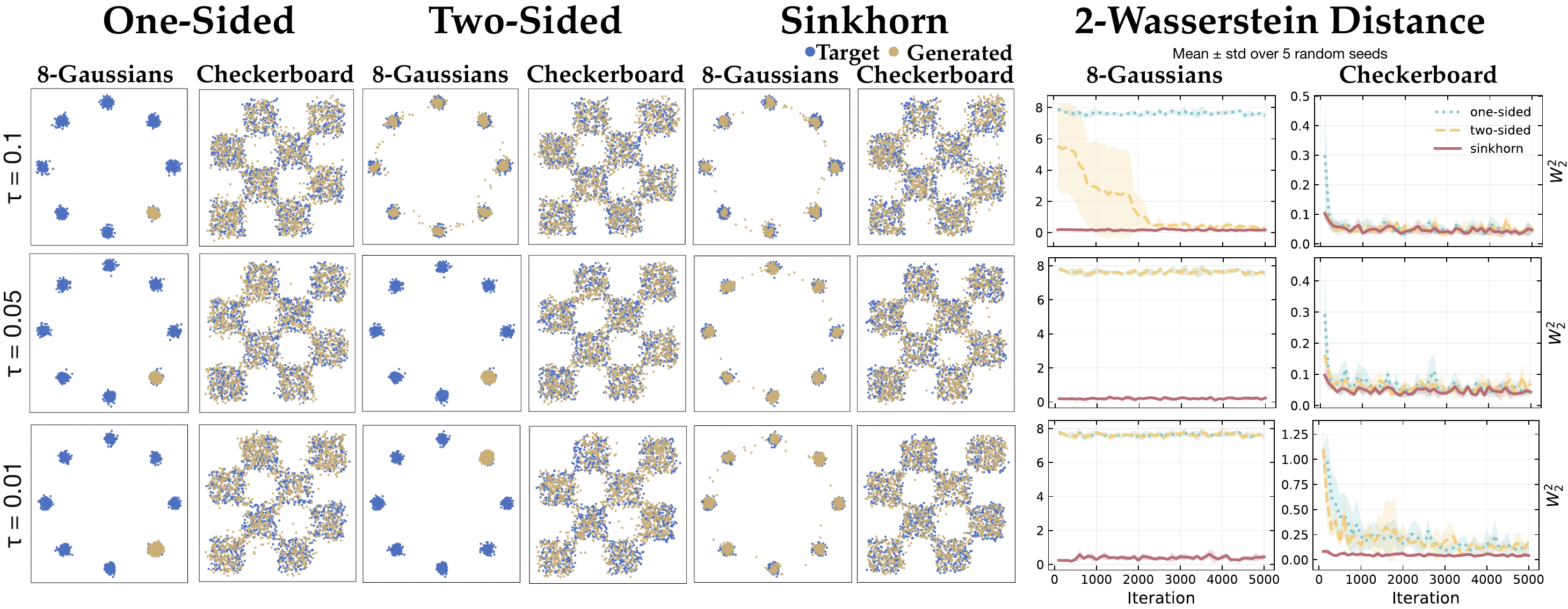}
\caption{Generative model training on 2D distributions across $\tau\in\{0.01,0.05,0.1\}$
and three normalization schemes (one-sided, two-sided, Sinkhorn) with Gaussian kernel.
\textbf{Left six columns}: final generated samples (orange) vs.\ target (blue).
\textbf{Right two columns}: $W_2^2$ convergence curves over $5{,}000$ iterations.
Sinkhorn consistently achieves lower $W_2^2$ and better mode coverage,
especially at small $\tau$.}
\label{fig:toy_generative}
\end{figure}

\subsection{MNIST Experiments}
\label{sec:mnist}
\paragraph{Setup.}
We evaluate class-conditional generation quality on MNIST across a temperature sweep
$\tau \in \{0.005, 0.01, 0.02, 0.025, 0.03, 0.04, 0.05, 0.1\}$,
using the Gaussian kernel $k(x,y)=\exp(-\|x-y\|^2/\tau)$.
We train standard drifting (Eq.~\eqref{eq:drift_loss}) with the geometric-mean
two-sided normalization used in the Drift implementation, and Sinkhorn-drifting
(Eq.~\eqref{eq:sinkhorn_drift_loss}) on MNIST in a 6-dimensional latent space
obtained from a convolutional autoencoder. Each generator is a 3-layer MLP
trained for $5{,}000$ steps with Adam.
We report the average per-class EMD (squared 2-Wasserstein distance $W_2^2$) in latent space and class-conditional generation accuracy.

\paragraph{Results.}
Table~\ref{tab:mnist_gaussian} and Figure~\ref{fig:mnist_samples} summarize
the results.
Sinkhorn is stable across the full temperature range: EMD
stays within $6.88$--$8.57$ and class accuracy remains $\geq\!99.97\%$ for all
$\tau \in [0.005, 0.1]$.
\textit{Baseline collapses at small $\tau$}: for every $\tau \leq 0.05$, the baseline generator degenerates to a single mode (EMD $\approx 73.2$--$79.7$) and class accuracy drops to random chance ($\approx\!10\%$), indicating complete loss of class conditioning. Only at $\tau=0.1$ does the baseline recover (EMD $=5.63$, Acc $=95.0\%$), while Sinkhorn already achieves stable performance from $\tau=0.005$. This collapse is consistent with the identifiability gap analyzed in
Section~\ref{sec:identifiability}: at small $\tau$, the one-sided row-normalized
coupling concentrates all weight on the nearest neighbor, collapsing the
repulsive term and causing mode collapse.
Sinkhorn's doubly-stochastic marginal constraints preclude this degenerate
solution.
Additional results with the Laplacian kernel are in
Appendix~\ref{app:mnist_laplacian}.

\begin{table}[t]
\centering
\caption{MNIST $\tau$-sweep with Gaussian kernel.
$^\dagger$Acc\,$\approx\!10\%$ indicates mode collapse.}
\label{tab:mnist_gaussian}
\scriptsize
\setlength{\tabcolsep}{1pt}
\renewcommand{\arraystretch}{1.03}
\begin{tabular*}{\linewidth}{@{\extracolsep{\fill}}c c c c c @ {\hspace{1pt}\vrule width 0.2pt\hspace{1pt}} c c c c c@{}}
\toprule
& \multicolumn{2}{c}{EMD $\downarrow$} & \multicolumn{2}{c}{Accuracy $\uparrow$}
& & \multicolumn{2}{c}{EMD $\downarrow$} & \multicolumn{2}{c}{Accuracy $\uparrow$} \\
\cmidrule(lr){2-3}\cmidrule(lr){4-5}\cmidrule(lr){7-8}\cmidrule(lr){9-10}
$\tau$ & {Baseline} & {Sinkhorn} & {Baseline} & {Sinkhorn}
& $\tau$ & {Baseline} & {Sinkhorn} & {Baseline} & {Sinkhorn} \\
\midrule
0.005 & 73.21 & \bfseries 8.57 & $9.97\%^\dagger$ & \bfseries 100.00\%
& 0.030 & 73.22 & \bfseries 6.96 & $9.95\%^\dagger$ & \bfseries 100.00\% \\
0.010 & 73.21 & \bfseries 7.71 & $9.99\%^\dagger$ & \bfseries 100.00\%
& 0.040 & 77.19 & \bfseries 6.91 & $10.00\%^\dagger$ & \bfseries 100.00\% \\
0.020 & 73.21 & \bfseries 7.12 & $9.98\%^\dagger$ & \bfseries 100.00\%
& 0.050 & 79.73 & \bfseries 6.88 & $9.96\%^\dagger$ & \bfseries 100.00\% \\
0.025 & 73.21 & \bfseries 7.01 & $9.99\%^\dagger$ & \bfseries 100.00\%
& 0.100 & \bfseries 5.63 & 6.91 & 95.00\% & \bfseries 100.00\% \\
\bottomrule
\end{tabular*}
\end{table}

\begin{figure*}[t]
\centering
\includegraphics[width=\textwidth]{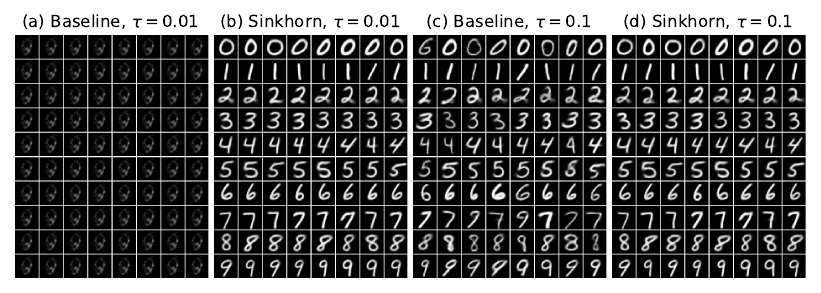}
\caption{Generated MNIST samples (Gaussian kernel).
Each panel shows 10 classes $\times$ 8 samples.
\textbf{(a)} Baseline at $\tau=0.01$ collapses to a single degenerate mode;
class accuracy is ${\approx}10\%$ (random chance).
\textbf{(b)} Sinkhorn at $\tau=0.01$ correctly generates all ten classes
with 100\% class accuracy.
\textbf{(c,d)} At $\tau=0.1$ both methods produce recognizable digits,
with Sinkhorn remaining sharper and more consistent.}
\label{fig:mnist_samples}
\end{figure*}

\begin{table*}[h!]
\centering
\scriptsize
\caption{
  Class-conditional face generation on FFHQ with ALAE across temperatures $\tau$.
  We report EMD$\downarrow$ and FID$\downarrow$ for Baseline and Sinkhorn. Average is the mean value of all 6 classes for each $\tau$.
}
\label{tab:alae}
\setlength{\tabcolsep}{2.8pt}
\renewcommand{\arraystretch}{1.08}
\resizebox{\textwidth}{!}{%
\begin{tabular}{c l cc cc @{\hspace{6pt}} l cc cc}
\toprule
\multirow{3}{*}{$\tau$} & \multicolumn{5}{c}{\textbf{Male}} & \multicolumn{5}{c}{\textbf{Female}} \\
\cmidrule(lr){2-6}\cmidrule(lr){7-11}
& \multirow{2}{*}{Class} & \multicolumn{2}{c}{EMD$\downarrow$} & \multicolumn{2}{c}{FID$\downarrow$}
& \multirow{2}{*}{Class} & \multicolumn{2}{c}{EMD$\downarrow$} & \multicolumn{2}{c}{FID$\downarrow$} \\
& & Baseline & Sinkhorn & Baseline & Sinkhorn
& & Baseline & Sinkhorn & Baseline & Sinkhorn \\
\midrule
\multirow{4}{*}{0.1}
& Adult    & 441.1 & \textbf{145.0} & 198.7 & \textbf{33.8}
& Adult    & 414.0 & \textbf{146.5} & 143.7 & \textbf{39.7} \\
& Children & 489.8 & \textbf{157.6} & 228.5 & \textbf{40.8}
& Children & 465.8 & \textbf{149.1} & 185.6 & \textbf{38.6} \\
& Old      & 508.7 & \textbf{138.5} & 219.1 & \textbf{31.4}
& Old      & 400.6 & \textbf{130.0} & 150.5 & \textbf{38.3} \\
\cmidrule(lr){2-11}
& \textbf{Average} & 453.3 & \textbf{144.4} & 187.7 & \textbf{37.1} & & & & & \\
\midrule
\multirow{4}{*}{1.0}
& Adult    & 325.8 & \textbf{142.2} & 136.1 & \textbf{33.5}
& Adult    & 362.1 & \textbf{142.1} & 152.3 & \textbf{35.7} \\
& Children & 426.5 & \textbf{148.7} & 206.4 & \textbf{35.2}
& Children & 387.8 & \textbf{142.6} & 180.4 & \textbf{34.7} \\
& Old      & 380.8 & \textbf{132.1} & 102.0 & \textbf{30.0}
& Old      & 340.2 & \textbf{118.1} & 100.4 & \textbf{33.1} \\
\cmidrule(lr){2-11}
& \textbf{Average} & 370.5 & \textbf{137.6} & 146.3 & \textbf{33.7} & & & & & \\
\midrule
\multirow{4}{*}{10.0}
& Adult    & 159.5 & \textbf{143.4} & 40.5 & \textbf{35.3}
& Adult    & 163.3 & \textbf{142.4} & 54.1 & \textbf{36.2} \\
& Children & 174.6 & \textbf{149.1} & 50.5 & \textbf{34.8}
& Children & 162.7 & \textbf{143.4} & 49.1 & \textbf{34.3} \\
& Old      & 148.9 & \textbf{133.6} & 38.9 & \textbf{30.3}
& Old      & 129.3 & \textbf{118.9} & 38.8 & \textbf{32.3} \\
\cmidrule(lr){2-11}
& \textbf{Average} & 156.4 & \textbf{138.5} & 45.3 & \textbf{33.9} & & & & & \\
\bottomrule
\end{tabular}%
}
\end{table*}

\subsection{Image Generation Experiments}
\label{sec:image_gen}

\paragraph{Setup.}
We evaluate class-conditional image generation on FFHQ~\cite{karras2019style} with a pretrained
Adversarial Latent Autoencoder (ALAE)~\cite{pidhorskyi2020adversarial}.
In our pipeline, we train and evaluate in the ALAE latent space ($\mathbb{R}^{512}$); image-space
metrics are computed after decoding with the same frozen ALAE decoder at $1024\times1024$.
We consider six demographic classes:
\{\textit{Male-Children}, \textit{Male-Adult}, \textit{Male-Old},
\textit{Female-Children}, \textit{Female-Adult}, \textit{Female-Old}\},
and train a conditional latent generator
$f_\theta:\mathbb{R}^{512+64}\rightarrow\mathbb{R}^{512}$
(3-layer MLP, hidden width $1024$) with learned class embeddings.
We compare
(1) the baseline drifting loss~\eqref{eq:drift_loss} with the geometric-mean two-sided normalization used in the Drift implementation,
and
(2) Sinkhorn drifting~\eqref{eq:sinkhorn_drift_loss},
across $\tau\in\{0.1,\,1.0,\,10.0\}$.
For each checkpoint, we sample $1{,}000$ generated latents per class and match them with $1{,}000$ real latents per class from the same class-specific real latent pool.
We report latent EMD and image FID~\cite{heusel2017gans}
computed between decoded generated image and decoded real images.
Full per-class results are reported in Table~\ref{tab:alae}.
Representative qualitative comparisons are shown in Figure~\ref{fig:alae_qualitative},
with additional panels in Appendix~\ref{sec:appendix_alae}.

\paragraph{Results.}
Table~\ref{tab:alae} shows that Sinkhorn drifting consistently outperforms the baseline across all temperatures and all six classes in both EMD (squared 2-Wasserstein distance $W_2^2$) and FID.
At $\tau{=}0.1$, the gap is largest: mean FID drops from $187.7$ to $37.1$ (about $5.1\times$), and mean EMD drops from $453.3$ to $144.4$ (about $3.1\times$).
At $\tau{=}1.0$, Sinkhorn remains clearly better, with mean FID $33.7$ vs.\ $146.3$ (about $4.3\times$) and mean EMD $137.6$ vs.\ $370.5$ (about $2.7\times$).
At $\tau{=}10.0$, both methods improve and the gap narrows, but Sinkhorn is still better (mean FID $33.9$ vs.\ $45.3$, mean EMD $138.5$ vs.\ $156.4$).
Qualitatively, Figure~\ref{fig:alae_qualitative} shows randomly picked samples for all-classes at $\tau{=}1.0$ and $\tau{=}10.0$; each row is one class, with baseline on the left and Sinkhorn on the right.

The $\tau{=}0.1$ qualitative panel (where the difference is most pronounced) is provided in Appendix~\ref{sec:appendix_alae}.

\section*{Conclusion}
We established a theoretical connection between drifting generative dynamics and the Wasserstein gradient flow of the Sinkhorn divergence. At the particle level, both share the same cross–minus–self barycentric structure, differing only in how the coupling matrices are constructed: drifting relies on one-sided kernel normalization, while Sinkhorn divergence uses doubly-stochastic couplings obtained through entropic optimal transport. This view shows that drifting can be interpreted as a single-iteration approximation of the Sinkhorn gradient flow and explains the empirical benefits of partial two-sided normalization. Importantly, the Sinkhorn formulation resolves the identifiability gap of drifting models by ensuring that vanishing drift implies equality of the model and target distributions. Experiments on synthetic distributions, MNIST, and FFHQ demonstrate improved stability and mode coverage, particularly at low temperatures, while preserving the one-step inference procedure of drifting models.

\paragraph{Limitations.} We acknowledge that our experiments are smaller in scale than those of Deng et al.~\cite{deng2026generative}, largely due to the heavy compute required by benchmarks such as ImageNet-1K generation. Nonetheless, our theoretical results and extensive smaller-scale experiments consistently demonstrate improved performance without the engineering heuristics often needed by drifting methods.

\section{Acknowledgments}
This work was partially supported by NSF CAREER Award No. 2339898. The authors also gratefully acknowledge the computational resources provided through the NVIDIA Academic Grant Program and Lambda cloud credits. Finally, YB thanks Dr. Guang Lin of Purdue University for his helpful discussions and support.
\begin{figure}[t!]
  \centering
  \includegraphics[width=\linewidth]{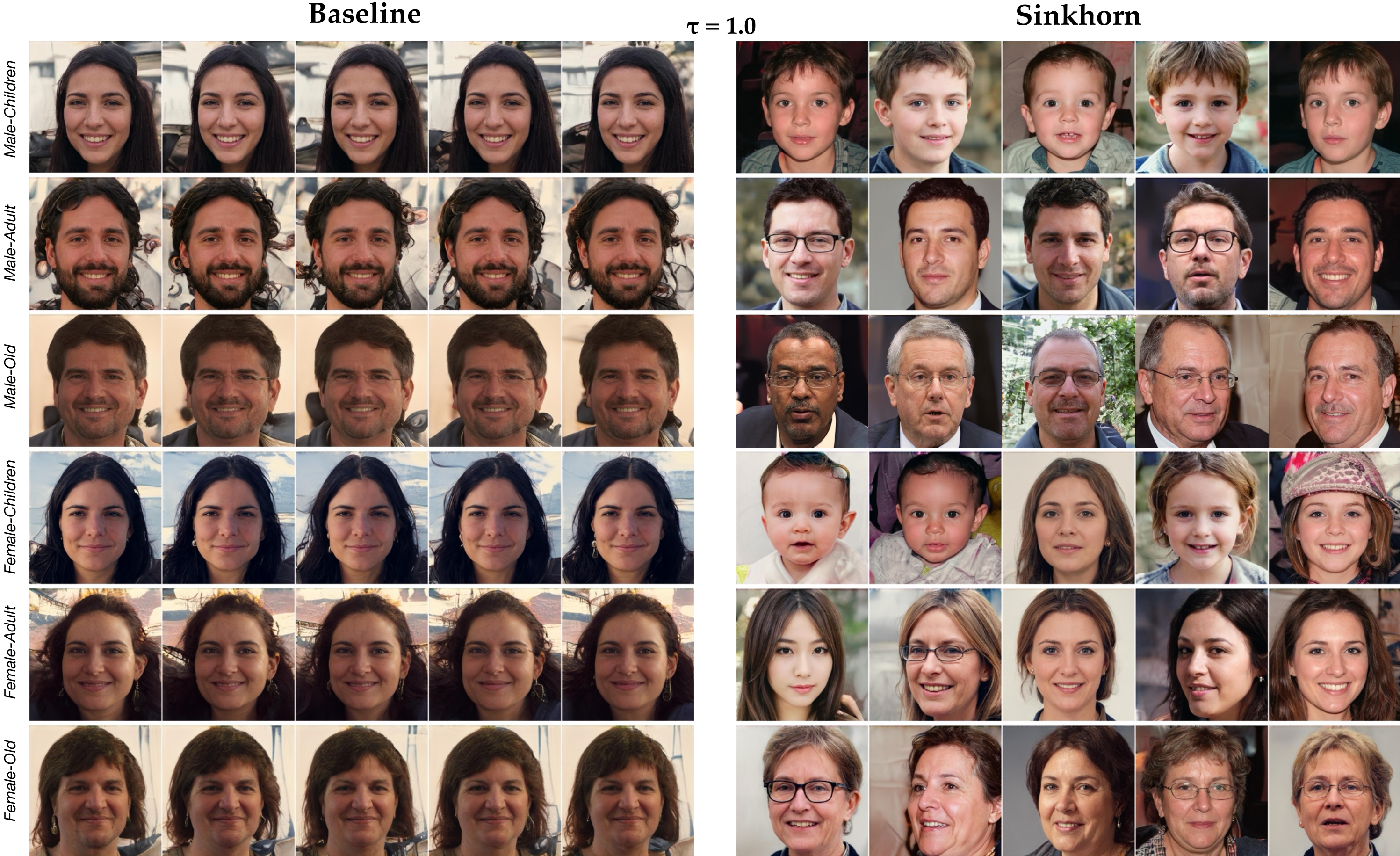}
  \vspace{3pt}
  \includegraphics[width=\linewidth]{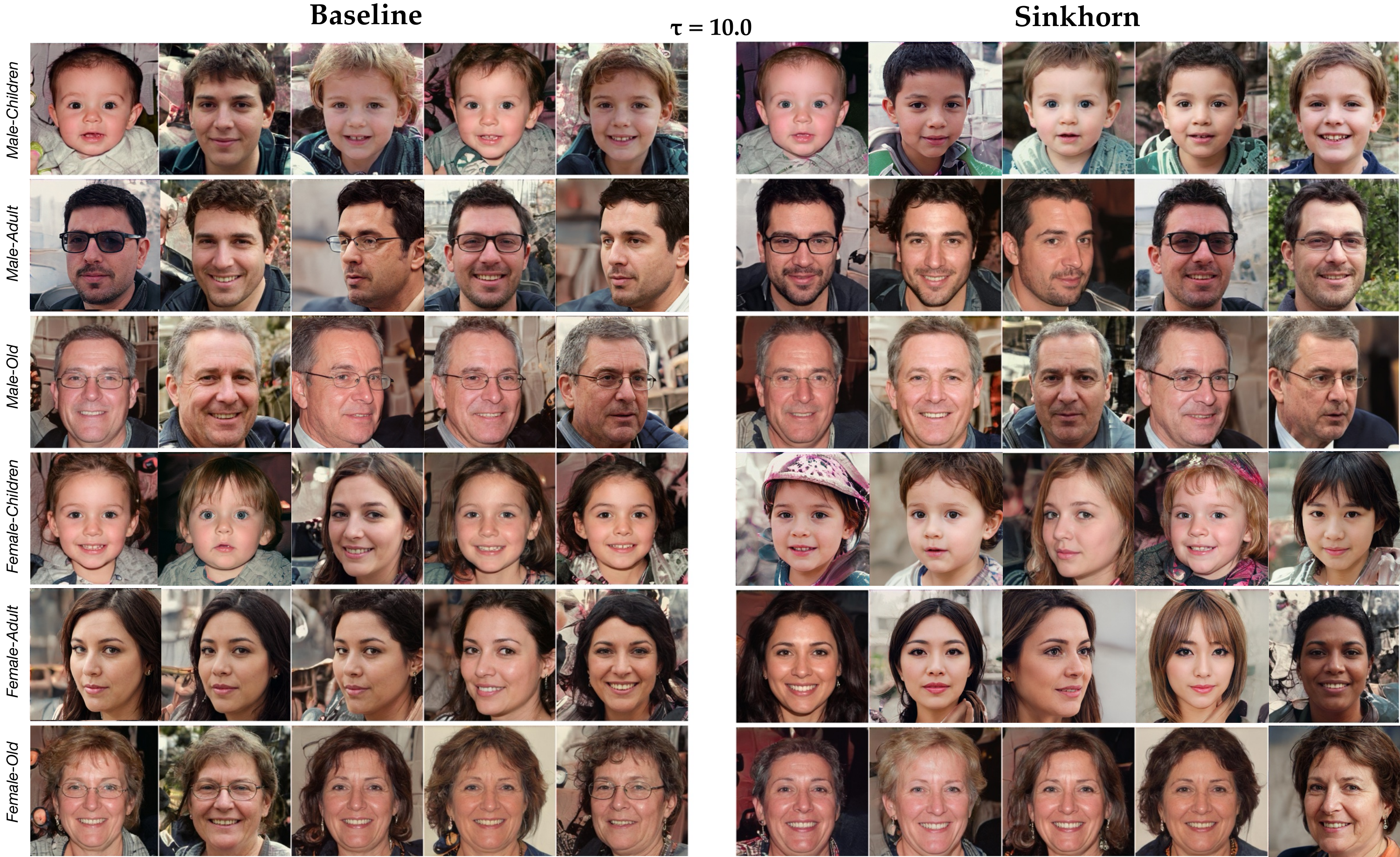}
  \caption{
    Qualitative comparison of class-conditional FFHQ generation at $\tau{=}1.0$ (top) and $\tau{=}10.0$ (bottom).
    In each panel, each row corresponds to one class; \textbf{Baseline is on the left} and \textbf{Sinkhorn is on the right}.
    The corresponding low-temperature qualitative panel ($\tau{=}0.1$) is shown in Figure~\ref{fig:alae_appendix_tau01} of Appendix~\ref{sec:appendix_alae}.
  }
  \label{fig:alae_qualitative}
\end{figure}
\FloatBarrier

\clearpage

\bibliographystyle{plainnat}
\bibliography{refs}

\clearpage
\appendix
\section{Understanding Drift Generative Model}\label{sec:drift_model}

\subsection{Empirical form and ``attention'' view of Drift Field.}
Let $p=\frac1n\sum_{i=1}^n \delta_{y_i}$ and $q=\frac1n\sum_{j=1}^n \delta_{x_j}$ be empirical measures with samples
$Y=\{y_i\}_{i=1}^n$ and $X=\{x_j\}_{j=1}^n$.
Define Gibbs affinities (e.g.\ with a cost $C$ and temperature $\tau>0$)
\begin{equation}
K_{XY}[j,i] := \exp\big(-C(x_j,y_i)/\tau\big),
\qquad
K_{XX}[j,i] := \exp\big(-C(x_j,x_i)/\tau\big).\nonumber
\end{equation}
Row-normalization yields row-stochastic matrices

\begin{equation}
P^{\mathrm{drift}}_{XY}[j,i]:= \frac{K_{XY}[j,i]}{\sum_{\ell=1}^n K_{XY}[j,\ell]},
\qquad
P^{\mathrm{drift}}_{XX}[j,i] :=\frac{K_{XX}[j,i]}{\sum_{\ell=1}^n K_{XX}[j,\ell]},
\label{eq:drift_row_norm}
\end{equation}
which can be viewed as distance-based attention weights (softmin over costs).
Then the Drift field evaluated on particles is
\begin{align}
V_{\mathrm{drift}}(x_j)
&=
\sum_{i=1}^n P^{\mathrm{drift}}_{XY}[j,i](y_i-x_j)
-
\sum_{k=1}^n P^{\mathrm{drift}}_{XX}[j,i](x_i-x_j)\nonumber \\ 
&=\sum_{i=1}^n P^{\mathrm{drift}}_{XY}[j,i]y_i-
\sum_{i=1}^n P^{\mathrm{drift}}_{XX}[j,i]x_i
\label{eq:drift_particles_vfield}
\end{align}
where the second equation follows from the fact $\sum_{i}P_{XY}^{\text{drift}}[j,i]=\sum_{i}P_{XX}^{\text{drift}}[j,i]=1,\forall j$. 
Equivalently, in matrix form (stack $X,Y\in\R^{N\times d}$ row-wise),
\begin{equation}
V_{\mathrm{drift}}(X)
=
P^{\mathrm{drift}}_{XY}Y - P^{\mathrm{drift}}_{XX}X.
\label{eq:drift_matrix_form}
\end{equation}
Thus Drift implements a \emph{cross minus self} barycentric projection, but with \emph{one-sided (row) normalization}.

\subsection{Identifiability of Drift Generative Model}
In this section, we provide a counterexample showing that for \cite{deng2026generative}, the condition
$V_{q,p}(x)=0$ on the support of $p$ does not imply $p=q$.

Let $\tau=1$ and $k(x,y)=\exp(-(x-y)^2)$. Consider the empirical measures
$$
p=\tfrac12\delta_0+\tfrac12\delta_1,\qquad
q=\tfrac12\delta_a+\tfrac12\delta_b.
$$
Recall $V_{q,p}(x)$ is defined in \eqref{eq:drift_particles_vfield}. We focus on the two equations
$V_{q,p}(0)=0$ and $V_{q,p}(1)=0$.

\begin{lemma}[Non-identifiability on the support of $p$]
There exists $(a^\ast,b^\ast)\in\mathcal{R}:=[-1.5,-1.2]\times[0.6,0.9]$ such that
$$
V_{q,p}(0)=0,\qquad V_{q,p}(1)=0,
$$
and hence $q\neq p$.
\end{lemma}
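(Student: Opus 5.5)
Write $c_0:=\tfrac{1}{1+e}$ and $c_1:=\tfrac{e}{1+e}$. The plan is to reduce the two conditions $V_{q,p}(0)=V_{q,p}(1)=0$ to two explicit scalar equations in $(a,b)$, and then to show they have a common root in $\mathcal R$ by a two-dimensional intermediate value argument (Poincaré--Miranda).

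\textbf{Step 1: reduce to barycenter equations.} Using \eqref{eq:drift_terms} with row-normalized weights, $V_{q,p}(x)=m_p(x)-m_q(x)$, where $m_\mu(x)$ is the $k(x,\cdot)$-weighted barycenter of the atoms of $\mu$; the $-x$ terms cancel. For $p$ this is explicit:
\[
m_p(0)=\frac{e^{-1}}{1+e^{-1}}=c_0\approx 0.2689,\qquad m_p(1)=c_1\approx 0.7311 .
\]
For $q$, the weight on $b$ is a logistic function of a difference of squares, which gives
\[
m_q(x)=a+(b-a)\,\sigma\bigl((b-a)(2x-a-b)\bigr),\qquad \sigma(s)=\frac{1}{1+e^{-s}} .
\]
Define $G(a,b):=\bigl(m_q(0)-c_0,\; m_q(1)-c_1\bigr)$. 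The lemma is equivalent to $G$ having a zero in $\mathcal R$. Such a zero automatically gives $q\neq p$, since $a<0$ means $a\notin\{0,1\}$.

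\textbf{Step 2: locate the root numerically.} At the center $(a,b)=(-1.35,0.75)$ one has $b-a=2.1$ and $-a-b=0.6$. This gives
\[
m_q(0)=-1.35+2.1\,\sigma(1.26)\approx 0.286,\qquad m_q(1)=-1.35+2.1\,\sigma(5.46)\approx 0.741 .
\]
Both are within about $0.02$ of the targets $c_0,c_1$, so a root inside $\mathcal R$ is plausible.

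\textbf{Step 3: verify existence via Poincaré--Miranda.} Both components of $G$ are increasing in $b$ on $\mathcal R$, because $b$ enters the barycenter both as an atom location and as an increase of its own weight. The second component $G_2$ is nearly insensitive to $b$ through the weight, since $\sigma\approx 1$ there, and it behaves roughly like $b+\text{small}$. The first component $G_1$ depends on $a$ through the weight nontrivially.

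The natural pairing is to use $G_2$ against the $b$-edges and a suitable combination $\tilde G_1=G_1-\lambda G_2$ against the $a$-edges. Here $\lambda$ is chosen approximately as the ratio $\partial_b G_1/\partial_b G_2$ at the center, so that $\tilde G_1$ is essentially a function of $a$ alone. The zero set of $(\tilde G_1,G_2)$ equals that of $G$.

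I would then check the sign conditions with rigorous interval bounds for $\sigma$ on each edge:
\begin{itemize}
\item $G_2<0$ on $\{b=0.6\}$ and $G_2>0$ on $\{b=0.9\}$;
\item $\tilde G_1$ has opposite signs on $\{a=-1.5\}$ and $\{a=-1.2\}$.
\end{itemize}
To make the edge checks rigorous, I would subdivide each edge into a few subintervals and bound each quantity there using monotonicity of $\sigma$ and of the affine pieces. Poincaré--Miranda then yields a zero $(a^\ast,b^\ast)\in\mathcal R$.

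\textbf{Main obstacle.} The hard step is the edge sign verification for the first equation: $G_1$ is not monotone in $a$ in a transparent way, so the right $\lambda$ and enough subdivision are needed for the interval bounds to close. If that fails, the fallback is a Newton--Kantorovich certificate centered at a high-accuracy numerical root. This requires bounding the Jacobian inverse at the center and the Lipschitz constant of the Jacobian of $G$ on $\mathcal R$, both of which are elementary expressions in $\sigma$ and $\sigma'$.
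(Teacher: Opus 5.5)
Your plan is essentially the paper's proof: reduce the two conditions to explicit scalar equations in $(a,b)$ and apply Miranda's theorem on $\mathcal R$, with the edge signs certified by interval bounds. The one difference is that the paper uses $V_{q,p}(0)$ (your $-G_1$, up to a positive factor) directly on the $a$-edges, and $G_1$ alone already has the required signs (about $G_1\ge 0.055$ on $\{a=-1.5\}$ and $G_1\le -0.089$ on $\{a=-1.2\}$), so your preconditioning $\tilde G_1=G_1-\lambda G_2$ is harmless but unnecessary. Also drop the heuristic that both components increase in $b$ on $\mathcal R$: at $x=0$, moving $b>0$ outward lowers its kernel weight, and e.g.\ $\partial_b G_1\approx -0.20$ at $(-1.2,0.9)$, though nothing in your argument depends on that claim.
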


\begin{proof}
Define $F_1(a,b)=V_{q,p}(0)$ and $F_2(a,b)=V_{q,p}(1)$.
By expanding the expectation in the definition of $V_{q,p}$ (a finite sum since $p,q$ are empirical),
one obtains the following explicit expressions (up to a positive multiplicative constant):
\begin{align}
F_1(a,b)
&=
(-a)e^{-a^2}+(-b)e^{-b^2}+(1-a)e^{-a^2-1}+(1-b)e^{-b^2-1},\label{eq:F1_def}\\
F_2(a,b)
&=
(-a)e^{-1-(1-a)^2}+(-b)e^{-1-(1-b)^2}+(1-a)e^{-(1-a)^2}+(1-b)e^{-(1-b)^2}.\label{eq:F2_def}
\end{align}
In particular, $F_1$ and $F_2$ are continuous on $\mathbb{R}^2$.

Let $\mathcal R=[-1.5,-1.2]\times[0.6,0.9]$.
We claim that the following sign conditions hold on the boundary of $\mathcal R$:
\begin{align}
&F_1(-1.5,b)<0,\qquad F_1(-1.2,b)>0,\qquad \forall b\in[0.6,0.9],\label{eq:sign_F1}\\
&F_2(a,0.6)>0,\qquad F_2(a,0.9)<0,\qquad \forall a\in[-1.5,-1.2].\label{eq:sign_F2}
\end{align}
The inequalities \eqref{eq:sign_F1}--\eqref{eq:sign_F2} can be verified \emph{rigorously} using interval arithmetic:
for each boundary segment, we compute an interval enclosure of $F_1$ or $F_2$ over the entire segment and confirm
that the resulting interval is strictly negative or strictly positive, implying that the sign does not change on that segment.

Given \eqref{eq:sign_F1}--\eqref{eq:sign_F2} and continuity of $(F_1,F_2)$, Miranda's theorem (a two-dimensional intermediate value theorem)
guarantees the existence of $(a^\ast,b^\ast)\in\mathcal R$ such that
$$
F_1(a^\ast,b^\ast)=0,\qquad F_2(a^\ast,b^\ast)=0,
$$
i.e., $V_{q,p}(0)=V_{q,p}(1)=0$.

Finally, since $\mathcal R\cap\{0,1\}^2=\varnothing$, we have $(a^\ast,b^\ast)\notin\{0,1\}^2$, hence $q\neq p$.
\end{proof}

\subsection{Gradient Descent in Drift Field}
We briefly formalize the connection between the Drift loss and the drift field $V_{\mathrm{drift}}$.

\begin{proposition}[Stop-gradient drift loss induces the drift ODE in particle space]
\label{pro:drift_particle_ode}
Let $x^1,\ldots,x^n\in\mathbb{R}^d$ be free particles (no parametrization restriction)
and let $V^{\mathrm{drift}}:\mathbb{R}^d\to\mathbb{R}^d$ be the Drift field.
Consider the stop-gradient objective
\begin{equation}
\mathcal{L}_{\mathrm{drift}}(x^1,\ldots,x^n)
:=\frac12\sum_{i=1}^n\Big\|x^i-\mathrm{sg}\big(x^i+V^{\mathrm{drift}}(x^i)\big)\Big\|^2.\nonumber 
\end{equation}
Then the gradient with respect to each particle is
\begin{equation}
\nabla_{x^i}\mathcal{L}_{\mathrm{drift}}
= -\,V^{\mathrm{drift}}(x^i),
\label{eq:drift_particle_grad}
\end{equation}
and hence the (continuous-time) gradient flow in particle space satisfies
\begin{equation}
\dot x^i_t = -\nabla_{x^i}\mathcal{L}_{\mathrm{drift}}(x_t)
= V^{\mathrm{drift}}(x^i_t).
\label{eq:drift_particle_ode}
\end{equation}
\end{proposition}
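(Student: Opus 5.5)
The plan is to compute the gradient of $\mathcal{L}_{\mathrm{drift}}$ particle by particle, treating the stop-gradient operator exactly as it is defined: $\mathrm{sg}(z)$ has the value $z$ but zero derivative. For each $i$, write $c^i := \mathrm{sg}\big(x^i + V^{\mathrm{drift}}(x^i)\big)$. This is a vector whose value depends on the current configuration $(x^1,\ldots,x^n)$. The field $V^{\mathrm{drift}}$ depends on all particles through the self-coupling $P^{\mathrm{drift}}_{XX}$. Even so, under differentiation every $c^i$ is treated as a constant. That means none of the Jacobian terms $\partial V^{\mathrm{drift}}(x^i)/\partial x^k$ appear, for any $k$.

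Under this convention the loss becomes a sum of decoupled quadratics, $\frac12\sum_i \|x^i - c^i\|^2$. Only the $i$-th summand depends on $x^i$ through a non-frozen path. Therefore
$$\nabla_{x^i}\mathcal{L}_{\mathrm{drift}} = x^i - c^i = x^i - \big(x^i + V^{\mathrm{drift}}(x^i)\big) = -V^{\mathrm{drift}}(x^i),$$
where in the last step $c^i$ is evaluated at its value. This is \eqref{eq:drift_particle_grad}.

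The continuous-time gradient flow $\dot x^i_t = -\nabla_{x^i}\mathcal{L}_{\mathrm{drift}}(x_t)$ then follows by substitution, and gives \eqref{eq:drift_particle_ode}. Its forward Euler step with step size $\eta$ is $x^i_{k+1} = x^i_k + \eta V^{\mathrm{drift}}(x^i_k)$. This matches \eqref{eq:drift_flow_discrete} when $\eta=1$ and the Sinkhorn Euler steps in Proposition~\ref{pro:sinkhorn_wgf} for general $l$.

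The only delicate point is conceptual rather than computational. $\mathcal{L}_{\mathrm{drift}}$ is not a genuine function whose true gradient is $-V^{\mathrm{drift}}$. In fact, without $\mathrm{sg}$ the loss would be identically $\frac12\sum_i\|V^{\mathrm{drift}}(x^i)\|^2$, with a very different gradient. So I will state explicitly that the ``gradient'' here is the one defined by the stop-gradient semantics, i.e.\ the partial gradient with the target frozen at the current iterate. I would also remark that the same computation, composed with the chain rule through $x^i=f_\theta(\epsilon^i)$, reproduces \eqref{eq:drift_gradient}: $\nabla_\theta\mathcal{L} = -\sum_i J_f(\theta,\epsilon^i)^\top V^{\mathrm{drift}}(f_\theta(\epsilon^i))$.
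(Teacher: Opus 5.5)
Your proof is correct and is essentially the paper's argument: freeze the stop-gradient target, differentiate the decoupled quadratic to get $x^i-c^i=-V^{\mathrm{drift}}(x^i)$, and substitute into the definition of the gradient flow. Your explicit note that every target $c^k$ is frozen, not only $c^i$, slightly sharpens the paper's wording, which mentions only $t_i$; one aside to tighten is that the drift Euler step coincides with the Sinkhorn Euler step only at $l=1$ (where $n\pi^{1}_{XY}=P^{\mathrm{drift}}_{XY}$ and $n\pi^{1}_{XX}=P^{\mathrm{drift}}_{XX}$), not for general $l$, since for other $l$ the same stop-gradient computation only yields the analogous step with $V^l$ in place of $V^{\mathrm{drift}}$.
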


\begin{proof}
Fix $i$ and denote $t_i:=\mathrm{sg}(x^i+V^{\mathrm{drift}}(x^i))$.
By definition of $\mathrm{sg}(\cdot)$, $t_i$ is treated as constant when differentiating
with respect to $x^i$. Therefore,
$$
\nabla_{x^i}\,\frac12\|x^i-t_i\|^2 = x^i-t_i.
$$
Since $x^i-t_i = -V^{\mathrm{drift}}(x^i)$, we obtain \eqref{eq:drift_particle_grad},
and \eqref{eq:drift_particle_ode} follows from the definition of gradient flow.
\end{proof}

\begin{proposition}[Gradient of the Drift stop-gradient loss for $f_\theta$,]
\label{pro:drift_param_grad}
Let $x^i(\theta)=f_\theta(\epsilon^i)$ be generator outputs and let
$V^{\mathrm{drift}}(x)$ be the Drift field.
Consider the stop-gradient regression objective
\begin{equation}
\mathcal{L}_{\mathrm{drift}}(\theta)
:=\frac12\sum_{i=1}^n\Big\|f_\theta(\epsilon^i)-\mathrm{sg}\big(f_\theta(\epsilon^i)+V^{\mathrm{drift}}(f_\theta(\epsilon^i))\big)\Big\|^2.\nonumber 
\end{equation}
Then its gradient is
\begin{equation}
\nabla_\theta\mathcal{L}_{\mathrm{drift}}(\theta)
= -\sum_{i=1}^n J f_\theta(\epsilon^i)^\top\,V^{\mathrm{drift}}(x^i(\theta)),
\label{eq:drift_sg_grad_param}
\end{equation}
where $J f_\theta(\epsilon^i)\in\R^{d\times \dim(\theta)}$ denotes the Jacobian.
\end{proposition}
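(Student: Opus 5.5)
The plan is to combine the chain rule with the stop-gradient rule from Proposition~\ref{pro:drift_particle_ode}, handling one summand at a time. For each $i$, set $x^i(\theta)=f_\theta(\epsilon^i)$ and define the regression target
$$
t_i:=\mathrm{sg}\big(x^i(\theta)+V^{\mathrm{drift}}(x^i(\theta))\big).
$$
By the semantics of $\mathrm{sg}$, $t_i$ is a constant with respect to $\theta$. This remains true even though $V^{\mathrm{drift}}$ depends on the whole batch $\{x^k(\theta)\}_k$ through the couplings $P^{\mathrm{drift}}_{XY}$ and $P^{\mathrm{drift}}_{XX}$. This is the key point: with $t_i$ frozen, every cross-particle dependence introduced by the normalization in \eqref{eq:drift_row_norm} is removed. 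Consequently, the $i$-th summand $\ell_i(\theta)=\tfrac12\|x^i(\theta)-t_i\|^2$ depends on $\theta$ only through $x^i(\theta)$.

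Next, I will apply the chain rule to $\ell_i=g_i\circ x^i$, where $g_i(x)=\tfrac12\|x-t_i\|^2$. This gives
$$
\nabla_\theta\ell_i(\theta)=Jf_\theta(\epsilon^i)^\top\nabla_x g_i(x^i(\theta))=Jf_\theta(\epsilon^i)^\top\big(x^i(\theta)-t_i\big).
$$
Here $Jf_\theta(\epsilon^i)=\partial x^i/\partial\theta\in\R^{d\times\dim(\theta)}$. The vector $x^i-t_i$ is exactly the particle gradient computed in \eqref{eq:drift_particle_grad}.

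Now I will evaluate the forward value of the target. The stop-gradient operator acts as the identity in the forward pass, so $t_i=x^i+V^{\mathrm{drift}}(x^i)$ numerically, and hence $x^i-t_i=-V^{\mathrm{drift}}(x^i(\theta))$. Substituting this and summing over $i$ by linearity of the gradient gives \eqref{eq:drift_sg_grad_param}.

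The argument is essentially routine. The only step that needs care is being precise about what ``gradient'' means in the presence of $\mathrm{sg}$. The claimed identity is not the true gradient of the composite function $\theta\mapsto\frac12\sum_i\|V^{\mathrm{drift}}(x^i(\theta))\|^2$. It is instead the gradient of the surrogate in which the target is frozen at its current value. I will state this convention explicitly at the start, by defining $\nabla_\theta$ as differentiation with $t_i$ treated as a fixed parameter. Everything then follows from the chain rule, assuming $f_\theta$ is differentiable in $\theta$.
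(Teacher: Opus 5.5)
Your proposal is correct and follows the paper's route. You freeze $t_i$ via the stop-gradient, apply the chain rule to get $Jf_\theta(\epsilon^i)^\top(x^i(\theta)-t_i)$, substitute the forward value $x^i-t_i=-V^{\mathrm{drift}}(x^i(\theta))$, and sum over $i$. Your remark that the frozen target also cuts the cross-particle dependence through the batch-dependent couplings, so the result is the surrogate gradient rather than the true gradient of $\tfrac12\sum_i\|V^{\mathrm{drift}}(x^i(\theta))\|^2$, is a useful clarification the paper leaves implicit.
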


\begin{proof}
Let $x^i(\theta):=f_\theta(\epsilon^i)$ and
$t_i:=\mathrm{sg}(x^i(\theta)+V^{\mathrm{drift}}(x^i(\theta)))$.
By definition of $\mathrm{sg}(\cdot)$, $t_i$ is treated as constant when differentiating
with respect to $\theta$. Hence
$$
\nabla_\theta \frac12\|x^i(\theta)-t_i\|^2
= J x^i(\theta)^\top(x^i(\theta)-t_i).
$$
Since $x^i(\theta)-t_i=-V^{\mathrm{drift}}(x^i(\theta))$, summing over $i$ yields
\eqref{eq:drift_sg_grad_param}.
\end{proof}

\begin{remark}[Parametrization-induced deviation from the drift ODE]
Proposition~\ref{pro:drift_particle_ode} shows that, if the particles $x^i$ were optimized
directly, the stop-gradient loss induces the drift ODE $\dot x^i = V^{\mathrm{drift}}(x^i)$.
Under the parametrization $x^i=f_\theta(\epsilon^i)$, the induced output-space velocity
depends on the Jacobian through $\dot x^i = J f_\theta(\epsilon^i)\,\dot\theta$.
Therefore, updating $\theta$ does not in general guarantee the explicit Euler step
$x^i \mapsto x^i + \eta V^{\mathrm{drift}}(x^i)$ unless the parametrization and the update rule
can realize $V^{\mathrm{drift}}$ in the output space.
\end{remark}

\section{Background: Sinkhorn Algorithm in the General Measure Setting}
\label{sec:entropic_ot_sinkhorn_general}

Let $\alpha,\beta \in \mathcal{P}(\mathbb{R}^d)$ be Borel probability measures
and consider the entropic optimal transport problem
\begin{equation}
\min_{\pi \in \Pi(\alpha,\beta)}
\int_{\mathbb{R}^d \times \mathbb{R}^d}
c(x,y)\, d\pi(x,y)
+
\tau \mathrm{KL}(\pi \| \alpha \otimes \beta),\label{eq:entropic_ot_general} 
\end{equation}

\paragraph{Absolute continuity.}
Since the KL term is finite only if $\pi \ll \alpha\otimes\beta$,
any minimizer $\pi^\star$ of \eqref{eq:entropic_ot_general}
necessarily satisfies
$$
\pi^\star \ll \alpha\otimes\beta.
$$
Hence there exists a nonnegative measurable function
$f_\pi \in L^1(\alpha\otimes\beta)$ such that
\begin{equation}
\label{eq:rn_density}
d\pi(x,y)
=
f_\pi(x,y)
\, d\alpha(x)\, d\beta(y).
\end{equation}

\subsection{Sinkhorn iterations in the general setting}

The entropic OT problem \eqref{eq:entropic_ot_general}
is therefore equivalent to minimizing over
nonnegative functions $f \in L^1(\alpha\otimes\beta)$
\begin{equation}
\min_{f \ge 0}
\int c(x,y) f(x,y)\, d\alpha(x)d\beta(y)
+
\tau
\int f(x,y)\log f(x,y)\, d\alpha(x)d\beta(y),\nonumber
\end{equation}
subject to the marginal constraints 
\begin{align}
\int f_\pi(x,y)\, d\beta(y)
&= 1
\quad \text{for } \alpha\text{-a.e. } x,
\label{eq:marginal_x}
\\
\int f_\pi(x,y)\, d\alpha(x)
&= 1
\quad \text{for } \beta\text{-a.e. } y.
\label{eq:marginal_y}
\end{align}
which is equivalent to $f \alpha\otimes \beta \in \Pi(\alpha,\beta)$. 

\paragraph{Initialization.}
Analogous to the discrete case, we start from the Gibbs density
\begin{equation}
f^{(0)}(x,y) := e^{-\frac{C(x,y)}{\tau}}\nonumber 
\end{equation}

\paragraph{Alternating marginal normalizations.}
The Sinkhorn algorithm alternately enforces the two marginal
constraints by normalizing along one variable at a time.

For $\ell \ge 1$, define
\begin{equation}
\begin{cases}
\displaystyle
f^{(\ell)}(x,y)
=
\frac{
f^{(\ell-1)}(x,y)
}{
\int f^{(\ell-1)}(x,y')\, d\beta(y')
},
& \text{if $\ell$ is odd},\\
\displaystyle
f^{(\ell)}(x,y)
=
\frac{
f^{(\ell-1)}(x,y)
}{
\int f^{(\ell-1)}(x',y)\, d\alpha(x')
},
& \text{if $\ell$ is even}.
\end{cases}
\label{eq:sinkhorn_general}
\end{equation}

Each odd iteration enforces the constraint
\eqref{eq:marginal_x}, while each even iteration enforces
\eqref{eq:marginal_y}.

\paragraph{Associated transport plan.}
At iteration $\ell$, the corresponding coupling is
\begin{equation}
d\pi^{(\ell)}(x,y)
=
f^{(\ell)}(x,y)
\, d\alpha(x)\, d\beta(y).\label{eq:pi^l}
\end{equation}

Under mild integrability conditions on $c$,
the sequence $(f^{(\ell)})_{\ell\ge0}$
converges in $L^1(\alpha\otimes\beta)$
to the unique minimizer of
\eqref{eq:entropic_ot_general}.

\section{Background: Probability paths and gradient flows in \texorpdfstring{$\mathbb{R}^d$}{R\textasciicircum d}}
\label{sec:prob_paths_gf}

When the path is induced by transporting mass along a time-dependent velocity field $v_t:\mathbb{R}^d\to\mathbb{R}^d$ and $q_t$ admits a density (still denoted $q_t$), it satisfies the continuity equation
\begin{equation}
\partial_t q_t + \nabla\cdot(q_t v_t)=0,
\label{eq:continuity}
\end{equation}
understood in the weak sense.

\paragraph{First variation.}
Let $\mathcal{F}$ be an energy functional on probability measures. Its \emph{first variation} (a.k.a. functional derivative) $\frac{\delta \mathcal{F}}{\delta q}(q)$ is defined (when it exists) by the property that, for any signed perturbation $r$ with $\int r=0$,
\begin{equation}
\left.\frac{d}{d\varepsilon}\,\mathcal{F}(q+\varepsilon r)\right|_{\varepsilon=0}
\;=\; \int \frac{\delta \mathcal{F}}{\delta q}(q)(x)\,dr(x).
\label{eq:first_variation_def}
\end{equation}
Intuitively, $\frac{\delta \mathcal{F}}{\delta q}(q)$ plays the role of the gradient of $\mathcal{F}$ with respect to the density $q$.

\paragraph{Wasserstein gradient flow.}
The (formal) $2$-Wasserstein gradient flow of $\mathcal{F}$ is the steepest-descent evolution in the $W_2$ geometry and can be written as
\begin{equation}
\partial_t q_t
= -\mathrm{grad}_{W_2}\mathcal{F}(q_t)
:=\nabla\cdot\left(q_t \, \nabla \frac{\delta \mathcal{F}}{\delta q}(q_t)\right).
\label{eq:wgf_formal2}
\end{equation}
Comparing \eqref{eq:wgf_formal2} with \eqref{eq:continuity} shows that the corresponding velocity field is
\begin{equation}
v_t(x)= -\nabla \frac{\delta \mathcal{F}}{\delta q}(q_t)(x).
\label{eq:velocity_first_variation}
\end{equation}

\section{Details of Sinkhorn Drift Flow}\label{sec:sinkhorn_wgf}
\subsection{Proof of Proposition \ref{pro:sinkhorn_wgf}}
\begin{proof}
Define
$$
\Phi(P;X)
=
\langle P, C(X,Y)\rangle
+
\tau \sum_{i,j} P_{ij}(\log P_{ij}-1),
$$
where $C_{ij}(X,Y)=c(x^i,y^j)$ and
$$
\Pi
=
\{P\ge 0: P\mathbf 1=\tfrac1n\mathbf 1,\;P^\top\mathbf 1=\tfrac1n\mathbf 1\}.
$$

Then
$$
\mathrm{OT}_\tau(\hat p_{\mathrm{data}},q_X)
=
\min_{P\in\Pi}
\Phi(P;X).
$$

\medskip
\noindent
\textbf{Step 1: Envelope argument.}

Since $\Pi$ is compact and does not depend on $X$, 
$\Phi(P;X)$ is continuously differentiable in $X$, 
and the entropy term makes $\Phi$ strictly convex in $P$, 
the minimizer $\pi_{XY}^\infty(X)$ is unique.

Therefore, by Danskin's theorem,

$$
\nabla_X \mathrm{OT}_\tau(\hat p_{\mathrm{data}},q_X)
=
\nabla_X \Phi(\pi_{XY}^\infty(X);X).
$$

In particular, the derivative of $\pi_{XY}^\infty(X)$ does not appear.

\medskip
\noindent
\textbf{Step 2: Derivative with respect to particles.}

Only the cost term depends on $X$, hence

$$
\nabla_{x^i}
\mathrm{OT}_\tau(\hat p_{\mathrm{data}},q_X)
=
\sum_{j=1}^n
(\pi_{XY}^\infty)_{ij}
\nabla_{x} c(x^i,y^j).
$$

Similarly,
$$
\nabla_{x^i}\frac{1}{2}
\mathrm{OT}_\tau(q_X,q_X)
=
\sum_{j=1}^n
(\pi_{XX}^\infty)_{ij}\nabla_x c(x^i,x^j).
$$

For quadratic cost $c(x,y)=\frac{1}{2}\|x-y\|^2$, we have $\nabla_x c(x,y)=x-y$. 

Combining terms, we obtain: 

$$
\nabla_{x^i}\mathcal F(q_X)
=
\sum_{j=1}^n
(\pi_{XY}^\infty)_{ij}(x^i-y^j)
-
\sum_{j=1}^n
(\pi_{XX}^\infty)_{ij}(x^i-x^j)=-V_{\hat{p},\hat{q}_X}(x^i)
$$

The Partial ODE obtained from Wasserstein gradient flow \eqref{eq:wgf_particle}
$$
\dot x^i
=
-
\frac{1}{q_i}\nabla_{x^i}\mathcal F(q_X),
$$
becomes $\dot{x}^i=V_{\hat{p},\hat{q}_X}(x)$. 
\end{proof}
\subsection{The Sinkhorn flow in a general probability measure setting}

In this section, we extend Proposition~\ref{pro:sinkhorn_wgf} to the general probability measure setting.

\begin{proposition}[Formal Wasserstein gradient flow of Sinkhorn divergence]
\label{prop:sinkhorn_wgf_general}
Let $p:=p_{\mathrm{data}}\in \mathcal{P}_2(\mathbb{R}^d)$ be fixed, let $\tau>0$, and define
\[
\mathcal F(q):=S_\tau(p,q)
=
\mathrm{OT}_\tau(p,q)-\frac12 \mathrm{OT}_\tau(q,q)-\frac12 \mathrm{OT}_\tau(p,p),
\qquad q\in\mathcal P_2(\mathbb R^d).
\]
Assume that $q_t\in \mathcal P_2(\mathbb R^d)$ evolves according to the Wasserstein gradient flow
\begin{equation}
\label{eq:wgf_formal_general}
\partial_t q_t + \nabla\cdot(q_t v_t)=0,
\qquad
v_t(x)=-\nabla_x \frac{\delta \mathcal F}{\delta q}(q_t)(x),
\end{equation}
in a formal sense. And $q_t\in P_2(\mathbb{R}^d),\forall t$. 

Then the velocity field in \eqref{eq:wgf_formal_general} can be written as
\begin{align}
v_t(x)
&=
\int_{\mathbb R^d} (y-x)\, \mathrm d\pi^\infty_{p,q_t}(y\mid x)
-
\int_{\mathbb R^d} (x'-x)\, \mathrm d\pi^\infty_{q_t,q_t}(x'\mid x)
\label{eq:v_general_conditional_a}
\\
&=
\int_{\mathbb R^d} y\, \mathrm d\pi^\infty_{p,q_t}(y\mid x)
-
\int_{\mathbb R^d} x'\, \mathrm d\pi^\infty_{q_t,q_t}(x'\mid x)=:V^\infty_{p,q_t}(x)
\label{eq:v_general_conditional_b}
\end{align}
where $\pi^\infty_{p,q_t}(y\mid x)$ denotes the conditional law of the first variable given the second variable under $\pi^\infty_{p,q_t}$, and similarly for $\pi^\infty_{q_t,q_t}(x'\mid x)$.
\end{proposition}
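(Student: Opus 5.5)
The plan is to compute the first variation of each of the two $q$-dependent terms in $\mathcal F(q)=\mathrm{OT}_\tau(p,q)-\frac12\mathrm{OT}_\tau(q,q)-\frac12\mathrm{OT}_\tau(p,p)$ via the dual (Schrödinger-potential) formulation of entropic OT, and then take spatial gradients. The last term does not depend on $q$ and drops out.

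\textbf{Step 1: first variation of the cross term.} Write the dual
\[
\mathrm{OT}_\tau(p,q)=\max_{f,g}\int f\,dp+\int g\,dq-\tau\int\big(e^{(f(y)+g(x)-c(y,x))/\tau}-1\big)\,dp(y)\,dq(x).
\]
The optimal pair $(f_{p,q},g_{p,q})$ satisfies the Schrödinger system, and the optimal plan is
\[
d\pi^\infty_{p,q}=e^{(f+g-c)/\tau}\,dp\,dq .
\]
By an envelope (Danskin) argument, exactly as in Step~1 of the proof of Proposition~\ref{pro:sinkhorn_wgf}, and using the identity $\int e^{(f+g-c)/\tau}dp(y)=1$ enforced by the marginal constraint, we get $\frac{\delta}{\delta q}\mathrm{OT}_\tau(p,q)=g_{p,q}$ up to an additive constant. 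The constant is irrelevant because perturbations $r$ have zero mass.

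\textbf{Step 2: first variation of the self term.} For the symmetric problem $\mathrm{OT}_\tau(q,q)$ the optimal potentials can be taken equal, $f=g=h_q$. Perturbing both arguments simultaneously gives
\[
\frac{\delta}{\delta q}\mathrm{OT}_\tau(q,q)=2h_q ,
\]
again via the envelope theorem. The factor $\frac12$ therefore cancels, and
\[
\frac{\delta\mathcal F}{\delta q}=g_{p,q}-h_q .
\]

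\textbf{Step 3: gradients of the potentials.} Differentiate the Schrödinger fixed-point equation
\[
g(x)=-\tau\log\int e^{(f(y)-c(y,x))/\tau}\,dp(y)
\]
under the integral sign. With $c(y,x)=\frac12\|x-y\|^2$, so that $\nabla_x c=x-y$, this gives
\[
\nabla g(x)=\frac{\int (x-y)\,e^{(f(y)-c(y,x))/\tau}\,dp(y)}{\int e^{(f(y)-c(y,x))/\tau}\,dp(y)} .
\]
The normalized weights $e^{(f(y)+g(x)-c(y,x))/\tau}\,dp(y)$ are exactly the conditional law $\pi^\infty_{p,q}(dy\mid x)$. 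Hence
\[
\nabla g_{p,q}(x)=\int (x-y)\,d\pi^\infty_{p,q}(y\mid x).
\]
Similarly,
\[
\nabla h_q(x)=\int (x-x')\,d\pi^\infty_{q,q}(x'\mid x).
\]
Substituting into $v_t=-\nabla(g_{p,q_t}-h_{q_t})$ yields \eqref{eq:v_general_conditional_a}. Since both conditionals are probability measures, the $-x$ terms cancel, which gives \eqref{eq:v_general_conditional_b}.

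\textbf{Main obstacle.} The main difficulty is justifying the envelope step in infinite dimensions: that the first variation equals the potential, with no contribution from the derivative of the potentials with respect to $q$. This is especially delicate for the self term, where $q$ appears in both marginals. I will handle it as in \cite{feydy2019interpolating}: use optimality of the dual and the fact that the Schrödinger equations make the partial derivatives with respect to $(f,g)$ vanish. The differentiation under the integral sign in Step~3 needs a dominated-convergence bound on $\|x-y\|e^{-c/\tau}$ and the finite second moments from $\mathcal P_2$. Since the statement is explicitly formal, I will state these regularity conditions as assumptions rather than prove them in full generality.
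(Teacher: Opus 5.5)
Your argument is correct, but it takes a different route from the paper. The paper stays on the primal side and works in Lagrangian form. It perturbs $q$ by a pushforward $q_\varepsilon=(\mathrm{id}+\varepsilon\xi)_\# q$ and invokes an envelope/shape-derivative principle for the optimal plan to get $\frac{d}{d\varepsilon}\mathrm{OT}_\tau(p,q_\varepsilon)|_{\varepsilon=0}=\iint\langle x-y,\xi(x)\rangle\,d\pi^\infty$. It then disintegrates against $q$ and reads off $-\nabla\frac{\delta}{\delta q}\mathrm{OT}_\tau(p,q)$ directly as the barycentric displacement. The factor $2$ for the self term comes from the symmetry of the two marginal perturbations, and dual potentials never appear.

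You instead work Eulerian and dual. You identify $\frac{\delta\mathcal F}{\delta q}=g_{p,q}-h_q$, which is the potential-gradient result of \cite{feydy2019interpolating}, and then recover the barycentric form by differentiating the Schr\"odinger equation. Your route gains several things:
\begin{itemize}
\item The dual feasible set does not depend on $q$, so your envelope step is a genuine Danskin argument. In the primal, the constraint set $\Pi(p,q_\varepsilon)$ moves with $\varepsilon$, and the paper glosses over this.
\item The first variation is rigorous on compact domains by citation.
\item You obtain an explicit potential $g_{p,q}-h_q$. This makes concrete the object whose constancy is used in the proof of Proposition~\ref{pro:identity_general_smooth}.
\item The Schr\"odinger-extended potentials are defined everywhere, so your velocity is defined at every $x\in\mathbb{R}^d$, not only $q$-a.e.
\end{itemize}

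The paper's route, in turn, mirrors the particle-level Danskin computation of Proposition~\ref{pro:sinkhorn_wgf} and needs no dual theory. It also only requires identifying the velocity in $L^2(q)$, which is all the continuity equation uses. Its cost is that differentiability of the primal value along transport perturbations is left entirely formal.

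Your one remaining regularity need is differentiation under the integral in Step 3. With the unbounded quadratic cost on $\mathbb{R}^d$, this requires local-in-$x$ domination of $\|x-y\|\,e^{(f(y)-c(y,x))/\tau}$. You flag this appropriately, and it sits at the same level of formality as the statement itself.
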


\begin{proof}
We give a formal derivation.

\medskip
\noindent
\textbf{Step 0: Existence of optimal transportation plans.}

Since $\tau\ge 0$ and $p,q_t\in \mathcal{P}_2(\mathbb{R}^d)$, we have for each $t\ge 0$, the converged entropic optimal couplings
\[
\pi^\infty_{p,q_t}\in \Pi(p,q_t),
\qquad
\pi^\infty_{q_t,q_t}\in \Pi(q_t,q_t)
\]
exist, and that the first variations of the corresponding entropic OT functionals admit the usual shape-derivative representation along transport perturbations. 

\textbf{Step 1: First variation of $\mathrm{OT}_\tau(p,q)$ with respect to the second marginal.}
Fix $q\in\mathcal P_2(\mathbb R^d)$ and consider
\[
\Phi(q):=\mathrm{OT}_\tau(p,q).
\]
Let $\xi:\mathbb R^d\to\mathbb R^d$ be a smooth compactly supported vector field, and define
\[
T_\varepsilon(x):=x+\varepsilon \xi(x),
\qquad
q_\varepsilon:=(T_\varepsilon)_\# q.
\]
Using the standard envelope/shape-derivative principle for converged entropic OT, one formally obtains
\begin{equation}
\label{eq:shape_derivative_otpq}
\frac{\mathrm d}{\mathrm d\varepsilon}\Big|_{\varepsilon=0}
\mathrm{OT}_\tau(p,q_\varepsilon)
=
\iint_{\mathbb R^d\times\mathbb R^d}
\big\langle \nabla_x c(y,x), \xi(x)\big\rangle
\,\mathrm d\pi^\infty_{q,p}(y,x),
\end{equation}
where we emphasize that $x$ is the variable of the second marginal $q$.

For the quadratic cost
\[
c(y,x)=\frac12\|x-y\|^2,
\]
we have
\[
\nabla_x c(y,x)=x-y.
\]
Hence \eqref{eq:shape_derivative_otpq} becomes
\[
\frac{\mathrm d}{\mathrm d\varepsilon}\Big|_{\varepsilon=0}
\mathrm{OT}_\tau(p,q_\varepsilon)
=
\iint
\langle x-y,\xi(x)\rangle
\,\mathrm d\pi^\infty_{q,p}(y,x).
\]
Disintegrating $\pi^\infty_{q,p}$ with respect to its second marginal $q$, we obtain
\[
\frac{\mathrm d}{\mathrm d\varepsilon}\Big|_{\varepsilon=0}
\mathrm{OT}_\tau(p,q_\varepsilon)
=
\int_{\mathbb R^d}
\left\langle
\int_{\mathbb R^d}(x-y)\,\mathrm d\pi^\infty_{q,p}(y\mid x),
\,\xi(x)
\right\rangle
\,\mathrm dq(x).
\]
Therefore, in the formal Wasserstein sense,
\begin{equation}
\label{eq:first_var_otpq}
-\nabla_x \frac{\delta}{\delta q}\mathrm{OT}_\tau(p,q)(x)
=
\int_{\mathbb R^d}(y-x)\,\mathrm d\pi^\infty_{q,p}(y\mid x).
\end{equation}

\medskip
\noindent
\textbf{Step 2: First variation of $\mathrm{OT}_\tau(q,q)$.}
Now consider
\[
\Psi(q):=\mathrm{OT}_\tau(q,q).
\]
Since $q$ appears in both marginals, its first variation receives two contributions: one from perturbing the first marginal and one from perturbing the second marginal.

Because the quadratic cost is symmetric and the entropic OT functional is symmetric under exchanging the two marginals, these two contributions coincide. Therefore, the total first variation of $\Psi(q)$ is twice the contribution coming from perturbing only one marginal. Using the same shape-derivative formula as in Step 1, we obtain
\begin{equation}
\label{eq:first_var_otqq}
-\nabla_x \frac{\delta}{\delta q}\mathrm{OT}_\tau(q,q)(x)
=
2\int_{\mathbb R^d}(x'-x)\,\mathrm d\pi^\infty_{q,q}(x'\mid x).
\end{equation}

\medskip
\noindent
\textbf{Step 3: First variation of the Sinkhorn divergence.}
Since $\mathrm{OT}_\tau(p,p)$ is constant with respect to $q$, we have
\[
\frac{\delta \mathcal F}{\delta q}(q)
=
\frac{\delta}{\delta q}\mathrm{OT}_\tau(p,q)
-\frac12 \frac{\delta}{\delta q}\mathrm{OT}_\tau(q,q).
\]
Taking spatial gradients and applying \eqref{eq:first_var_otpq}--\eqref{eq:first_var_otqq} yields
\begin{align*}
-\nabla_x \frac{\delta \mathcal F}{\delta q}(q)(x)
&=
\int_{\mathbb R^d}(y-x)\,\mathrm d\pi^\infty_{q,p}(y\mid x)
-
\int_{\mathbb R^d}(x'-x)\,\mathrm d\pi^\infty_{q,q}(x'\mid x).
\end{align*}
Hence, by the definition of the Wasserstein gradient flow velocity,
\[
v(x)
=
-\nabla_x \frac{\delta \mathcal F}{\delta q}(q)(x),
\]
we obtain
\[
v(x)
=
\int_{\mathbb R^d}(y-x)\,\mathrm d\pi^\infty_{q,p}(y\mid x)
-
\int_{\mathbb R^d}(x'-x)\,\mathrm d\pi^\infty_{q,q}(x'\mid x),
\]
which is \eqref{eq:v_general_conditional_a}. Expanding both terms gives
\[
\int (y-x)\,\mathrm d\pi^\infty_{q,p}(y\mid x)
-
\int (x'-x)\,\mathrm d\pi^\infty_{q,q}(x'\mid x)
=
\int y\,\mathrm d\pi^\infty_{q,p}(y\mid x)
-
\int x'\,\mathrm d\pi^\infty_{q,q}(x'\mid x),
\]
which proves \eqref{eq:v_general_conditional_b}.
\end{proof}

\begin{remark}[On the formal nature of the derivation]
The above argument should be understood as a formal Wasserstein-calculus derivation. A fully rigorous proof would require a precise differentiability theory for
\[
q\mapsto \mathrm{OT}_\tau(p,q)
\qquad\text{and}\qquad
q\mapsto \mathrm{OT}_\tau(q,q)
\]
along transport perturbations, as well as a justification of the envelope/shape-derivative step for the converged entropic optimal couplings.
\end{remark}

\begin{remark}[Finite Sinkhorn iterations]
The above derivation applies to the converged entropic optimal couplings $\pi^\infty_{q,p}$ and $\pi^\infty_{q,q}$. If one instead uses truncated Sinkhorn couplings after $l<\infty$ iterations, the corresponding plans $\pi^l$ are generally not exact minimizers. In that case, differentiating the resulting truncated objective with respect to particle locations would, in principle, involve differentiating through $\pi^l$, unless one explicitly detaches the coupling.
\end{remark}

\section{Details in Sinkhorn Drift Model}\label{sec:sinkhorn_model}
\subsection{Sinkhorn Drift field $V^l_{q,p}$ in general setting.}
In the general (non-discrete) setting, let $\pi^{(l)}_{q,p}$ denote the (possibly truncated, after $l$ Sinkhorn iterations) entropic OT plan between $p$ and $q$ (we refer \eqref{eq:pi^l} for details), and write $\pi^{(l)}_{q,p}(\cdot\mid x)$ for its disintegration/conditional distribution at $x$. 
We define the (level-$l$) Sinkhorn drift field at location $x\in\mathbb{R}^d$ by
\begin{equation}
V^l_{q,p}(x)
:= -\int_{\mathbb{R}^d} (x-y)\,\mathrm{d}\pi^{(l)}_{q,p}(y\mid x)
\; +\; \int_{\mathbb{R}^d} (x-y)\,\mathrm{d}\pi^{(l)}_{q,q}(y\mid x).
\label{eq:sinkhorn_flow_general_l}
\end{equation}
Equivalently, if $T^l_{q,p}(x):=\int y\,\mathrm{d}\pi^{(l)}_{q,p}(y\mid x)$ denotes the associated barycentric map, then $V^l_{q,p}(x)=T^l_{q,p}(x)-T^l_{q,q}(x)$.
\subsection{Proof of Proposition \ref{pro:sinkhorn_identity}}
Let $\hat{p}=\tfrac1n\sum_{j=1}^n\delta_{y^j}$ and $\hat{q}=\tfrac1n\sum_{i=1}^n\delta_{x^i}$. In the discrete setting of Proposition~\ref{pro:sinkhorn_wgf}, the level-$l$ drift evaluated at particles takes the form
\begin{equation}
V^l_{\hat{q},\hat{p}}(x^i)=\sum_{j=1}^n (n\pi_{XY}^l)_{ij}y^j-\sum_{j=1}^n (n\pi_{XX}^l)_{ij}x^j.
\label{eq:Vl_empirical}
\end{equation}
Assume now that $\hat{p}=\hat{q}$, i.e., there exists a permutation $\sigma\in\mathfrak{S}_n$ such that $y^j=x^{\sigma(j)}$ for all $j$. Let $\mathbf{P}\in\{0,1\}^{n\times n}$ be the corresponding permutation matrix so that $\mathbf{Y}=\mathbf{P}\mathbf{X}$.
Because the cost matrix between $X$ and $Y$ is just a column-permutation of the cost matrix between $X$ and itself, the Sinkhorn iterations are equivariant under this permutation; hence the truncated plan satisfies
\begin{equation}
\pi^l_{XY}=\pi^l_{XX}\mathbf{P}^\top\qquad (\text{and similarly for }l=\infty).
\label{eq:pi_perm_equiv}
\end{equation}
Plugging $\mathbf{Y}=\mathbf{P}\mathbf{X}$ and \eqref{eq:pi_perm_equiv} into \eqref{eq:Vl_empirical} yields, for every $i$,
$$
\sum_{j}(n\pi^l_{XY})_{ij}y^j
=\sum_{j}(n\pi^l_{XX}\mathbf{P}^\top)_{ij}(\mathbf{P}\mathbf{X})_j
=\sum_{j}(n\pi^l_{XX})_{ij}x^j,
$$
and therefore
\begin{equation}
V^l_{\hat{q},\hat{p}}(x^i)=0,\qquad \forall i\in\{1,\ldots,n\}.
\label{eq:Vl_zero_when_equal_empirical}
\end{equation}
In words: when the two empirical measures coincide (up to relabeling of atoms), the cross and self barycentric projections match exactly, so the ``cross minus self'' drift cancels.

\subsection{Extend Proposition~\eqref{pro:sinkhorn_identity} to general probability measures}
\label{sec:zero_drift_general}

\begin{proposition}[Zero drift when $p=q$ in the general setting]
\label{pro:zero_drift_general}
Let $p\in\mathcal{P}_2(\mathbb{R}^d)$ and let $\pi^{(l)}_{q,p}$ be the (possibly truncated) Sinkhorn coupling defined in the appendix by \eqref{eq:sinkhorn_general}--\eqref{eq:pi^l}. Consider the level-$l$ drift field $V^l_{q,p}$ defined by \eqref{eq:sinkhorn_flow_general_l}. Then, if $p=q$, for $p$-a.e. $x$, we have: 
\begin{equation}
V^l_{q,p}(x)=0.\nonumber 
\end{equation}
\end{proposition}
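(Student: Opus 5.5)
The plan is to show that when $p=q$ the two truncated couplings $\pi^{(l)}_{q,p}$ and $\pi^{(l)}_{q,q}$ are literally the same measure on $\mathbb{R}^d\times\mathbb{R}^d$. Once that holds, their disintegrations with respect to the first variable agree for $q$-a.e.\ $x$. Hence the barycentric maps coincide, $T^l_{q,p}(x)=T^l_{q,q}(x)$, and $V^l_{q,p}=T^l_{q,p}-T^l_{q,q}$ vanishes $p$-a.e. This is the continuous analogue of the permutation-equivariance argument in the proof of Proposition~\ref{pro:sinkhorn_identity}. It is in fact simpler, since no relabeling is needed: the two entropic problems have identical data.

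First, I would set $\alpha=q$ and $\beta=p$ in \eqref{eq:sinkhorn_general}, and separately $\alpha=\beta=q$. By induction on $\ell$, I would show that the densities $f^{(\ell)}_{q,p}$ and $f^{(\ell)}_{q,q}$ agree $q\otimes q$-a.e. The base case holds because both equal $e^{-C(x,y)/\tau}$. For the inductive step, each normalization integrates $f^{(\ell-1)}$ against $\beta$ or $\alpha$. When $p=q$, these reference measures are identical in the two problems, so the normalizers agree a.e. and so do the updated densities. It then follows from \eqref{eq:pi^l} that $\pi^{(l)}_{q,p}=f^{(l)}\,q\otimes q=\pi^{(l)}_{q,q}$. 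For $l=\infty$, the same conclusion holds either by passing to the $L^1$ limit or directly from uniqueness of the minimizer of \eqref{eq:entropic_ot_general}, since the two optimization problems are identical.

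Second, I would disintegrate. Because $\pi^{(l)}$ has density $f^{(l)}$ with respect to $q\otimes q$, the conditional law given $x$ can be written explicitly as
$$\mathrm d\pi^{(l)}(y\mid x)=\frac{f^{(l)}(x,y)\,\mathrm dq(y)}{\int f^{(l)}(x,y')\,\mathrm dq(y')}.$$
This expression is well defined wherever the denominator is positive and finite. Its normalization also matches the row-normalization in Algorithm~\ref{alg:split_sinkhorn_drift}. Since the two densities agree, the two conditional laws agree for $q$-a.e.\ $x$. Plugging them into \eqref{eq:sinkhorn_flow_general_l}, the two integrals of $(x-y)$ cancel, which gives $V^l_{q,p}(x)=0$.

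The main obstacle is measure-theoretic bookkeeping rather than any real difficulty. I need the normalizers in \eqref{eq:sinkhorn_general} to be finite and positive a.e., and the barycenters $\int y\,\mathrm d\pi^{(l)}(y\mid x)$ to be finite. I would handle this using the $\mathcal P_2$ assumption together with positivity of the Gibbs kernel. Inductively, each $f^{(\ell)}$ is strictly positive and finite a.e., so every normalizer is positive. For finiteness of the barycenters, Cauchy--Schwarz applied to $\int\|y\|\,f\,\mathrm dq$ gives a bound a.e., after checking that $f^{(l)}(x,\cdot)$ stays bounded for a.e.\ $x$. I would state these as mild regularity conditions and not belabor them, since the identity of the two couplings already forces the cancellation wherever the drift is defined.
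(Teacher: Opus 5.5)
Your proposal is correct and follows the same route as the paper. When $p=q$ the two Sinkhorn problems have identical data, so the iterates $f^{(l)}$ coincide (and, for $l=\infty$, so do the limits or unique minimizers); hence the conditionals agree $p$-a.e.\ and the cross and self terms in \eqref{eq:sinkhorn_flow_general_l} cancel. Your extra bookkeeping goes beyond what the paper does, but finiteness of the barycenters follows more cleanly from Tonelli and $p\in\mathcal{P}_2$ than from the unverified boundedness of $f^{(l)}(x,\cdot)$: after an even step the second marginal of $\pi^{(l)}$ is $p$, and an odd step only rescales rows, which leaves the conditionals unchanged.
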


\begin{proof}
For finite $l\ge 1$, when $p=q$, both terms in \eqref{eq:sinkhorn_flow_general_l} are built from the same pair of marginals. In particular, the Sinkhorn iterates $f^{(l)}$ defined by \eqref{eq:sinkhorn_general} (and thus the induced coupling $\pi^{(l)}$ in \eqref{eq:pi^l}) coincide for the two problems $(p,q)=(p,p)$ and $(q,q)=(p,p)$. Hence
$\pi^{(l)}_{p,p}=\pi^{(l)}_{q,q}$ and therefore their disintegrations satisfy $\pi^{(l)}_{p,p}(\cdot\mid x)=\pi^{(l)}_{q,q}(\cdot\mid x)$ for $p$-a.e. $x$.
Plugging this identity into \eqref{eq:sinkhorn_flow_general_l} shows that the ``cross'' and ``self'' integrals are identical and cancel, yielding $V^l_{p,p}(x)=0$. 

When $l=\infty$, we can obtain the result similarly. 
\end{proof}

\section{Identity of Sinkhorn Drift}\label{sec:identity_2}
In this section we discuss several statements of the identity when the Sinkhorn Drift $V^\infty_{q,p}\equiv 0$.

\subsection{Background in Sinkhorn Divergence}
From \cite{feydy2019interpolating}, the Sinkhorn divergence satisfies the following.
\begin{proposition}[Theorem 1 in \cite{feydy2019interpolating}]\label{pro:sinkhorn_divergence}
Let $0<\tau<\infty$ and let $S_\tau(p,q)$ denote the (debiased) Sinkhorn divergence. Assume that $p$ and $q$ are supported on a compact set and that the cost $c(x,y)$ is Lipschitz (e.g., $c(x,y)=\|x-y\|$ or $c(x,y)=\|x-y\|^2$).
\begin{enumerate}
\item (Definiteness) $S_\tau(p,q)\ge 0$, and $S_\tau(p,q)=0$ if and only if $p=q$.
\item (Convexity) For fixed $p$, the map $q\mapsto S_\tau(p,q)$ is (strictly) convex; in particular, the unique minimizer of $q\mapsto S_\tau(p,q)$ is $q=p$.
\item (Weak convergence) $q_n \rightharpoonup q$ (weakly) if and only if $S_{\tau}(q_n,q)\to 0$.
\end{enumerate}
Consequently, whenever $q\mapsto S_\tau(p,q)$ is differentiable at $q$, the stationarity condition $\nabla_q S_\tau(p,q)=0$ implies $q=p$.
\end{proposition}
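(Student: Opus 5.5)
The statement is Proposition~\ref{pro:sinkhorn_divergence}, i.e.\ Theorem~1 of \cite{feydy2019interpolating}. I would not reprove it from scratch; I would reconstruct the argument of \cite{feydy2019interpolating} in three steps, working on a compact set $\Omega$ with Lipschitz cost $c$.

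\textbf{Step 1: dual potentials.} For $\tau>0$, write $\mathrm{OT}_\tau(\alpha,\beta)=\max_{f,g}\langle\alpha,f\rangle+\langle\beta,g\rangle-\tau\langle\alpha\otimes\beta,e^{(f\oplus g-c)/\tau}-1\rangle$. I would then check three facts.
\begin{itemize}[leftmargin=*]
\item The optimal potentials satisfy the Sinkhorn fixed-point (soft-$c$-transform) equations.
\item Because $c$ is Lipschitz on compact $\Omega$, the potentials are uniformly Lipschitz and bounded.
\item They are unique up to an additive constant.
\end{itemize}
Together with an envelope argument, these give the first variation $\frac{\delta}{\delta\beta}\mathrm{OT}_\tau(\alpha,\beta)=g_{\alpha,\beta}$, and continuity of the potentials under weak convergence.

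\textbf{Step 2: definiteness and convexity.} Define the Sinkhorn negentropy $F_\tau(\alpha)=-\tfrac12\mathrm{OT}_\tau(\alpha,\alpha)$. Using the symmetric dual, $F_\tau(\alpha)=\min_f \big(-\langle\alpha,f\rangle+\tfrac\tau2\langle\alpha\otimes\alpha,e^{(f\oplus f-c)/\tau}\rangle-\tfrac\tau2\big)$.

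The key point is that $k_\tau=e^{-c/\tau}$ is a positive universal kernel (Gaussian, or Laplacian, for the costs listed). With the change of variable $\mu=e^{f/\tau}\alpha$, $F_\tau$ becomes $\min_\mu \tfrac\tau2\|\mu\|_{k_\tau}^2-\tau\langle\alpha,\log\frac{d\mu}{d\alpha}\rangle$ up to constants. This expression is jointly convex in $(\alpha,\mu)$, since $(a,m)\mapsto -a\log(m/a)$ is jointly convex and the RKHS norm is strictly convex. Therefore $F_\tau$ is convex, and positive-definiteness of $k_\tau$ upgrades this to strict convexity.

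Next, take the Bregman (``Hausdorff'') divergence $H_\tau(\alpha,\beta)=\langle\alpha-\beta,\nabla F_\tau(\alpha)-\nabla F_\tau(\beta)\rangle/2$. Using the dual optimality of the cross potentials $(f_{\alpha\beta},g_{\alpha\beta})$ against the symmetric ones, I would show $S_\tau(\alpha,\beta)\ge H_\tau(\alpha,\beta)\ge0$, which is the inequality quoted in the earlier remark. Strict convexity then gives $H_\tau=0\Rightarrow\alpha=\beta$, so $S_\tau=0\Rightarrow\alpha=\beta$. The converse direction is trivial.

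For item 2, write $S_\tau(p,\cdot)=\mathrm{OT}_\tau(p,\cdot)+F_\tau(\cdot)+\text{const}$. The first term is convex, being a supremum of affine functionals of $q$ in the dual. The second is strictly convex by the above. Hence the sum is strictly convex with minimum value $0$, attained uniquely at $q=p$.

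\textbf{Step 3: metrization of weak convergence.} Suppose $q_n\rightharpoonup q$ on compact $\Omega$. Weak continuity of the potentials from Step 1 gives continuity of $\mathrm{OT}_\tau$, so $S_\tau(q_n,q)\to S_\tau(q,q)=0$. Conversely, suppose $S_\tau(q_n,q)\to0$. By Prokhorov on compact $\Omega$, every subsequence has a further weakly convergent subsequence with some limit $q'$. Continuity gives $S_\tau(q',q)=0$, so $q'=q$ by Step 2.

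The closing ``consequently'' is immediate. Strict convexity makes any stationary point of $S_\tau(p,\cdot)$ a global minimizer: for convex differentiable $\Phi$, $\Phi(q')\ge\Phi(q)+\langle\nabla\Phi(q),q'-q\rangle$. So stationarity forces $q$ to be the unique minimizer, which is $p$.

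\textbf{Main obstacle.} I expect Step 2 to be the hardest part: establishing strict convexity of $F_\tau$ via the kernel reparametrization, and proving the inequality $S_\tau\ge H_\tau$. I would follow \cite{feydy2019interpolating}'s proof closely there.
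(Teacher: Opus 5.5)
Your proposal is correct and follows essentially the same route as the paper. The paper's proof simply defers to Theorem~1 of \cite{feydy2019interpolating}, sketched in its earlier remark via strict convexity of the negentropy $F_\tau$ and $0\le H_\tau\le S_\tau$, and then deduces the stationarity consequence from items~1--2 exactly as you do. You were also right to invoke that $k_\tau=e^{-c/\tau}$ is a positive universal kernel and to obtain strictness in item~2 from strict convexity of $F_\tau$ plus convexity of $\mathrm{OT}_\tau(p,\cdot)$, because Lipschitzness of $c$ alone, as the statement is written, does not suffice (e.g.\ $c\equiv 0$ gives $S_\tau\equiv 0$).
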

\begin{proof}
For statements 1,2,3 we refer to Theorem 1 in \cite{feydy2019interpolating}. We immediately obtain the final conclusion from the first two statements.  
\end{proof}

\subsection{Identity for general Sinkhorn Drift}

\begin{proof}
Let
\[
F(q):=S_\tau(p,q),
\]
where $p$ is fixed. By Proposition \ref{prop:sinkhorn_wgf_general}, we have
\[
V^\infty_{q,p}(x)=-\nabla \frac{\delta F}{\delta q}(x), q-a.e. 
\]
Since $dq(x)=\rho_{q}(x)dx$ and $\rho_q(x)>0,\forall x\in\Omega$, we have 

Since $V^\infty_{q,p}\equiv 0$, it follows that
\[
\nabla_x\frac{\delta F}{\delta q}(x)=0
\qquad \forall x\in\Omega.
\]
Combine it with the fact $\Omega$ is connected, we have $\frac{\delta F}{\delta q}(x)$ is constant on $\Omega$. That is, there exists a constant $C\in\mathbb R$ such that
\[
\frac{\delta F}{\delta q}(x)\equiv C
\qquad \text{for all }x\in\Omega.
\]

Let $r\in\mathcal P_2(\Omega)$ be arbitrary. By the first-order variation formula, the directional derivative of $F$ at $q$ along the direction $r-q$ is
\[
\int_\Omega \frac{\delta F}{\delta q}(x)\,\mathrm d(r-q)(x)
=
\int_\Omega C\,\mathrm d(r-q)(x)
=
C\int_\Omega \mathrm d(r-q)(x)=0,
\]
since both $r$ and $q$ are probability measures.

Thus $q$ is a stationary point of $F$ on $\mathcal P_2(\Omega)$. Since $F$ is strictly convex (see Proposition \ref{pro:sinkhorn_divergence}), it follows that $q$ is the unique global minimizer of $F$. Hence
\[
F(q)\le F(p)=S_\tau(p,p)=0.
\]

On the other hand, the Sinkhorn divergence is nonnegative, so
\[
F(q)=S_\tau(p,q)\ge0.
\]

Therefore $S_\tau(p,q)=0$, and by the identity of indiscernibles of the Sinkhorn divergence we conclude that
\[
p=q.
\]

This completes the proof.
\end{proof}

\subsection{Proof of Proposition~\ref{pro:identity_tau=0} for $\tau=0$}
\begin{proof}

When $\tau=0$, the Sinkhorn divergence reduces to the classical optimal transport cost.
In this case, the optimal coupling $\pi_{q,p}^\infty$ is an $n\times n$ permutation matrix,
and $\pi_{p,p}^\infty = I_n$.

The stationarity condition becomes
$$
V^\infty_{q,p}(X)
=
n \pi_{q,p}^\infty Y - X
=0.
$$
Since $\pi_{q,p}^\infty$ is a permutation matrix,
this implies that $Y$ is a permutation of $X$.
Therefore $q=p$.
\end{proof}
\subsection{Stationary point in empirical distribution manifold.}
In this subsection we discuss the case $0<\tau<\infty$. We show that if $V^\infty_{q_X,p}=0$, then $q_X$ is a stationary point of the functional $F(q)=S_\tau(q,p)$ on $\mathcal{M}_n$. The formal result is stated in Proposition~\ref{pro:stationary}. This indicates that (restricted) convexity of $\mathcal{F}$ on $\mathcal{M}_n$ would directly imply the identity induced by the Sinkhorn drift $V^\infty$; we leave a systematic study of this aspect to future work.

\subsubsection{Background: The Wasserstein Space and Empirical Measures.}
The concept of the Wasserstein Space is rooted in the Theory of Optimal Transport \cite{villani2009optimal}. In modern analysis, the $L^2$-Wasserstein space, denoted by $\mathcal{P}_2(\mathbb{R}^d)$, is the space of probability measures with finite second moments, equipped with the metric $W_2$ that induces a Riemannian manifold \cite{otto2000generalization}. 

For applications in machine learning and generative modeling, we focus on the manifold of empirical distributions. Let $\mathcal{M}_n \subset \mathcal{P}_2(\mathbb{R}^d)$ be the set of empirical measures with $n$ Dirac masses and uniform weights:
$$
\mathcal{M}_n = \left\{ q = \frac{1}{n} \sum_{i=1}^n \delta_{x_i} : x_i \in \mathbb{R}^d \right\}.
$$
This set $\mathcal{M}_n$ can be viewed as an $nd$-dimensional submanifold (or more precisely, a quotient space $(\mathbb{R}^d)^n / \mathcal{S}_n$ where $\mathcal{S}_n$ is the symmetric group of permutations) embedded within the infinite-dimensional Wasserstein space.

At a point $q \in \mathcal{M}_n$, the tangent space $T_q \mathcal{M}_n$ consists of all infinitesimal perturbations of the measure that remain within $\mathcal{M}_n$. In the Benamou-Brenier fluid dynamics view, these perturbations are represented by velocity vectors $v_i \in \mathbb{R}^d$ assigned to each particle $x_i$, describing how the support of the distribution shifts.

\subsubsection{Surjectivity of the Configuration-to-Measure Map.}
Consider the configuration to measure map
\begin{equation}
\Phi:(\mathbb{R}^d)^n\to\mathcal{M}_n,\qquad \Phi(x_1,\dots,x_n)=\frac{1}{n}\sum_{i=1}^n\delta_{x_i}.\label{eq:figure_to_measure}
\end{equation}
The \emph{differential} (derivative) of $\Phi$ at $X=(x_1,\dots,x_n)$ is the linear map $\mathrm{d}\Phi_X:\R^{nd}\to T_{\Phi(X)}\mathcal{M}_n$ defined by the first-order expansion
\begin{equation}
\Phi(X+\varepsilon\,\delta X)=\Phi(X)+\varepsilon\,\mathrm{d}\Phi_X(\delta X)+o(\varepsilon),\qquad \varepsilon\to 0,\label{eq:dPhi}
\end{equation}
where $\delta X=(v_1,\dots,v_n)\in(\mathbb{R}^d)^n$ is an infinitesimal displacement (velocity field on particles). In particular, for empirical measures one may write the induced variation as the distribution
\begin{equation}
\mathrm{d}\Phi_X(\delta X)
= -\frac{1}{n}\sum_{i=1}^n \nabla\cdot\bigl(\delta_{x_i}\,v_i\bigr).\nonumber
\end{equation}
We say that $\mathrm{d}\Phi_X$ is an \emph{isomorphism} if it is a bijective linear map; equivalently, every tangent direction in $T_{\Phi(X)}\mathcal{M}_n$ is realized by a unique particle displacement $\delta X$.

Now we introduce the following lemma, which establishes that as long as the particles do not collapse, the coordinate degrees of freedom perfectly "span" the possible variations in the measure space.
\begin{lemma}[Non-degenerate tangent mapping]\label{lem:nondegenerate_tangent_mapping}
Let $X=(x_1,\dots,x_n)\in\R^{nd}$ be the configuration of $n$ particles, and define the parameterization map $\Phi:(\mathbb{R}^d)^n\to\mathcal{M}_n$ by \eqref{eq:figure_to_measure}.

If $X$ is non-degenerate, i.e., $x_i\neq x_j$ for all $i\neq j$, then the differential
\begin{equation}
\mathrm{d}\Phi_X: \R^{nd} \to T_{\Phi(X)}\mathcal{M}_n\nonumber
\end{equation}
is an isomorphism. Consequently, any infinitesimal displacement of $q=\Phi(X)$ within $\mathcal{M}_n$ can be uniquely represented by a coordinate displacement $\delta X\in\R^{nd}$.
\end{lemma}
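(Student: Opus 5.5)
The plan is to check that $\mathrm{d}\Phi_X$ is injective and then obtain surjectivity from the way $T_{\Phi(X)}\mathcal{M}_n$ is defined. That tangent space is the set of first-order variations of curves in $\mathcal{M}_n$ through $\Phi(X)$, viewed as distributions, i.e.\ elements of the dual of $C_c^\infty(\mathbb{R}^d)$.

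\textbf{Surjectivity.} Consider a smooth curve $\varepsilon\mapsto\Phi(X(\varepsilon))$ with $X(0)=X$. Because the points are distinct, there is a radius $r>0$ such that the balls $B(x_i,r)$ are pairwise disjoint. For small $\varepsilon$ each particle stays inside its own ball, so the labels can be chosen continuously. Hence every curve in $\mathcal{M}_n$ near $\Phi(X)$ lifts locally to a curve in configuration space. This is exactly where non-degeneracy enters: at a collision the quotient by $\mathcal{S}_n$ is singular and a lift need not exist. The variation of the curve therefore equals $\mathrm{d}\Phi_X(\dot X(0))$, so $\mathrm{d}\Phi_X$ is onto.

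\textbf{Computing the differential.} For a test function $\varphi\in C_c^\infty(\mathbb{R}^d)$,
\[
\langle \Phi(X+\varepsilon\,\delta X),\varphi\rangle=\frac1n\sum_i\varphi(x_i+\varepsilon v_i)=\langle\Phi(X),\varphi\rangle+\frac{\varepsilon}{n}\sum_i\nabla\varphi(x_i)\cdot v_i+o(\varepsilon).
\]
This gives $\langle\mathrm{d}\Phi_X(\delta X),\varphi\rangle=\frac1n\sum_i\nabla\varphi(x_i)\cdot v_i$, which agrees with the stated formula $-\frac1n\sum_i\nabla\cdot(\delta_{x_i}v_i)$. The map is linear in $\delta X$.

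\textbf{Injectivity.} Suppose $\mathrm{d}\Phi_X(\delta X)=0$, and fix an index $k$ and a vector $e\in\mathbb{R}^d$. Take $\varphi(x)=\chi(x)\,e\cdot(x-x_k)$, where $\chi$ is a smooth bump equal to $1$ near $x_k$ and supported in $B(x_k,r)$. Then $\nabla\varphi(x_k)=e$, and $\nabla\varphi(x_i)=0$ for every $i\neq k$ because the balls are disjoint. Testing against this $\varphi$ gives $\frac1n e\cdot v_k=0$ for every $e$, so $v_k=0$. Since $k$ was arbitrary, $\delta X=0$.

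Together these show that $\mathrm{d}\Phi_X$ is a linear bijection from $\mathbb{R}^{nd}$ onto $T_{\Phi(X)}\mathcal{M}_n$. The uniqueness of the representing displacement is the injectivity statement.

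The main obstacle is making the surjectivity step rigorous, because the tangent space is only informally defined. I would handle it by taking $T_{\Phi(X)}\mathcal{M}_n$ to be the image of variations of curves that admit smooth lifts. The distinctness condition guarantees that near $\Phi(X)$ the covering $(\mathbb{R}^d)^n\to(\mathbb{R}^d)^n/\mathcal{S}_n$ is a local diffeomorphism, so every curve has such a lift. Under this definition surjectivity is essentially tautological, and injectivity carries the real content.
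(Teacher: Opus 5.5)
Your proof is correct, but it divides the work differently from the paper. The paper gets injectivity in one line: disjoint supports make the variations $-\frac1n\nabla\cdot(\delta_{x_i}v_i)$ ``linearly independent''. It then obtains surjectivity by rank--nullity, taking $\dim T_q\mathcal{M}_n=nd$ for granted from the manifold picture of $\mathcal{M}_n$.

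You instead compute $\mathrm{d}\Phi_X$ explicitly and prove injectivity with the localized test functions $\chi(x)\,e\cdot(x-x_k)$, which makes the paper's disjoint-support remark precise. These test functions separate different particles. They also show that each single map $v_k\mapsto-\nabla\cdot(\delta_{x_k}v_k)$ is injective on $\R^d$, which ``linear independence of $n$ distributions'' does not literally give.

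For surjectivity you replace the dimension count by lifting curves through the quotient map, which is a local diffeomorphism away from collisions. This rests on the same underlying fact as $\dim T_q\mathcal{M}_n=nd$, but you state what the tangent space is instead of assuming its dimension.

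One correction to your commentary: you say non-degeneracy enters ``exactly'' in surjectivity, but under your definition surjectivity is tautological at every configuration. The essential use of non-degeneracy is in injectivity: if $x_1=x_2$, the displacement with $v_1=-v_2$ and all other $v_i=0$ lies in the kernel.

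If you want surjectivity to carry content, reuse your test functions. Take any curve $\mu_\varepsilon\in\mathcal{M}_n$ that is merely differentiable at $0$ as distributions. Each small ball around $x_k$ eventually holds exactly one atom $\tilde x_k(\varepsilon)$, and $\langle\mu_\varepsilon,\chi\,e\cdot(\cdot-x_k)\rangle=\frac1n e\cdot(\tilde x_k(\varepsilon)-x_k)$ forces $\tilde x_k$ to be differentiable at $0$. Hence the velocity of $\mu_\varepsilon$ is $\mathrm{d}\Phi_X(\dot{\tilde X}(0))$, without presupposing a smooth lift.
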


\begin{proof}
Because the support points $x_i$ are distinct, the Dirac masses $\delta_{x_i}$ have disjoint supports. An infinitesimal movement of $x_i$ along a velocity $v_i\in\mathbb{R}^d$ induces a first-order variation
\begin{equation}
\delta q_i 
\;=\; -\frac{1}{n}\,\mathrm{div}\bigl(\delta_{x_i}\, v_i\bigr).\nonumber
\end{equation}
Disjointness of supports yields linear independence (in the sense of distributions) of $\{\delta q_i\}_{i=1}^n$. Since $\dim((\mathbb{R}^d)^n)=nd$ matches $\dim(T_q\mathcal{M}_n)$, the map $\mathrm{d}\Phi_X$ has trivial kernel at non-degenerate configurations, hence it is an isomorphism.
\end{proof}

\begin{proposition}\label{pro:stationary}
Under Remark~\ref{cond:identity}, suppose $0<\tau<\infty$. Then $q$ is a stationary point of $F$ restricted to the submanifold $\mathcal{M}_n$.
\end{proposition}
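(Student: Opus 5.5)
The plan is to work entirely in configuration space and transfer stationarity to $\mathcal{M}_n$ via Lemma~\ref{lem:nondegenerate_tangent_mapping}. Define the pulled-back objective $G:=F\circ\Phi:(\mathbb{R}^d)^n\to\mathbb{R}$, $G(X)=S_\tau(p,\Phi(X))$. Since $0<\tau<\infty$, the entropic plans $\pi^\infty_{XY}$ and $\pi^\infty_{XX}$ are unique and the regularized objective is strictly convex in the plan over a compact, $X$-independent feasible set. The Danskin/envelope computation in the proof of Proposition~\ref{pro:sinkhorn_wgf} therefore applies and gives $G\in C^1$ with
\[
\nabla_{x_i}G(X)=\sum_j(\pi^\infty_{XY})_{ij}(x_i-y_j)-\sum_j(\pi^\infty_{XX})_{ij}(x_i-x_j)=-\tfrac1n V^\infty_{q,p}(x_i).
\]
For the self term I will use the symmetry of $\pi^\infty_{XX}$ to account for the factor $2$ cancelling the $\tfrac12$, exactly as in Step~2 of Proposition~\ref{prop:sinkhorn_wgf_general}. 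The stationarity hypothesis in Remark~\ref{cond:identity}, $\nabla_{x_i}F(X)=0$ for all $i$, is then equivalent to $V^\infty_{q,p}(x_i)=0$ for all $i$, i.e.\ $\nabla G(X)=0$.

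Next, take any smooth curve $\gamma(\varepsilon)$ in $\mathcal{M}_n$ with $\gamma(0)=q$. By the non-degeneracy assumption ($x_i$ pairwise distinct) and Lemma~\ref{lem:nondegenerate_tangent_mapping}, $\mathrm{d}\Phi_X$ is an isomorphism, so its tangent $\dot\gamma(0)\in T_q\mathcal{M}_n$ equals $\mathrm{d}\Phi_X(\delta X)$ for a unique $\delta X\in\mathbb{R}^{nd}$. Because the points are distinct, $\Phi$ is a local diffeomorphism near $X$: there is no permutation ambiguity locally, since the quotient by $\mathcal{S}_n$ acts freely there. Hence $\gamma$ lifts locally to a curve $X(\varepsilon)$ with $X(0)=X$ and $\dot X(0)=\delta X$. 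The chain rule then gives
\[
\tfrac{d}{d\varepsilon}F(\gamma(\varepsilon))\big|_{0}=\langle\nabla G(X),\delta X\rangle=0.
\]
Every directional derivative of $F|_{\mathcal{M}_n}$ at $q$ therefore vanishes, which is stationarity.

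The main obstacle is rigor in the differentiability step. Danskin requires uniqueness of the minimizer and continuous differentiability of the cost in $X$. I will verify both: the quadratic cost is smooth, and the entropy term makes the plan unique and continuous in $X$ via the implicit function theorem on the Sinkhorn dual potentials. The self-term, where $X$ enters both marginals' supports, also needs care. I would handle it by writing $\mathrm{OT}_\tau(q_X,q_X)=\min_{P}\Phi(P;X)$ with cost $c(x_i,x_j)$ and differentiating in each $x_i$ through both index slots, which yields the factor $2$.

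A secondary subtlety is making "stationary on $\mathcal{M}_n$" precise given the quotient structure. Invoking Lemma~\ref{lem:nondegenerate_tangent_mapping} to identify $T_q\mathcal{M}_n$ with $\mathbb{R}^{nd}$ settles this.
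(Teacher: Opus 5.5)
Your proposal is correct and follows essentially the same route as the paper. Both arguments pull $F$ back to configuration space, start from the vanishing particle gradient, and use non-degeneracy through Lemma~\ref{lem:nondegenerate_tangent_mapping} together with the chain rule to make every tangent direction of $\mathcal{M}_n$ at $q$ a zero directional derivative. Your Danskin computation identifying $\nabla_{x_i}G$ with $-\tfrac1n V^\infty_{q,p}(x_i)$ is not needed, since Remark~\ref{cond:identity} already assumes $\nabla_{x_i}F(X)=0$. Your curve-lifting step is a more intrinsic phrasing of the paper's pairing $\langle \frac{\delta F}{\delta q},\mathrm{d}\Phi_X(\delta X)\rangle$.
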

\begin{proof}
Let $q = \Phi(X) = \frac1n \sum_{i=1}^n \delta_{x_i}$ denote the embedding of particle configurations into the space of measures.
Define $E(X) := F(q_X)$ with $F(q) = S_\tau(q,p)$.

By the chain rule for functionals defined on manifolds, the gradient of $E$ satisfies
\begin{equation}
\langle \nabla_X E, \delta X \rangle_{\R^{nd}}
=
\left\langle 
\frac{\delta F}{\delta q},
\, \mathrm{d}\Phi_X(\delta X)
\right\rangle.
\label{eq:chain_rule_revised}
\end{equation}

Assume $\nabla_X E(X)=0$.
Then the left-hand side of \eqref{eq:chain_rule_revised} vanishes for all $\delta X \in \R^{nd}$.
Since $X$ is non-degenerate, Lemma~\ref{lem:nondegenerate_tangent_mapping} implies that 
$\mathrm{d}\Phi_X$ is an isomorphism onto the tangent space 
$T_q \mathcal{M}_n$.
Therefore for every $\delta q \in T_q \mathcal{M}_n$ there exists $\delta X$ such that
$\delta q = \mathrm{d}\Phi_X(\delta X)$.
Consequently,
\begin{equation}
\left\langle \frac{\delta F}{\delta q}, \delta q \right\rangle = 0,
\qquad
\forall\, \delta q \in T_q \mathcal{M}_n.\nonumber
\end{equation}

Hence $q$ is a stationary point of $F$ restricted to the \textbf{empirical manifold} $\mathcal{M}_n$, which is a sub-manifold of $\mathcal{P}_2(\mathbb{R}^d)$. 
\end{proof}

\subsection{Identity for $n=2$.}
We continue to discuss the case $\tau>0$ under the condition \eqref{cond:identity}. In this section, we will show if $n=2$, $V^\infty_{q,p}=0$ implies $p=q$. 

We first introduce some intermediate results: 

\begin{lemma}[Equal Means under $V^\infty_{q,p} = 0$]
\label{lem:mean}
If $V^\infty_{q,p}(X)=n\pi_{XY}Y-n\pi_{XX}X = 0$ 
then
$$
\bar{x} := \frac{1}{n}\sum_{i=1}^n x_i \;=\; \frac{1}{n}\sum_{j=1}^n y_j =: \bar{y}.
$$
\end{lemma}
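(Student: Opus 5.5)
The plan is to sum the zero-drift equation over all particles $i$ and exploit the column marginal constraints of the converged Sinkhorn couplings. Here $\pi_{XY}=\pi^\infty_{XY}$ and $\pi_{XX}=\pi^\infty_{XX}$ are the fully converged entropic OT plans with uniform marginals. Hence both are doubly stochastic up to the factor $1/n$:
\begin{equation*}
\pi_{XY}\mathbf 1_n=\tfrac1n\mathbf 1_n,\quad \pi_{XY}^\top\mathbf 1_n=\tfrac1n\mathbf 1_n,\quad \pi_{XX}\mathbf 1_n=\tfrac1n\mathbf 1_n,\quad \pi_{XX}^\top\mathbf 1_n=\tfrac1n\mathbf 1_n .
\end{equation*}
These are exactly the constraints defining $\Pi(\mathrm p^\alpha,\mathrm p^\beta)$ in the discrete entropic OT problem, and the converged plan lies in this set.

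Starting from the row-wise identity $\sum_j n(\pi_{XY})_{ij}y_j=\sum_j n(\pi_{XX})_{ij}x_j$ for every $i$, I would left-multiply the matrix form $n\pi_{XY}Y=n\pi_{XX}X$ by $\frac1n\mathbf 1_n^\top$. On the left this gives $\mathbf 1_n^\top\pi_{XY}Y=(\pi_{XY}^\top\mathbf 1_n)^\top Y=\frac1n\mathbf 1_n^\top Y=\bar y^\top$. The same computation with the column marginal of $\pi_{XX}$ gives $\bar x^\top$ on the right. Equating the two sides yields $\bar x=\bar y$.

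The only point needing care is that the column-sum identity holds for the converged plan, and not for truncated iterates with odd $l$. For truncated iterates the argument fails; indeed, the drift analogue ($l=1$) only has row normalization. I would note that the statement concerns $V^\infty$, so exact marginals apply. The existence and uniqueness of the entropic plan with these marginals is standard, as already used in the proof of Proposition~\ref{pro:sinkhorn_wgf}.
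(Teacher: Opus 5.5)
Your proposal is correct and follows the same route as the paper: average the zero-drift identity over $i$ and use the uniform column marginals $\tfrac1n\mathbf 1_n$ of the converged plans $\pi^\infty_{XY}$ and $\pi^\infty_{XX}$, which give $\mathbf 1_n^\top\pi_{XY}Y=\bar y^\top$ and $\mathbf 1_n^\top\pi_{XX}X=\bar x^\top$. Your remark that the argument needs exact column marginals, which odd truncation levels such as the $l=1$ drift do not enforce, is a useful clarification that the paper leaves implicit.
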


\begin{proof}
Since $\pi_{XY}$ has marginals $q_X=q=\frac{1}{n}\bold 1$ and $p_Y:=p=\frac{1}{n}\bold 1$, summing over $i$ gives
\begin{align}
&\frac{1}{n}\sum_{i=1}^n x_i=\frac{1}{n}\sum_{i,j}n(\pi_{XX})_{i,j}x_i\nonumber\\
&\frac{1}{n}\sum_{i}y_i=\frac{1}{n}\sum_{i,j}n(\pi_{XY})_{i,j}y_i\nonumber. 
\end{align}
Averaging the condition $(V_{q,p}^\infty)_i = 0$ over $i$ yields $\bar{y} = \bar{x}$.
\end{proof}

\begin{lemma}[Symmetric Sinkhorn Scalings]
\label{lem:symmetric_scaling}
Let $K = \begin{pmatrix} \kappa_1 & \kappa_2 \\ \kappa_2 & \kappa_1 \end{pmatrix}$
with $\kappa_1, \kappa_2 > 0$, and let $\pi^* = \mathrm{diag}(a)\, K\, \mathrm{diag}(b)$
be the unique optimal plan of
\[
  \min_{\pi \geq 0,\; \pi \mathbf{1} = \frac{1}{2}\mathbf{1},\; \pi^\top \mathbf{1} = \frac{1}{2}\mathbf{1}}
  \sum_{ij} \pi_{ij} c_{ij} - \tau H(\pi),
\]
where $c_{ij}=\|x_i-y_j\|^2 = -\tau \log K_{ij}$ and $a, b \in \mathbb{R}^2_{>0}$.
Suppose $c$ (and thus $K$) is symmetric, $c_{1,2}=c_{2,1},c_{1,1}=c_{2,2}$,  then $a_1 = a_2$ and $b_1 = b_2$. 
\end{lemma}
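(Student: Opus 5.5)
The plan is to combine uniqueness of the entropic optimal plan with the symmetry of $K$. Concretely, we exhibit an explicit scaling with $a_1=a_2$ and $b_1=b_2$ that already meets both marginal constraints. We then invoke uniqueness of the plan together with uniqueness of the scalings up to the gauge $(a,b)\mapsto(\lambda a,b/\lambda)$.

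\textbf{Step 1 (candidate).} Set $a=(s,s)$ and $b=(t,t)$ with $s,t>0$. Then $\mathrm{diag}(a)K\mathrm{diag}(b)=st\,K$. Its row sums and column sums all equal $st(\kappa_1+\kappa_2)$, because $K$ has equal row sums and equal column sums. Choosing $st=\frac{1}{2(\kappa_1+\kappa_2)}$ gives $\pi=K/(2(\kappa_1+\kappa_2))$, and this satisfies $\pi\mathbf 1=\pi^\top\mathbf 1=\frac12\mathbf 1$.

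\textbf{Step 2 (optimality).} The first-order conditions of the entropic problem characterize the minimizer as the unique feasible matrix of the form $\mathrm{diag}(a)K\mathrm{diag}(b)$. The candidate is feasible and has this form. By strict convexity of $-H$ on the feasible polytope, $\pi^*=K/(2(\kappa_1+\kappa_2))$.

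\textbf{Step 3 (scalings).} Suppose $\pi^*=\mathrm{diag}(a)K\mathrm{diag}(b)$ with $a,b>0$. Since $K$ has strictly positive entries, $a_ib_jK_{ij}=\pi^*_{ij}$ gives $a_ib_j=\frac{1}{2(\kappa_1+\kappa_2)}$ for all $i,j$. Comparing $(i,j)=(1,1)$ with $(2,1)$ yields $a_1=a_2$. Comparing $(1,1)$ with $(1,2)$ yields $b_1=b_2$. This holds for every representation, so the gauge ambiguity does not matter.

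The main obstacle is Step 2: we must justify that any positive feasible scaling of $K$ is the optimizer. This follows from the Lagrangian/KKT argument, since $\log\pi_{ij}=\log a_i+\log b_j-c_{ij}/\tau$ is exactly the stationarity condition of the convex problem. Alternatively, we can argue by symmetry. Swapping both indices simultaneously (conjugating by the transposition) preserves $c$ and the constraints, so the unique minimizer is invariant under the swap; this forces $\pi^*_{11}=\pi^*_{22}$ and $\pi^*_{12}=\pi^*_{21}$, which again pins down the scalings through Step 3.
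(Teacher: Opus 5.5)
Your proof is correct, but your main route differs from the paper's. The paper never computes $\pi^*$. It conjugates by the swap matrix, $\tilde\pi=P\pi^*P^\top$, and notes that $\tilde\pi$ is feasible with the same objective value (because $c_{\sigma(i)\sigma(j)}=c_{ij}$ and $H$ is permutation invariant). Uniqueness then gives $\tilde\pi=\pi^*$, which yields $a_1b_1=a_2b_2$ and $a_1b_2=a_2b_1$; dividing these gives $b_1^2=b_2^2$. This is essentially your ``alternatively'' paragraph.

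Your primary argument instead exhibits the optimizer explicitly, $\pi^*=K/(2(\kappa_1+\kappa_2))$, using that $K$ has constant row and column sums. You then certify optimality through the sufficiency of the scaling form. This buys two things. First, you get the explicit plan, which is exactly what the paper needs next in Proposition~\ref{pro:identity_n=2}; there it has to recover $\alpha=K_{11}/(K_{11}+K_{12})$ from $a_1=a_2$, $b_1=b_2$ and the row constraint. Second, your Step 3 becomes a one-line comparison of entries.

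The cost is the sufficiency claim in Step 2. It is cleaner to state it as an identity than via KKT, since the entropy is not differentiable where some $\pi_{ij}=0$. Let $\pi^\circ=\mathrm{diag}(a)K\mathrm{diag}(b)$ be feasible. Then for every feasible $\pi$ one has $\sum_{ij}\pi_{ij}c_{ij}-\tau H(\pi)=\tau\,\mathrm{KL}(\pi\,\|\,\pi^\circ)+\text{const}$, because $\sum_{ij}\pi_{ij}(\log a_i+\log b_j)$ depends only on the marginals. This gives optimality and uniqueness at once.

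The paper's argument needs only strict convexity and the swap invariance, without knowing the form of the optimizer. Its invariance step also carries over verbatim to any permutation symmetry of the cost.

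One small imprecision in your alternative: swap invariance alone does not give $a_ib_j$ constant, as your Step 3 uses. It gives only the two relations above, so you still need the paper's division step to conclude.
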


\begin{proof}
Let $P = \begin{pmatrix} 0 & 1 \\ 1 & 0 \end{pmatrix}$ denote the swap permutation matrix.
Define
\[
  \tilde{\pi} := P \pi^* P^\top,
\]
i.e.\ $\tilde{\pi}_{ij} = \pi^*_{\sigma(i)\sigma(j)}$ where $\sigma$ swaps $1 \leftrightarrow 2$.

\medskip
\noindent\textbf{Step 1: $\tilde{\pi}$ is feasible.}
Since $P$ is a permutation matrix, permuting rows and columns preserves marginals:
\[
  \tilde{\pi}\,\mathbf{1} = P\pi^* P^\top \mathbf{1} = P\pi^*\mathbf{1} = P\cdot\tfrac{1}{2}\mathbf{1} = \tfrac{1}{2}\mathbf{1},
\]
and similarly $\tilde{\pi}^\top \mathbf{1} = \tfrac{1}{2}\mathbf{1}$.
Hence $\tilde{\pi}$ is feasible.

\medskip
\noindent\textbf{Step 2: $\tilde{\pi}$ is optimal.}
The objective evaluated at $\tilde{\pi}$ satisfies
\[
  \sum_{ij}\tilde{\pi}_{ij}c_{ij} - \tau H(\tilde{\pi})
  = \sum_{ij}\pi^*_{\sigma(i)\sigma(j)}\,c_{ij} - \tau H(\pi^*).
\]
By the symmetry $K_{\sigma(i)\sigma(j)} = K_{ij}$, equivalently $c_{\sigma(i)\sigma(j)} = c_{ij}$, we have
\[
  \sum_{ij}\pi^*_{\sigma(i)\sigma(j)}\,c_{ij}
  = \sum_{ij}\pi^*_{\sigma(i)\sigma(j)}\,c_{\sigma(i)\sigma(j)}
  = \sum_{ij}\pi^*_{ij}\,c_{ij}.
\]
Therefore $\tilde{\pi}$ achieves the same objective value as $\pi^*$, so $\tilde{\pi}$ is also optimal.

\medskip
\noindent\textbf{Step 3: Uniqueness forces $\tilde{\pi} = \pi^*$.}
The entropic objective is \emph{strictly} convex in $\pi$, so the optimal plan is unique.
Since both $\tilde{\pi}$ and $\pi^*$ are optimal and feasible, we conclude
\[
  \tilde{\pi} = \pi^*, \qquad \text{i.e.,}\quad
  \pi^*_{\sigma(i)\sigma(j)} = \pi^*_{ij} \quad \forall\, i,j.
\]

\medskip
\noindent\textbf{Step 4: Conclude $a_1 = a_2$ and $b_1 = b_2$.}
Writing $\pi^*_{ij} = a_i K_{ij} b_j$ and using $\pi^*_{\sigma(i)\sigma(j)} = \pi^*_{ij}$:
\[
  a_{\sigma(i)} K_{\sigma(i)\sigma(j)} b_{\sigma(j)} = a_i K_{ij} b_j.
\]
Since $K_{\sigma(i)\sigma(j)} = K_{ij}$, this simplifies to
\[
  a_{\sigma(i)} b_{\sigma(j)} = a_i b_j \quad \forall\, i, j.
\]
Setting $(i,j) = (1,1)$: $a_2 b_2 = a_1 b_1$.
Setting $(i,j) = (1,2)$: $a_2 b_1 = a_1 b_2$.
Dividing these two equations:
\[
  \frac{b_2}{b_1} = \frac{b_1}{b_2} \implies b_1^2 = b_2^2 \implies b_1 = b_2,
\]
since $b_1, b_2 > 0$. Substituting back gives $a_1 = a_2$.
\end{proof}

\begin{proof}[Proof of Proposition \ref{pro:identity_n=2}]
By Lemma~\ref{lem:mean}, $\bar{x} = \bar{y}$; translate so that $\bar{x} = \bar{y} = 0$.
Write
$$
  x_1 = -r\hat{a},\quad x_2 = r\hat{a}, \qquad
  y_1 = -s\hat{b},\quad y_2 = s\hat{b},
$$
with $r, s > 0$ and unit vectors $\hat{a}, \hat{b} \in \mathbb{R}^d$.

\medskip
\noindent\textbf{Step 1: Sinkhorn solution.}
For $n=2$, both transport matrices are $2\times 2$ doubly stochastic.
The entropic plan is
$$
  \pi^{XY}_{11} = \pi^{XY}_{22} =: \frac{\alpha}{2}, \qquad
  \pi^{XY}_{12} = \pi^{XY}_{21} =: \frac{1-\alpha}{2}. 
$$
We have the sinkhorn solution admits form 
$$[a_i K_{ij}b_j]_{i,j}, K_{i,j}=e^{\frac{\|x_i-y_j\|^2}{\tau}}.$$
By symmetry
\begin{align}
\begin{cases}
\|x_1 - y_1\|^2=\|x_2-y_2\| = r^2 + s^2 - 2rs(\hat{a}\cdot\hat{b})\\
\|x_1 - y_2\|^2 =\|x_2-y_1\|= r^2 + s^2 + 2rs(\hat{a}\cdot\hat{b})    
\end{cases}\label{eq:c11_12}
\end{align}
we can apply lemma \ref{lem:symmetric_scaling} and obtain: 
$$a_1=a_2=:a>0,b_1=b_2=:b>0.$$ 
Thus we have: 
\begin{align}
\pi^{XY}_{11}+\pi^{XY}_{12}=ab(K_{11}+K_{12})=\frac{1}{2}\nonumber 
\end{align}
and it implies: 
\begin{align}
\alpha&:=2\pi_{11}^{XY}\nonumber\\
&=aK_{11}b\nonumber\\
&=\frac{K_{11}}{K_{11}+K_{12}}\nonumber\\
&=\frac{e^{-\|x_1-y_1\|^2/\tau}}{e^{-\|x_1-y_1\|^2/\tau}+e^{-\|x_1-y_2\|^2/\tau}}\nonumber \\
&=\frac{1}{1+e^{-4rs \hat{a}\cdot\hat{b}}}\nonumber &\text{by \eqref{eq:c11_12}}\nonumber\\
&=\sigma(\frac{4rs\hat{a}\cdot\hat{b}}{\tau})\nonumber 
\end{align}
Therefore 
\begin{align}
2\alpha-1= 2\alpha - 1 = \tanh\!\left(\frac{rs\,\hat{a}\cdot\hat{b}}{\tau}\right)\nonumber 
\end{align}
where $\sigma(t)=\frac{e^t}{1+e^t}$. 
Similalry, for the self-plan, we have: 
$$
\beta=\sigma(r^2/\tau), 2\beta - 1 = \tanh\!\left(\frac{r^2}{\tau}\right) > 0.
$$

\medskip
\noindent\textbf{Step 2: Condition $V^{\infty}_{q,p}(x_1) = 0$.}
The barycentric projections at $x_1 = -r\hat{a}$ are
\begin{align}
&T_{XY}(x_1):=2 \pi_{11}^{XY}y_1+ 2\pi_{12}^{XY}y_2= -(2\alpha-1)\,s\hat{b}\nonumber\\
&T_{XX}(x_1):=2\pi_{11}^{XX}x_1+2\pi_{12}^{XX}x_2 = -(2\beta-1)\,r\hat{a}.\nonumber 
\end{align}

Setting $V^{\infty}_{q,p}(x_1) = 0$:
\begin{equation}
  \label{eq:V1}
  (2\alpha - 1)\, s\, \hat{b} = (2\beta - 1)\, r\, \hat{a}.
\end{equation}
Since $2\beta - 1 > 0$ and $r > 0$, the right-hand side is a nonzero vector
parallel to $\hat{a}$, so $\hat{b} \parallel \hat{a}$, i.e.\ $\hat{b} = \pm\hat{a}$.

\medskip
\noindent\textbf{Case 2.1: Case $\hat{b} = \hat{a}$.}
Then $\hat{a}\cdot\hat{b} = 1$ and \eqref{eq:V1} reduces to
$$
  s\tanh\!\left(\frac{rs}{\tau}\right) = r\tanh\!\left(\frac{r^2}{\tau}\right).
$$
The function $f(t) = t\,\tanh(rt/\tau)$ satisfies
$$
  f'(t) = \tanh\!\left(\tfrac{rt}{\tau}\right)
  + \tfrac{rt}{\tau}\operatorname{sech}^2\!\!\left(\tfrac{rt}{\tau}\right) > 0,
$$
so $f$ is strictly increasing. Hence $f(s) = f(r)$ implies $s = r$,
giving $y_1 = x_1,\; y_2 = x_2$.

\medskip
\noindent\textbf{Step 2.2: Case $\hat{b} = -\hat{a}$.}
Then $\hat{a}\cdot\hat{b} = -1$, so $2\alpha - 1 = \tanh(-rs/\tau) < 0$. 
Equation~\eqref{eq:V1} becomes 
$$ -(2\alpha -1) s \hat{a}=(2\beta-1)r \hat{a}, 
$$
That is 
$$s \tanh (\frac{2rs}{\tau})=r \tanh (\frac{2r^2}{\tau}).$$
and by strictly monotonicity of function $t \tanh (\frac{2rt}{\tau})$ on $(0,\infty)$, we have $s=r$.  That is 
$$y_1=x_2,y_2=x_1. $$

In both cases, we have $X$ is a permutation of $Y$ and we complete the proof. 
\end{proof}

\begin{remark}
The assumption $x_1 \neq x_2$ (and $y_1 \neq y_2$) is essential.
If $x_1 = x_2$, the self-plan $\pi^{XX}$ is non-unique and one can
construct counterexamples with $p_X \neq p_Y$ satisfying $V^{\infty}_{q,p} = 0$.
\end{remark}

\section{Sample complexity of the Sinkhorn Drift Field}\label{sec:sample_complexity}
\subsection{Sample Complexity of the Sinkhorn Barycentric Projection}

We study the problem of estimating the optimal transport map $T_0$ between
two distributions $P$ and $Q$ over $\mathbb{R}^d$ from i.i.d.\ samples
$x_1, \ldots, x_n \sim p$ and $y_1, \ldots, y_n \sim q$, using an estimator
based on entropic regularization. Our analysis closely follows
\cite{pooladian2021entropic}, adapting their framework to our notation.

\paragraph{Notation and assumptions.}
Let $\tau > 0$ be a regularization parameter.
For two probability measures $p$ and $q$ on $\mathbb{R}^d$, denote by
$\pi^{q,p}$ the optimal solution to the entropically regularized optimal
transport problem $\mathrm{OT}_\tau(p,q)$.
We define the \emph{barycentric projection} of $\pi^{q,p}$ as
\begin{equation}
    T_\tau^{q,p}(x) := \mathbb{E}_{\pi^{q,p}}[Y \mid X = x]
    = \int y \, d(\pi^{q,p})(y|x),
    \label{eq:bary}
\end{equation}
where $(\pi^{q,p})(\cdot|x)$ denotes the conditional distribution of $Y$
given $X = x$ under $\pi^{q,p}$.
Let $\hat{p}_n := \frac{1}{n}\sum_{i=1}^n \delta_{y_i}$ and
$\hat{q}_n := \frac{1}{n}\sum_{i=1}^n \delta_{x_i}$ denote the empirical
measures of $p$ and $q$, respectively. Our estimator of $T_0$ is
$T_\tau^{\hat{p}_n, \hat{q}_n}$, i.e., the barycentric projection of the
optimal entropic plan between the two empirical measures.

\paragraph{Assumptions.} We work under the following regularity conditions, 
which are identical to Assumptions (A1)--(A3) in \cite{pooladian2021entropic}:
\begin{enumerate}[label=(A\arabic*)]
    \item \label{ass:A1} $p, q \in \mathcal{P}^{\mathrm{ac}}(\Omega)$ for a 
    compact set $\Omega \subset \mathbb{R}^d$, with densities $f_p$ and $f_q$ 
    satisfying $f_p(x), f_q(x) \leq M$ and $f_q(x) \geq m > 0$ 
    for all $x \in \Omega$.
    
    \item \label{ass:A2} $\phi_0 \in C^2(\Omega)$ and 
    $\phi^*_0 \in C^{\alpha+1}(\Omega)$ for some $\alpha > 1$, where 
    $\phi_0$ is the Brenier potential satisfying $T_0 = \nabla \phi_0$.
    
    \item \label{ass:A3} The Hessian of $\phi_0$ satisfies 
    $\mu I \preceq \nabla^2 \phi_0(x) \preceq L I$ for all $x \in \Omega$, 
    for some constants $\mu, L > 0$.
    \item\label{ass:A4}
    There exist constants $c_0, \tau_0 > 0$ such that for all $\tau \leq \tau_0$,
    \begin{equation}
        \nabla^2 g_\tau(y) \preceq (1-c_0)I \quad \forall\, y \in \Omega.\nonumber
    \end{equation}
\end{enumerate}

We first introduce the following trivial result: 
\begin{proposition}\label{pro:entropic_T_sample}
Under the assumption of \ref{ass:A1}-\ref{ass:A3}, we have
\[
\mathbb{E}\,\|T^{p,q}_0-T_\tau^{\hat{p}_n,\hat{q}_n}\|\lesssim \tau^{1-d/2}\log(n)\,n^{-1/2}+O(\tau).
\]
\end{proposition}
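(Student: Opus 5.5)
The plan is a triangle inequality that splits the error into a bias term and a statistical term:
\[
\|T_0^{p,q}-T_\tau^{\hat p_n,\hat q_n}\|\le \|T_0^{p,q}-T_\tau^{p,q}\|+\|T_\tau^{p,q}-T_\tau^{\hat p_n,\hat q_n}\|.
\]
Here $T_\tau^{p,q}$ is the population entropic barycentric projection. I would measure all norms in $L^2(q)$ (the norm on the source marginal), consistent with \cite{pooladian2021entropic}, and take expectations only of the second term, since the first is deterministic.

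\textbf{Step 1 (bias).} Bound the bias term by $O(\tau)$. Under \ref{ass:A1}--\ref{ass:A3} (and \ref{ass:A4} for small $\tau$), the Brenier potential is smooth and strongly convex, so the standard entropic-bias estimate of \cite{pooladian2021entropic} applies. It gives $\|T_\tau^{p,q}-T_0\|_{L^2(q)}^2\lesssim \tau^2 I_0(p,q)+\tau^{(\bar\alpha+1)/2}$, which is $O(\tau^2)$ up to smoothness-dependent terms. I would simply invoke that result (their Theorem~3 and Corollary~1 style bias bound) and take square roots to get $O(\tau)$. For $\tau\ge\tau_0$ the bound holds trivially by compactness of $\Omega$, after adjusting constants.

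\textbf{Step 2 (statistical error).} Control $\mathbb{E}\|T_\tau^{p,q}-T_\tau^{\hat p_n,\hat q_n}\|$. I would write both maps through their entropic dual potentials: $T_\tau(x)=\int y\,e^{(g_\tau(y)-\|x-y\|^2/2)/\tau}\,dp(y)\big/\!\int e^{(g_\tau(y)-\|x-y\|^2/2)/\tau}\,dp(y)$, and similarly with the empirical potentials $\hat g_\tau$ and $\hat p_n$. The difference then splits into
\begin{enumerate}
\item the error from replacing $p$ by $\hat p_n$ in the Gibbs average with $g_\tau$ fixed, which is an empirical process over a smooth bounded function class;
\item the error from replacing $g_\tau$ by $\hat g_\tau$, which is bounded by $\tau^{-1}\|g_\tau-\hat g_\tau\|_\infty$ times $\mathrm{diam}(\Omega)$, using Lipschitzness of the softmax.
\end{enumerate}
For both pieces I would use the sample-complexity bounds for entropic potentials (Genevay et al., Mena--Niles-Weed), with the $\tau^{-d/2}$ dependence coming from covering numbers of the $C^k$-bounded dual potentials on compact $\Omega$. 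The extra factor $\tau$ and the $\log n$ would come from the stability/strong-convexity argument of \cite{pooladian2021entropic} (their Proposition~4/Theorem~3). That yields $\tau^{1-d/2}\log(n)n^{-1/2}$ in expectation.

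The main obstacle is Step~2. One difficulty is getting the exact exponent $\tau^{1-d/2}$ rather than a worse $\tau^{-d/2}$ or $e^{1/\tau}$ dependence. The other is handling the fact that $T_\tau^{\hat p_n,\hat q_n}$ is only defined on the sample points, so it must first be extended out of sample via the entropic dual formula above. I would not redo this: I would rely on \cite{pooladian2021entropic}, checking that our notation ($p,q$ roles, cost $\tfrac12\|x-y\|^2$, and $\tau$ versus their $\varepsilon$) matches their hypotheses, and absorb any constant rescaling of $\tau$ into $\lesssim$. Summing Steps~1 and~2 gives the claim.
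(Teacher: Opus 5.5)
Your decomposition matches the paper's proof. You split $T_0-T_\tau^{\hat p_n,\hat q_n}$ into the entropic bias $T_0-T_\tau^{p,q}$ and the fluctuation $T_\tau^{p,q}-T_\tau^{\hat p_n,\hat q_n}$, and take both bounds from \cite{pooladian2021entropic}. The gap is in passing between squared and unsquared norms: every bound you cite controls $\|\cdot\|_{L^2}^2$, and as written both steps fail.

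In Step~1, the cited bias bound is $\tau^2 I_0(p,q)+\tau^{(\bar\alpha+1)/2}$ with $\bar\alpha=\alpha\wedge 3$. It is not $O(\tau^2)$ unless $\alpha\ge 3$. Its square root is of order $\tau\sqrt{I_0}+\tau^{(\bar\alpha+1)/4}$. Since (A2) only gives $\alpha>1$, the exponent $(\bar\alpha+1)/4$ ranges over $(1/2,1]$, so you do not get $O(\tau)$ for the unsquared bias.

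In Step~2, the two-sample result you defer to is $\mathbb{E}\|T_\tau^{p,q}-T_\tau^{\hat p_n,\hat q_n}\|^2\lesssim\tau^{1-d/2}\log(n)\,n^{-1/2}$. Jensen then gives only $\mathbb{E}\|T_\tau^{p,q}-T_\tau^{\hat p_n,\hat q_n}\|\lesssim(\tau^{1-d/2}\log(n)\,n^{-1/2})^{1/2}$. This exceeds the rate you claim whenever that rate is below one. Your dual-potential sketch does not close this either, since the softmax Lipschitz constant costs the factor $\tau^{-1}$ you yourself flag.

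The repair is to run the triangle inequality in squared form, $\mathbb{E}\|T_0-T_\tau^{\hat p_n,\hat q_n}\|^2\le 2\|T_0-T_\tau^{p,q}\|^2+2\,\mathbb{E}\|T_\tau^{p,q}-T_\tau^{\hat p_n,\hat q_n}\|^2$, which is exactly what the paper does.

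The squared bias is then $O(\tau)$ for every $\alpha>1$, because $(\bar\alpha+1)/2\ge 1$, provided you also justify that $I_0(p,q)$ is bounded. The paper assumes $\alpha\ge 2$ for this. Your phrase ``up to smoothness-dependent terms'' must be replaced by an explicit argument of this kind. The fluctuation term is then exactly the cited two-sample rate.

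What comes out is the bound for $\mathbb{E}\|T_0-T_\tau^{\hat p_n,\hat q_n}\|_{L^2}^2$, i.e.\ the statement read with a squared norm. That is what the paper's argument actually establishes, and it is the form used later for $\mathbb{E}\|V^\infty_{q,p}-V^\infty_{\hat q,\hat p}\|^2$. The literally unsquared version would need stronger inputs than the ones you cite.
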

\begin{proof}
By Theorem~5 and Corollary 1 in \cite{pooladian2021entropic} , we have
\begin{align}
&\mathbb{E}\,\|T^{q,\hat{p}}_0-T^{\hat{q},\hat{p}}_\tau\|^2_{L^2(p)}\lesssim \tau^{1-d/2}\log(n)\,n^{-1/2}.\nonumber\\
&\mathbb{E}\|T^{q,p}_\tau-T^{q,p}_0\|^2_{L^2(p)}\leq \tau^2 I_0(p,q)+\tau^{(\alpha\wedge 3+1)/2}=O(\tau)\nonumber 
\end{align}
where $I_0(p,q)$ is the Fisher information between $p,q$ along the Wasserstein geodesic. Under the assumptions \ref{ass:A1}-\ref{ass:A3} and if $\alpha\ge 2$, we have $I_0(p,q)\leq C$ for some constant $C>0$.

Combining this with the previous proposition yields
\begin{align}
&\mathbb{E}\left[\|T^{p,q}_0-T^{\hat{q},\hat{p}}_0\|_{L(p)}^2\right]\nonumber\\
&\lesssim \mathbb{E}\left[\|T_{q,p}-T_{q,\hat{p}}\|_{L(p)}^2\right]
+\mathbb{E}\left[\|T_{q,\hat{p}}-T_{\hat{q},\hat{p}}\|_{L(p)}^2\right]\nonumber\\
&\lesssim \tau^{1-d/2}\log(n)\,n^{-1/2}+O(\tau).\nonumber
\end{align}
\end{proof}

The above statement results in $\tau$ being sufficiently small. To relax this requirement, we will extend the proof techniques in \cite{pooladian2021entropic} into the new settings. 

\subsection{One-sample complexity}
\begin{proposition}[One-sample bound]
\label{prop:one-sample}
Under Assumptions \ref{ass:A1}--\ref{ass:A4}, for $\tau \leq \tau_0$ 
where $\tau_0 > 0$ is a sufficiently small constant, the entropic map 
$T_\tau^{p, \hat{q}_n}$ satisfies
\begin{equation}
    \mathbb{E}\left\|T_\tau^{q,\hat{p}_n}(x) - T_\tau^{q,p}(x)\right\|^2_{L^2(p)} 
    \lesssim 
    \tau^{1-d/2} \log(n)\, n^{-1/2},
\end{equation}
where the expectation is taken over the samples $y_1, \ldots, y_n \sim p$.
\end{proposition}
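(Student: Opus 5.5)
Following \cite{pooladian2021entropic}, we would prove the one-sample bound by comparing the entropic dual potentials for $(q,p)$ and $(q,\hat p_n)$, and then converting a bound on the potentials into a bound on the barycentric maps. Write the optimal entropic plan between $q$ and $p$ in dual form, $d\pi^{q,p}(x,y)=\exp\big((f_\tau(x)+g_\tau(y)-c(x,y))/\tau\big)\,dq(x)\,dp(y)$. Then $T^{q,p}_\tau(x)=\int y\,e^{(f_\tau(x)+g_\tau(y)-c(x,y))/\tau}\,dp(y)$, and $T^{q,\hat p_n}_\tau$ has the same form with $p$ replaced by $\hat p_n$ and with the empirical potentials $(\hat f,\hat g)$. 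Extend $\hat g$ off the sample points through the Schrödinger system (its $c$-transform smoothing), so that both maps are defined on all of $\Omega$.

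\textbf{Step 1: reduce the map error to a dual suboptimality gap.} For the quadratic cost, $-\nabla f_\tau$ (up to the convention $x-T$) coincides with the barycentric projection, so $T_\tau^{q,p}(x)=x-\nabla f_\tau(x)$ and likewise for the empirical map. Hence $\|T^{q,\hat p_n}_\tau-T^{q,p}_\tau\|_{L^2}=\|\nabla \hat f-\nabla f_\tau\|_{L^2}$. We would then use the strong concavity of the semi-dual entropic objective $\Phi_p(g)=\int g\,dp+\int g^{c,\tau}\,dq$ around its maximizer. Assumption \ref{ass:A4} gives $\nabla^2 g_\tau\preceq(1-c_0)I$, and the densities are bounded below by \ref{ass:A1}. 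Together these should give a quadratic-growth inequality of the form
\[
\|\nabla \hat f-\nabla f_\tau\|^2_{L^2(q)}\lesssim \tau^{-1}\big(\Phi_p(g_\tau)-\Phi_p(\hat g)\big)+(\text{lower order}).
\]
This is the analogue of Proposition 3 and Lemma 4 in \cite{pooladian2021entropic}.

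\textbf{Step 2: bound the gap by an empirical process.} Since $\hat g$ maximizes $\Phi_{\hat p_n}$, we have
\[
\Phi_p(g_\tau)-\Phi_p(\hat g)\le (\Phi_p-\Phi_{\hat p_n})(g_\tau)-(\Phi_p-\Phi_{\hat p_n})(\hat g)=\int (g_\tau-\hat g)\,d(p-\hat p_n).
\]
The class of entropic potentials on the compact set $\Omega$ is uniformly bounded and $C^k$ with derivative bounds of order $\tau^{1-k}$. Its uniform entropy therefore yields $\mathbb E\sup_g\big|\int g\,d(p-\hat p_n)\big|\lesssim \tau^{1-d/2}\log(n)\,n^{-1/2}$ (after choosing $k\approx d/2$ and chaining), which is the same computation as in Theorem 5 of that paper. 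Combining this with Step 1 and absorbing the $\tau^{-1}$ factor into the exponent bookkeeping gives the stated rate.

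\textbf{Main obstacle.} The hardest step is Step 1. Pooladian--Niles-Weed obtain the quadratic growth from the regularity of the unregularized Brenier potential \ref{ass:A2}--\ref{ass:A3}, in the small-$\tau$ limit. Here we need it around the entropic potential $g_\tau$ itself, and this is exactly where \ref{ass:A4} must be used. We would first try a Taylor expansion of the log-partition $g^{c,\tau}$ along the segment $g_\tau+t(\hat g-g_\tau)$. The second variation equals $\tau^{-1}$ times a conditional variance, and \ref{ass:A4} bounds it below in terms of $\|\nabla(\hat g-g_\tau)\|^2$, via a Brascamp--Lieb/Poincaré inequality for the log-concave conditionals. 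If that route fails, we would fall back to the stability estimate of \cite{pooladian2021entropic}, which is valid for $\tau\le\tau_0$, consistent with the hypothesis of Proposition~\ref{prop:one-sample}.
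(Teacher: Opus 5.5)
\textbf{There is a genuine gap in Step 1, where the whole content of the proposition lives: the mechanism you propose runs in the wrong direction.}

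The second variation of the semi-dual is $-\tau^{-1}$ times an averaged conditional variance. Quadratic growth therefore needs a \emph{lower} bound $\mathrm{Var}_{\pi(\cdot\mid x)}(u)\gtrsim\tau\,\mathbb{E}\|\nabla u\|^2$ on the perturbation $u$. Brascamp--Lieb and Poincar\'e give the opposite inequality. Under (A4) the conditional $\pi^{q,p}(\cdot\mid x)$ is $c_0/(2\tau)$-strongly log-concave, so $\mathrm{Var}(u)\le (2\tau/c_0)\,\mathbb{E}\|\nabla u\|^2$. Thus (A4) makes the semi-dual \emph{smooth} in $\dot H^1$, not strongly concave. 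The reverse inequality fails for perturbations oscillating below the $\sqrt\tau$ scale, and the generic $O(\tau^{-1})$ Hessian bound on entropic potentials does not exclude them.

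There are further mismatches:
\begin{itemize}
\item (A4) controls conditionals of $y$ given $x$, i.e.\ the Hessian of the semi-dual in $g$. Any bound obtained there concerns $\nabla(\hat g-g_\tau)$, the error of the backward map $y\mapsto\mathbb{E}[x\mid y]$, not $\nabla(\hat f-f_\tau)$.
\item Your Step 2 identity $(\Phi_p-\Phi_{\hat p_n})(g)=\int g\,d(p-\hat p_n)$ is false for that semi-dual, because $g^{c,\tau}$ is computed against $p$, resp.\ $\hat p_n$. It is exact only for the semi-dual in $f$, whose soft $c$-transform is taken against the fixed $q$. That semi-dual's Hessian involves conditionals of $x$ given $y$, which (A4) does not control.
\item The fallback does not help. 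The Pooladian--Niles-Weed stability estimate compares to the Brenier map $T_0$ via (A2)--(A3) and carries an $O(\tau)$ bias. That is Proposition~\ref{pro:entropic_T_sample}, which this proposition is meant to improve.
\item Even granting Step 1, your constants give $\tau^{-1}\cdot\tau^{1-d/2}=\tau^{-d/2}$. A factor that blows up as $\tau\to0$ cannot be ``absorbed''.
\end{itemize}

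\textbf{The missing idea} is to bypass strong concavity of the dual entirely, and to use log-concavity only through concentration, the direction it actually supports. The paper's argument runs as follows.
\begin{itemize}
\item Write $\|T^{q,\hat p_n}_\tau-T^{q,p}_\tau\|^2_{L^2(q)}=4a\sup_h\iint\chi\,d\pi^{q,\hat p_n}$, with $\chi=h(x)^\top(y-T^{q,p}_\tau(x))-a\|h(x)\|^2$.
\item Plug $\eta=\tau\chi+f_\tau+g_\tau$ into the variational dual of $\mathrm{OT}_\tau(q,\hat p_n)$ (Proposition 1 of Pooladian--Niles-Weed). This bounds $\iint\chi\,d\pi^{q,\hat p_n}$ by $\iint e^{\chi}\tilde\pi^{q,p}\,dq\,d\hat p_n-1+\Delta_1+\Delta_2$.
\item The population part satisfies $\iint e^{\chi}\tilde\pi^{q,p}\,dq\,dp\le 1$, because $\pi^{q,p}(\cdot\mid x)$ has mean $T^{q,p}_\tau(x)$ and is sub-Gaussian. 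This is an \emph{upper} moment-generating-function bound, so no reverse inequality is ever needed.
\item What remains are empirical-process terms. $\Delta_1$ and $\Delta_2$ are taken from Pooladian--Niles-Weed. $\Delta_3$, the price of evaluating the population density at the sample points, is bounded via $\tilde\pi^{q,p}\lesssim\tau^{-d/2}$ (from (A4)) plus chaining.
\item The factor $\tau$ in the rate comes from $4a$ with $a\asymp\tau$, which cancels the $\tau^{-1}$ in $\Delta_1,\Delta_2$.
\end{itemize}

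One caution about the paper's own argument. Its Hoeffding step uses only bounded support, which requires $a\ge C_\Omega^2/2$. Taking $a\asymp\tau$ legitimately needs $\pi^{q,p}(\cdot\mid x)$ to be sub-Gaussian with proxy $\asymp\tau/c_0$. Strong log-concavity from (A4) gives exactly that, via Bakry--\'Emery and Herbst. That is where your log-concavity idea belongs.
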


\begin{proof}
\textbf{Step 1: Reduction to a variational problem.}

We begin by expressing the squared $L^2(q)$ norm via a variational 
representation. For any $a > 0$ and vectors $u, v \in \mathbb{R}^d$, 
the following algebraic identity holds:
\begin{equation}
    \|v\|^2 = \sup_{u \in \mathbb{R}^d} 
    \left[ 4a\, u^\top v - 4a^2 \|u\|^2 \right],\forall a>0,
    \label{eq:var-rep}
\end{equation}
where the supremum is attained at $u^* = v/(2a)$. Applying this 
identity point-wise with $v = T_\tau^{q,\hat{p}_n}(x) - T_\tau^{q,p}(x)$ 
and integrating over $q$, we obtain
\begin{equation}
    \left\|T_\tau^{q,\hat{p}_n} - T_\tau^{q,p}\right\|^2_{L^2(q)} 
    = \sup_{h:\mathbb{R}^d \to \mathbb{R}^d} 
    4a \int h(x)^\top 
    \left(T_\tau^{q,\hat{p}_n}(x) - T_\tau^{q,p}(x)\right) dp(x) 
    - 4a^2\|h\|^2_{L^2(q)}.
    \label{eq:var-L2}
\end{equation}
Since $\pi^{q,\hat{p}_n}$ has first marginal $q$ and satisfies 
$T_\tau^{q,\hat{p}_n}(x) = \int y\, d(\pi^{q,\hat{p}_n})(y|x)$, 
we can write
\begin{align}
    \int h(x)^\top 
    \left(T_\tau^{q,\hat{p}_n}(x) - T_\tau^{q,p}(x)\right) dq(x)
    &= \iint h(x)^\top (y - T_\tau^{q,p}(x))\, 
    d\pi^{q,\hat{p}_n}(x,y).
    \label{eq:marginal}
\end{align}
We now define the test function
\begin{equation}
    \chi(x,y) := h(x)^\top(y - T_\tau^{q,p}(x)) - a\|h(x)\|^2,
    \label{eq:chi}
\end{equation}
so that \eqref{eq:marginal} gives
\begin{equation}
    \iint h(x)^\top(y - T_\tau^{q,p}(x))\, d\pi^{q,\hat{p}_n}(x,y)
    = \iint \chi\, d\pi^{q,\hat{p}_n} + a\|h\|^2_{L^2(q)}.
\end{equation}
Substituting back into \eqref{eq:var-L2} and simplifying, we arrive at
\begin{equation}
    \boxed{
    \left\|T_\tau^{q,\hat{p}_n} - T_\tau^{q,p}\right\|^2_{L^2(q)} 
    = 4a \sup_{h:\mathbb{R}^d \to \mathbb{R}^d} 
    \iint \chi(x,y)\, d\pi^{q,\hat{p}_n}(x,y).
    }
    \label{eq:step1-final}
\end{equation}
It therefore suffices to find an upper bound for 
$\sup_h \iint \chi\, d\pi^{q,\hat{p}_n}$.

\textbf{Step 2: Upper bound via entropic duality.}

Let $(f_\tau, g_\tau)$ be the optimal entropic potentials for $(q, p)$,
normalized to satisfy the dual optimality conditions
\begin{equation}
    \int e^{(f_\tau(x) + g_\tau(y) - \frac{1}{2}\|x-y\|^2)/\tau}
    \, dp(y) = 1 \quad \forall x \in \mathbb{R}^d,
    \label{eq:dual-opt-x}
\end{equation}
\begin{equation}
    \int e^{(f_\tau(x) + g_\tau(y) - \frac{1}{2}\|x-y\|^2)/\tau}
    \, dq(x) = 1 \quad \forall y \in \mathbb{R}^d.
    \label{eq:dual-opt-y}
\end{equation}
The $q \otimes p$ density of $\pi^{q,p}$ is then given by
\begin{equation}
    \tilde{\pi}^{q,p}(x,y) := 
    e^{(f_\tau(x) + g_\tau(y) - \frac{1}{2}\|x-y\|^2)/\tau}.
    \label{eq:density}
\end{equation}
We apply Proposition 1 of \cite{pooladian2021entropic} to  $\pi^{q,\hat{p}_n}$ with the choice
\begin{equation}
    \eta(x,y) := \tau\chi(x,y) + f_\tau(x) + g_\tau(y),\nonumber 
\end{equation}

\begin{align}
OT_\tau(q,\hat{p}_n)&=\sup_{\eta\in L^1(\pi^{q,\hat{p}_n})} \int \eta d\pi^{q,\hat{p}_n}-\tau \iint e^{(\eta(x,y)-1/2\|x-y\|^2)/\tau}d\hat{p}_n(y)dq(x)+\tau \nonumber\\
&\ge \tau\iint \chi\, d\pi^{q,\hat{p}_n} 
    + \int f_\tau\, dq + \int g_\tau\, d\hat{p}_n
    - \tau\iint e^{\chi}\tilde{\pi}^{q,p}\, dq\, d\hat{p}_n 
    + \tau\nonumber\\
&=\tau \iint \chi d\pi^{q,\hat{p}_n}+OT_\tau(q,p)+\int g_\tau (d\hat{p}_n-dp)+\tau \label{eq:prop1-applied}
\end{align}
where the last line follows from the dual identity $\mathrm{OT}_\tau(q,p) = \int f_\tau\, dq 
+ \int g_\tau\, dp$. 
Dividing \eqref{eq:prop1-applied} by $\tau$ and rearranging yields
\begin{align}
\iint \chi\, d\pi^{q,\hat{p}_n} 
    \leq 
    &\underbrace{\iint e^{\chi}\tilde{\pi}^{q,p}\, dq\, d\hat{p}_n - 1}
    _{\text{Term A}}
    + \underbrace{\tau^{-1}
    \left[\mathrm{OT}_\tau(q,\hat{p}_n) - \mathrm{OT}_\tau(q,p)\right]}
    _{\Delta_1}\nonumber\\
    &+ \underbrace{\tau^{-1}\int g_\tau\, d(p - \hat{p}_n)}
    _{\Delta_2}.
    \label{eq:step2-bound}
\end{align}

\textbf{Step 3: Decomposition and bound of Term A.}

We split Term A by writing $d\hat{p}_n = dp + d(\hat{p}_n - p)$:
\begin{equation}
    \iint e^{\chi}\tilde{\pi}^{q,p}\, dq\, d\hat{p}_n - 1
    = \underbrace{\iint e^{\chi}\tilde{\pi}^{q,p}\, dq\, dp - 1}
    _{\leq\, 0}
    + \underbrace{\iint e^{\chi}\tilde{\pi}^{q,p}\, dq\, d(\hat{p}_n - p)}
    _{\Delta_3}.
    \label{eq:termA-split}
\end{equation}

\textbf{The first term is non-positive.} 
For fixed $x$, the conditional measure 
$\tilde{\pi}^{q,p}(x,\cdot)\,dp(\cdot)$ is a probability measure on 
$\mathbb{R}^d$ by \eqref{eq:dual-opt-x}, with conditional mean 
$T_\tau^{q,p}(x)$. Therefore:
\begin{equation}
    \int (y - T_\tau^{q,p}(x))\, \tilde{\pi}^{q,p}(x,y)\, dp(y) = 0 
    \quad \forall x.
    \label{eq:zero-mean}
\end{equation}
Since $\Omega$ is compact, we have 
$|h(x)^\top(y - T_\tau^{q,p}(x))| \leq C\|h(x)\|$ 
for all $y \in \Omega$ and some constant $C > 0$. 
Combined with \eqref{eq:zero-mean}, Hoeffding's inequality implies 
that for $a \geq C^2/2$:
\begin{align}
&\int e^\chi \tilde{\pi}^{q,p}dqdp \nonumber\\
&=\int e^{h(x)^\top(y - T_\tau^{q,p}(x)) - a\|h(x)\|^2}\,
    \tilde{\pi}^{q,p}(x,y)\, dp(y)\nonumber\\
&\leq \mathbb{E}_{x\sim q}\left[\mathbb{E}_{y\sim p}[e^{h^\top (y-T_\tau^{q,p})(x)}|x]e^{-a\|h(x)\|^2}\right]\nonumber\\
&\leq \mathbb{E}_{x\sim q}\left[e^{(C^2-2a)\|h(x)\|^2/2}\right]\nonumber\\
&\leq 1
\label{eq:hoeffding}
\end{align}

Therefore Term A is bounded above by $\Delta_3$ alone:
\begin{equation}
    \text{Term A} \leq \Delta_3 
    = \iint e^{\chi(x,y)}\tilde{\pi}^{q,p}(x,y)\, dq(x)\, 
    d(\hat{p}_n - p)(y).
    \label{eq:termA-bound}
\end{equation}

\textbf{Bounding $\Delta_3$ in expectation.}
By the Propostition \ref{pro:delta3}, under the assumptions \ref{ass:A1}-\ref{ass:A4}, we have: 
\begin{equation}
    \mathbb{E}|\Delta_3| \lesssim \tau^{-d/2}\, n^{-1/2}.
    \label{eq:delta3-bound}
\end{equation}

\textbf{Step 4: Bounding $\Delta_1$ and $\Delta_2$.}

\textbf{Bounding $\Delta_1$.}
Recall that 
\begin{equation}
    \Delta_1 = \tau^{-1}\left[\mathrm{OT}_\tau(q,\hat{p}_n) 
    - \mathrm{OT}_\tau(q,p)\right].
\end{equation}
This term measures the fluctuation of the entropic OT value when 
the target $p$ is replaced by its empirical measure $\hat{p}_n$. 
By Corollary 3 of \cite{pooladian2021entropic}, we have: 
\begin{equation}
    \mathbb{E}\Delta_1 
    \lesssim (\tau^{-1} + \tau^{-d/2})\log(n)\,n^{-1/2}.
    \label{eq:delta1-bound}
\end{equation}

\textbf{Bounding $\Delta_2$.}
Recall that 
\begin{equation}
    \Delta_2 = \tau^{-1}\int g_\tau\, d(p - \hat{p}_n),\nonumber
\end{equation}
where $g_\tau$ is the optimal entropic potential for 
$\mathrm{OT}_\tau(q,p)$. This term is an empirical process 
indexed by the single function $g_\tau$. By Lemmas 7 and 8 of 
\cite{pooladian2021entropic}, with the same correspondence as above:
\begin{equation}
    \mathbb{E}|\Delta_2| 
    \lesssim (\tau^{-1} + \tau^{-d/2})\log(n)\,n^{-1/2}.
    \label{eq:delta2-bound}
\end{equation}

\textbf{Combining all terms.}
From \eqref{eq:termA-bound}, \eqref{eq:delta1-bound}, 
\eqref{eq:delta2-bound}:
\begin{equation}
    \mathbb{E}\sup_h\iint \chi\, d\pi^{q,\hat{p}_n}
    \leq \mathbb{E}\Delta_1 + \mathbb{E}|\Delta_2| 
    + \mathbb{E}|\Delta_3|
    \lesssim (\tau^{-1} + \tau^{-d/2})\log(n)\,n^{-1/2}.
    \label{eq:all-terms}
\end{equation}
Substituting back into \eqref{eq:step1-final} with $a = C\tau$ 
for a sufficiently large constant $C \geq L$:
\begin{align}
    \mathbb{E}\left\|T_\tau^{q,\hat{p}_n} - T_\tau^{q,p}
    \right\|^2_{L^2(q)}
    &= 4a\,\mathbb{E}\sup_h\iint\chi\,d\pi^{q,\hat{p}_n}\nonumber \\
    &\lesssim \tau \cdot 
    (\tau^{-1} + \tau^{-d/2})\log(n)\,n^{-1/2}\nonumber\\
    &= (1 + \tau^{1-d/2})\log(n)\,n^{-1/2}\nonumber\\
    &\lesssim \tau^{1-d/2}\log(n)\,n^{-1/2},\nonumber
    \label{eq:final-bound}
\end{align}
where the last step uses $\tau \leq \tau_0 \leq 1$, 
so $1 \leq \tau^{1-d/2}$ for $d \geq 2$. 
This completes the proof of Proposition \ref{prop:one-sample}.
\qed
\end{proof}

\begin{proposition}[$\Delta_3$ bound]
\label{pro:delta3}
Under Assumptions~\ref{ass:A1}--\ref{ass:A4}, take $a \geq C_\Omega^2/2$ 
where $C_\Omega = \mathrm{diam}(\Omega)$. For $\tau \leq \tau_0$,
\begin{equation}
    \mathbb{E}\sup_{h:\mathbb{R}^d\to\mathbb{R}^d}|\Delta_3(h)| 
    \lesssim \tau^{-d/2}\,n^{-1/2},
\end{equation}
where 
$\Delta_3(h) := \iint e^{\chi(x,y)}\tilde{\pi}^{q,p}(x,y)\,dq(x)\,d(\hat{p}_n-p)(y)$
and $\chi(x,y) = h(x)^\top(y - T_\tau^{q,p}(x)) - a\|h(x)\|^2$.
\end{proposition}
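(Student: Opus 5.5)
The plan is to reduce $\Delta_3(h)$ to an empirical process over a \emph{fixed} class of functions of $y$ alone, so that the supremum over $h$ can be handled uniformly. Write
\[
\Delta_3(h)=\int G_h(y)\,d(\hat p_n-p)(y),\qquad G_h(y):=\int e^{\chi(x,y)}\tilde\pi^{q,p}(x,y)\,dq(x).
\]
By the dual optimality condition \eqref{eq:dual-opt-y}, $\int \tilde\pi^{q,p}(x,y)\,dq(x)=1$ for every $y$. The same Hoeffding-type computation used in \eqref{eq:hoeffding} applies because $|h(x)^\top(y-T^{q,p}_\tau(x))|\le C_\Omega\|h(x)\|$ on $\Omega$ and $a\ge C_\Omega^2/2$. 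It gives the pointwise bound $e^{\chi(x,y)}\le e^{C_\Omega\|h\|-a\|h\|^2}\le e^{C_\Omega^2/(4a)}\le e^{1/2}$. Hence $0\le G_h(y)\le e^{1/2}$ uniformly in $h$ and $y$. So every $G_h$ lies in a uniformly bounded class, and the task is to control the complexity of $\{G_h\}$.

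The key structural step is to factor out the $y$-dependence. Using \eqref{eq:density},
\[
G_h(y)=e^{g_\tau(y)/\tau}\int e^{\,h(x)^\top y/\tau'\!-\ldots}\;\cdots\,dq(x).
\]
More precisely, $G_h(y)=\int \exp\big(h(x)^\top y+\tfrac{1}{\tau}(f_\tau(x)+g_\tau(y)-\tfrac12\|x-y\|^2)+\psi_h(x)\big)\,dq(x)$ with $\psi_h(x)=-h(x)^\top T_\tau^{q,p}(x)-a\|h(x)\|^2$. Every $G_h$ is therefore a mixture, over $x\sim q$ with nonnegative weights, of functions of the form $y\mapsto \exp(g_\tau(y)/\tau+\langle x/\tau+h(x),y\rangle-\|y\|^2/(2\tau))$.

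I would follow \cite{pooladian2021entropic} (their Lemmas 7--8) and show that all these functions lie, up to a uniform constant factor, in a ball of a Sobolev/H\"older space $C^s(\Omega)$ with $s>d/2$. The norm of this ball should scale like $\tau^{-s}$, since derivatives of $g_\tau/\tau$ and of the Gaussian factor cost $\tau^{-1}$ each. One has to verify that the growth in $h$ is killed by the $-a\|h\|^2$ term; this is where $a$ large relative to $C_\Omega$ enters. Convexity of the ball then passes the bound to the mixture $G_h$.

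Given this, the standard bound for the expected supremum of an empirical process over a bounded $C^s$ ball in dimension $d$ with $s>d/2$ yields $\mathbb E\sup_h|\Delta_3(h)|\lesssim \tau^{-d/2}n^{-1/2}$. This rests on a Dudley entropy integral, or equivalently on an RKHS/Sobolev embedding argument in the style of Genevay et al.; the $\tau^{-d/2}$ factor arises from rescaling the ball radius $\tau^{-s}$ with $s\approx d/2$. Assumption \ref{ass:A4} is used to keep $g_\tau$ smooth with $\tau$-uniform control of its derivatives, which is what gives $\|g_\tau/\tau\|_{C^k}\lesssim\tau^{1-k}$.

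The main obstacle is this derivative bound uniformly over arbitrary measurable $h$. The $y$-derivatives of $\exp(h(x)^\top y)$ bring down powers of $\|h(x)\|$, which the factor $e^{-a\|h\|^2}$ must absorb. Sup-bounding $\|h\|^k e^{C_\Omega\|h\|-a\|h\|^2}$ by a constant depending only on $k$, $a$ and $C_\Omega$ should handle this. If that constant forces $a$ to grow, I would instead take $a$ proportional to a fixed multiple of $C_\Omega^2$ and accept $k$-dependent constants.
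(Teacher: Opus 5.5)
You integrate out $x$ first and use $\int\tilde\pi^{q,p}(x,y)\,dq(x)=1$. That envelope step is correct and the right instinct: it gives $0\le G_h\le e^{1/2}$ uniformly in $h$.

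\textbf{The gap is in the smoothness step.} Bounding the mixture components and invoking convexity of a $C^s$-ball does not give radius $\tau^{-s}$.
\begin{itemize}
\item The components as you wrote them, $y\mapsto\exp(g_\tau(y)/\tau+\langle x/\tau+h(x),y\rangle-\|y\|^2/(2\tau))$, differ from one another by $x$-dependent factors of size $e^{O(1/\tau)}$. There is no uniform constant. The mixing weights $e^{f_\tau(x)/\tau-\|x\|^2/(2\tau)+\psi_h(x)}\,dq(x)$ are also not a sub-probability, so convexity does not apply.
\item If you normalize so that the mixing measure is $q$, the components are $e^{\chi(x,\cdot)}\tilde\pi^{q,p}(x,\cdot)$. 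These are bumps of height of order $\tau^{-d/2}$, because the conditional law $\tilde\pi^{q,p}(x,y)\,dp(y)$ concentrates at scale $\sqrt\tau$ around $T^{q,p}_\tau(x)$. Their supremum is exactly the constant $K$ the paper bounds in Part 1 of its proof.
\item Even granting $K\lesssim\tau^{-d/2}$, the generic potential estimates control each component in $C^s$ only by $\tau^{-d/2-s}$. Convexity then places $G_h$ in a ball of that radius, and Dudley returns $\tau^{-d/2-d^2/(4s)}n^{-1/2}$, not the claimed rate.
\end{itemize}
The $\|h\|$-growth you flag as the main obstacle is the easy part; this $\tau^{-d/2}$ height is the real issue.

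\textbf{The missing idea is to treat every derivative the way you treated the envelope:} integrate in $x$ before taking the supremum in $y$.
\begin{itemize}
\item Write $\Lambda:=\chi+\log\tilde\pi^{q,p}$. Then $\partial_y^\beta(e^{\chi}\tilde\pi^{q,p})=e^{\chi}\tilde\pi^{q,p}P_\beta$, with $P_\beta$ a polynomial in the $y$-derivatives of $\Lambda$.
\item These derivatives are $\nabla_y\Lambda=h(x)+(\nabla g_\tau(y)-(y-x))/\tau$ and, for $j\ge2$, $\nabla_y^j\Lambda=\nabla^j_y(g_\tau(y)-\tfrac12\|x-y\|^2)/\tau$. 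Hence $|P_\beta|\lesssim(1+\|h(x)\|)^{|\beta|}\tau^{-|\beta|}$ for $\tau\le 1$.
\item The powers of $\|h(x)\|$ are absorbed by $e^{\chi}\le e^{C_\Omega\|h(x)\|-a\|h(x)\|^2}$. Integrating against $q$ with $\int\tilde\pi^{q,p}(x,y)\,dq(x)=1$ then gives $\sup_h\|G_h\|_{C^k}\lesssim_k\tau^{-k}$.
\item The input is the standard estimate $\|g_\tau\|_{C^j}\lesssim 1+\tau^{1-j}$ for entropic potentials (Genevay et al.\ 2019; Mena--Niles-Weed 2019). It comes from compact support and a smooth cost, not from (A4), which is only a one-sided Hessian bound. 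It gives $\|g_\tau/\tau\|_{C^j}\lesssim\tau^{-j}$, not $\tau^{1-j}$ as you wrote.
\item Use Dudley with the Kolmogorov--Tikhomirov entropy $\log N(\delta)\lesssim(\tau^{-k}/\delta)^{d/k}$, cut off at the envelope $2e^{1/2}$. This yields exactly $\tau^{-d/2}n^{-1/2}$ for any fixed integer $k>d/2$; you do not need $s\approx d/2$.
\item The RKHS/Sobolev-ball bound is \emph{not} equivalent: it gives only $\tau^{-k}n^{-1/2}$ with $k>d/2$.
\end{itemize}

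\textbf{Comparison with the paper.} Once repaired, your proof is genuinely different from the paper's.
\begin{itemize}
\item The paper applies Fubini and moves $\sup_h$ inside the $x$-integral. For each fixed $x$ it chains over the $d$-parameter class $\{e^{v^\top(\cdot-T^{q,p}_\tau(x))-a\|v\|^2}\tilde\pi^{q,p}(x,\cdot)\}_{v\in\mathbb R^d}$, whose log-covering number is $\lesssim d\log(K/\delta)$.
\item So in the paper the factor $\tau^{-d/2}$ is the envelope $K$. Bounding $K$ is where (A4) enters, via strong concavity of $H_x$, together with a lower bound and $C^2$ regularity of the density of $p$.
\item In your route the envelope is $O(1)$, and $\tau^{-d/2}$ comes instead from the smoothness--entropy trade-off. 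This dispenses with those pointwise density assumptions. The price is the derivative theory for entropic potentials and nonparametric entropy bounds, where the paper needs only elementary parametric chaining.
\end{itemize}
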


\begin{proof}
The proof proceeds in three parts.

\medskip
\noindent\textbf{Part 1: Pointwise upper bound on $\tilde{\pi}^{q,p}$.}

\begin{lemma}
\label{sublem:pi-bound}
Under Assumptions~\ref{ass:A1} and \ref{ass:A4}, 
\begin{equation}
    K := \sup_{x\in\mathrm{supp}(q),\,y\in\Omega}\,\tilde{\pi}^{q,p}(x,y) 
    \lesssim \tau^{-d/2},\nonumber
\end{equation}
uniformly in $x$.
\end{lemma}

\begin{proof}[Proof of Sub-lemma]
By Assumption~\ref{ass:A1}, $p(y) \geq m > 0$ for all $y \in \Omega$, so
\begin{equation}
    \tilde{\pi}^{q,p}(x,y) 
    = \frac{\tilde{\pi}^{q,p}(x,y)\cdot p(y)}{p(y)} 
    \leq \frac{1}{m}\,\tilde{\pi}^{q,p}(x,y)\cdot p(y). \nonumber 
\end{equation}
It therefore suffices to bound $\sup_{y\in\Omega}\tilde{\pi}^{q,p}(x,y)\cdot p(y)$.
By the dual optimality condition~\eqref{eq:dual-opt-x}, for each fixed $x$,
\begin{equation}
    \int \tilde{\pi}^{q,p}(x,y)\,dp(y) = 1,\nonumber 
\end{equation}
so $y\mapsto\tilde{\pi}^{q,p}(x,y)\,p(y)$ is a probability density with 
respect to Lebesgue measure $dy$. Substituting the definition 
$\tilde{\pi}^{q,p}(x,y) = e^{(f_\tau(x)+g_\tau(y)-\frac{1}{2}\|x-y\|^2)/\tau}$, 
we write
\begin{equation}
    \tilde{\pi}^{q,p}(x,y)\,p(y) 
    = \frac{e^{H_x(y)/\tau}}{c(x)},
    \qquad
    H_x(y) := g_\tau(y) - \tfrac{1}{2}\|x-y\|^2 + \tau\log p(y),
    \label{eq:Hx-def}
\end{equation}
where $c(x) := e^{-f_\tau(x)/\tau} = \int e^{H_x(y)/\tau}\,dy$ 
is the normalization constant.

\medskip
\noindent\textit{Strong concavity of $H_x$.}
Computing the Hessian of $H_x$:
\begin{equation}
    \nabla^2 H_x(y) = \nabla^2 g_\tau(y) - I + \tau\nabla^2\log p(y).\nonumber 
\end{equation}
By Assumption~\ref{ass:A4}, $\nabla^2 g_\tau(y) \preceq (1-c_0)I$, so
\begin{equation}
    \nabla^2 H_x(y) \preceq -c_0 I + \tau\nabla^2\log p(y).\nonumber 
\end{equation}
Since $p \in C^2(\Omega)$ and $\Omega$ is compact, 
$\|\nabla^2\log p\|_{L^\infty(\Omega)} \leq C_p$ for some constant $C_p > 0$.
Therefore, for $\tau \leq \tau_0$ with $\tau_0 \leq c_0/(2C_p)$,
\begin{equation}
    \nabla^2 H_x(y) \preceq -\frac{c_0}{2}I 
    \quad \forall\, y \in \Omega.
    \label{eq:Hx-concave}
\end{equation}
That is, $H_x$ is $(c_0/2)$-strongly concave on $\Omega$.

\medskip
\noindent\textit{Lower bound on $c(x)$.}
Let $\hat{y}(x) := \arg\max_{y\in\Omega}H_x(y)$.
By~\eqref{eq:Hx-concave} and Taylor's theorem,
\begin{align}
H_x(y)&=H_x(\hat{y})+\underbrace{\nabla H_x(\hat y)^\top}_{\bold{0}} (y-\hat{y})+\frac{1}{2}(y-\hat{y})^\top \nabla^2 H_x(\xi) (y-\hat{y})\nonumber\\
&\leq H_x(\hat{y}) - \frac{c_0}{4}\|y - \hat{y}\|^2 
    \quad \forall\, y \in \Omega. \nonumber 
\end{align}
Therefore,
\begin{align}
    c(x) 
    = \int_\Omega e^{H_x(y)/\tau}\,dy 
    &\geq \int_{\mathbb{R}^d} 
        e^{H_x(\hat{y})/\tau 
        - \frac{c_0}{4\tau}\|y-\hat{y}\|^2}\,dy \nonumber\\
    &= e^{H_x(\hat{y})/\tau}
        \cdot\left(\frac{4\pi\tau}{c_0}\right)^{d/2},
    \label{eq:cx-lower}
\end{align}
where we extended the domain to $\mathbb{R}^d$ and evaluated the 
resulting Gaussian integral.

\medskip
\noindent\textit{Conclusion of Sub-lemma.}
Combining the numerator bound $e^{H_x(y)/\tau} \leq e^{H_x(\hat{y})/\tau}$ 
with~\eqref{eq:cx-lower},
\begin{equation}
    \sup_{y\in\Omega}\tilde{\pi}^{q,p}(x,y)\cdot p(y) 
    = \sup_{y\in\Omega}\frac{e^{H_x(y)/\tau}}{c(x)}
    \leq \frac{e^{H_x(\hat{y})/\tau}}{c(x)}
    \leq \left(\frac{c_0}{4\pi\tau}\right)^{d/2}
    \lesssim \tau^{-d/2}.
\end{equation}
Hence $K \leq \frac{1}{m}\sup_y \tilde{\pi}^{q,p}(x,y)\cdot p(y) 
\lesssim \tau^{-d/2}$, uniformly in $x$.
\end{proof}

\medskip
\noindent\textbf{Part 2: Reduction to a fixed-$x$ empirical process.}

By Fubini's theorem,
\begin{equation}
    \Delta_3(h) 
    = \int_\Omega 
    \underbrace{
    \left[
        \int_\Omega 
        e^{h(x)^\top(y-T_\tau^{q,p}(x))-a\|h(x)\|^2}
        \tilde{\pi}^{q,p}(x,y)\,
        d(\hat{p}_n-p)(y)
    \right]
    }_{\Psi_{h(x)}(x)}
    dq(x).
\end{equation}
For each fixed $x$, the term $\Psi_{h(x)}(x)$ depends on $h$ only through 
the vector $h(x) =: v \in \mathbb{R}^d$. By the triangle inequality,
\begin{equation}
    \sup_h|\Delta_3(h)| 
    \leq \int_\Omega 
    \sup_{v\in\mathbb{R}^d}
    \left|
        \int_\Omega 
        e^{v^\top(y-T_\tau^{q,p}(x))-a\|v\|^2}
        \tilde{\pi}^{q,p}(x,y)\,
        d(\hat{p}_n-p)(y)
    \right|
    dq(x).
\end{equation}
Taking expectations, it suffices to show uniformly in $x\in\mathrm{supp}(q)$:
\begin{equation}
\label{eq:fixedx-target}
    \mathbb{E}
    \sup_{v\in\mathbb{R}^d}
    \left|
        \int_\Omega 
        e^{v^\top(y-T_\tau^{q,p}(x))-a\|v\|^2}
        \tilde{\pi}^{q,p}(x,y)\,
        d(\hat{p}_n-p)(y)
    \right|
    \lesssim \tau^{-d/2}n^{-1/2}.
\end{equation}

\medskip
\noindent\textbf{Part 3: Covering number bound and Dudley chaining.}

Fix $x\in\mathrm{supp}(q)$ and define
\begin{equation}
    \tilde{\mathcal{J}}_\tau^x 
    := \left\{
        y\mapsto e^{j_v(y)}\tilde{\pi}^{q,p}(x,y)
        \;:\; v\in\mathbb{R}^d
    \right\},
    \quad
    j_v(y) := v^\top(y-T_\tau^{q,p}(x)) - a\|v\|^2.\nonumber 
\end{equation}
Thus 
$$
\sup_h |\Delta_3(h)| \leq \int \sup_{f\in \tilde{J}_\tau^x}|\int f d(\hat{p}_n-p)|dq(x).  
$$
\textbf{Part 3.1, Bound of the cover number} 
\begin{lemma}
We claim that for any $\delta\in(0,1)$,
\begin{equation}
\label{eq:covering-number}
    \log N\!\left(
        \delta,\,\tilde{\mathcal{J}}_\tau^x,\,\|\cdot\|_{L^\infty(p)}
    \right) 
    \lesssim d\log(K/\delta).
\end{equation}    
\end{lemma}
\begin{proof}
Set $R:=\delta^{-1/2}$ and let $\mathcal{N}_\delta$ be a $\delta^{3/2}$-net 
of $B_R(0)\subset\mathbb{R}^d$ satisfying $|\mathcal{N}_\delta|\lesssim\delta^{-d}$. 
Fix an arbitrary $w$ with $\|w\|=R$, and let $\tilde{\mathcal{G}}_\delta$ be the 
set of functions in $\tilde{\mathcal{J}}_\tau^x$ corresponding to $\mathcal{N}_\delta$ 
and to $w$.

\medskip
\noindent\textit{Case 1: $\|v\|\geq R$.}
By Young's inequality,
\begin{equation}
    j_v(y) 
    \leq C_\Omega\|v\| - a\|v\|^2 
    \leq \frac{C_\Omega^2}{4a} - \frac{a}{2}\|v\|^2.\nonumber
\end{equation}
Hence $e^{j_v(y)}\tilde{\pi}^{q,p}(x,y) \leq Ke^{-\frac{a}{2}/\delta}$, 
and similarly for $w$, so
\begin{equation}
    \sup_{y\in\Omega}
    \left|e^{j_v(y)} - e^{j_w(y)}\right|
    \tilde{\pi}^{q,p}(x,y) 
    \leq 2Ke^{-(a/2)/\delta} 
    \leq K\delta,\nonumber
\end{equation}
for $\delta$ sufficiently small since $e^{-(a/2)/\delta}=o(\delta)$.

\medskip
\noindent\textit{Case 2: $\|v\|\leq R$.}
Pick $u\in\mathcal{N}_\delta$ with $\|u-v\|\leq\delta^{3/2}$. For any $y\in\Omega$,
\begin{equation}
    |j_v(y)-j_u(y)| 
    \leq \|v-u\|C_\Omega + a(\|v\|+\|u\|)\|v-u\| 
    \leq \delta^{3/2}(C_\Omega+2aR) 
    \lesssim \delta,\nonumber
\end{equation}
where the last step uses $R=\delta^{-1/2}$ and $a\leq 1$. Since 
$j_v(y)\leq C_\Omega^2/(4a)=O(1)$, the inequality $|e^s-e^t|\leq e^{s\vee t}|s-t|$ gives
\begin{equation}
    \sup_{y\in\Omega}
    \left|e^{j_v(y)}-e^{j_u(y)}\right|
    \tilde{\pi}^{q,p}(x,y) 
    \lesssim K\delta.\nonumber
\end{equation}
In both cases, every element of $\tilde{\mathcal{J}}_\tau^x$ is approximated 
by some element of $\tilde{\mathcal{G}}_\delta$ to $L^\infty(p)$-precision $O(K\delta)$. 
Since $|\tilde{\mathcal{G}}_\delta|\lesssim\delta^{-d}$, replacing $\delta$ by 
$\delta/C$ establishes~\eqref{eq:covering-number}.
\end{proof}

\medskip
\noindent\textbf{Part 3.2 Chaining Bound.}
The $L^\infty(p)$ envelope of $\tilde{\mathcal{J}}_\tau^x$ satisfies
\begin{equation}
    \sup_{v\in\mathbb{R}^d}\sup_{y\in\Omega} 
    e^{j_v(y)}\tilde{\pi}^{q,p}(x,y) 
    \leq e^{C_\Omega^2/(4a)}K 
    \lesssim K 
    \lesssim \tau^{-d/2}.\nonumber
\end{equation}
By Dudley's entropy integral bound (\cite{gine2021mathematical} Theorem~3.5.1)
\begin{equation}
    \mathbb{E}
    \sup_{f\in\tilde{\mathcal{J}}_\tau^x}
    \left|\int f\,d(\hat{p}_n-p)\right|
    \lesssim 
    n^{-1/2}
    \int_0^{K} 
    \sqrt{\log N(\delta,\tilde{\mathcal{J}}_\tau^x,L^\infty)}\,d\delta.\label{eq:bound_cont1}
\end{equation}
Substituting~\eqref{eq:covering-number} and setting $s=\delta/K$, the R.H.S. of  above inequality can be bounded: 
\begin{equation}
\eqref{eq:bound_cont1}  \lesssim Kn^{-1/2}
    \int_0^1\sqrt{-d\log s}\,ds 
    = Kn^{-1/2}\cdot\frac{\sqrt{\pi d}}{2}
    \lesssim \tau^{-d/2}n^{-1/2},\nonumber
\end{equation}
which establishes~\eqref{eq:fixedx-target} uniformly in $x$.

\medskip
\noindent\textbf{Conclusion.}
Integrating~\eqref{eq:fixedx-target} over $x$ with respect to $q$,
\begin{equation}
    \mathbb{E}\sup_h|\Delta_3(h)| 
    \leq 
    \mathbb{E}
    \sup_{f\in\tilde{\mathcal{J}}_\tau^x}
    \left|\int f\,d(\hat{p}_n-p)\right|
    \lesssim 
    \tau^{-d/2}n^{-1/2}
    \cdot
    \underbrace{\int_\Omega dq(x)}_{=1}
    =\tau^{-d/2}n^{-1/2}.\nonumber
    \qed
\end{equation}
\end{proof}

\begin{proposition}\label{pro:entropic_T_sample_2}
Unnder the assumption of \ref{ass:A1}-\ref{ass:A4}, we have
\[
\mathbb{E}\,\|T^{p,q}_\tau-T_\tau^{\hat{p}_n,\hat{q}_n}\|\lesssim \tau^{1-d/2}\log(n)\,n^{-1/2}.
\]
\end{proposition}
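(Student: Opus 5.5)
The plan is a two-sample extension of Proposition~\ref{prop:one-sample} via a triangle inequality through an intermediate semi-discrete map. I would write
\begin{align*}
T^{p,q}_\tau-T^{\hat p_n,\hat q_n}_\tau
=\bigl(T^{q,p}_\tau-T^{q,\hat p_n}_\tau\bigr)+\bigl(T^{q,\hat p_n}_\tau-T^{\hat q_n,\hat p_n}_\tau\bigr),
\end{align*}
and use $\|u+v\|^2\le 2\|u\|^2+2\|v\|^2$. The first difference replaces only the target $p$ by $\hat p_n$ while $q$ stays fixed. Its expected squared $L^2(q)$ norm is therefore exactly what Proposition~\ref{prop:one-sample} controls, giving $\tau^{1-d/2}\log(n)n^{-1/2}$.

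For the second difference, the source $q$ is replaced by $\hat q_n$ while the (random) target $\hat p_n$ is held fixed. I would rerun the argument of Proposition~\ref{prop:one-sample} with the roles adapted, conditioning on the $y$-samples.

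\emph{Step 1.} Use the variational identity \eqref{eq:var-rep} to express the squared error as $4a\sup_h\iint\chi\,d\pi$.

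\emph{Step 2.} Apply the semi-dual lower bound with the potentials $(f_\tau,g_\tau)$ of the pair $(q,\hat p_n)$. This produces a Term~A plus fluctuation terms of the form $\tau^{-1}[\mathrm{OT}_\tau(\hat q_n,\hat p_n)-\mathrm{OT}_\tau(q,\hat p_n)]$ and $\tau^{-1}\int f_\tau\,d(q-\hat q_n)$.

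\emph{Step 3.} Bound the fluctuation terms using Corollary~3 and Lemmas~7--8 of \cite{pooladian2021entropic}, since the potentials of an entropic problem with one empirical marginal remain smooth and uniformly bounded on the compact $\Omega$. Each gives $(\tau^{-1}+\tau^{-d/2})\log(n)n^{-1/2}$. Term~A splits as before into a nonpositive piece, by Hoeffding and the zero conditional mean \eqref{eq:zero-mean}, plus an empirical-process term $\Delta_3'$ now integrated against $d(\hat q_n-q)(x)$. Choosing $a=C\tau$ and multiplying through then gives the same rate for this difference.

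\emph{Main obstacle: bounding $\Delta_3'$.} There are two difficulties.

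First, the empirical measure now sits in the $x$ variable, where $h(x)$ is an arbitrary function. I would not take the supremum over $h$ first. Instead I would use that the optimal $h$ is $h^*=(T^{\hat q_n,\hat p_n}_\tau-T^{q,\hat p_n}_\tau)/(2a)$, which is the difference of entropic barycentric maps. By the explicit formula $T_\tau(x)=\int y\,e^{(g_\tau(y)-\frac12\|x-y\|^2)/\tau}d\hat p_n(y)$, this is a smooth, Lipschitz (constant $\lesssim\tau^{-1}$) map on $\Omega$. So it suffices to take the supremum over a Hölder or Lipschitz ball of $h$'s. Its metric entropy, combined with Dudley's bound as in Part~3 of Proposition~\ref{pro:delta3}, should give $\tau^{-d/2}$ up to logarithmic factors times $n^{-1/2}$.

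Second, the density bound of Lemma~\ref{sublem:pi-bound} must be established for the plan with the empirical $\hat p_n$ in place of $p$. Assumption~\ref{ass:A4} was only assumed for the population potential. I would handle this by a stability argument transferring the Hessian bound on $g_\tau$ to the semi-discrete potential with high probability, or more simply by directly bounding the conditional density by $\tau^{-d/2}$ through the Gaussian integral computation on $\Omega$. This is the step I am least sure of, and the logarithmic factor may need to absorb any losses.

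Summing the two contributions gives the stated bound.
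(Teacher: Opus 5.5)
\paragraph{Comparison with the paper.} You decompose through the semi-discrete map $T^{q,\hat p_n}_\tau$ exactly as the paper does, and you handle the first piece the same way, via Proposition~\ref{prop:one-sample}. For the second piece, however, the paper reruns nothing. It imports $\mathbb{E}\|T^{q,\hat p_n}_\tau-T^{\hat q_n,\hat p_n}_\tau\|^2\lesssim\tau^{1-d/2}\log(n)\,n^{-1/2}$ directly from Theorem~5 of \cite{pooladian2021entropic}.

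\paragraph{The gap.} Your self-contained rerun has a genuine gap. The step that actually fails is your proposed fix for the first difficulty in bounding $\Delta_3'$, not the density bound you were least sure of.

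Once the empirical measure sits in the $x$-variable, the reduction behind Proposition~\ref{pro:delta3} is gone. There, for fixed $x$, $h$ entered only through one vector $v\in\mathbb{R}^d$, giving log-covering numbers $\lesssim d\log(K/\delta)$. Now you face an empirical process over $x\in\Omega\subset\mathbb{R}^d$ indexed by a nonparametric class.
\begin{itemize}
\item A Lipschitz ball, or any H\"older ball of smoothness $\beta\le d/2$, has sup-norm entropy $\log N(\delta)\asymp(L/\delta)^{d/\beta}$. Dudley's integral therefore diverges for $d\ge 2$.
\item Indeed, by Kantorovich--Rubinstein the uniform deviation over an $L$-Lipschitz ball equals $L\,W_1(\hat q_n,q)$, which is of order $n^{-1/d}$ for $d\ge 3$.
\item Moreover, $h^\ast=v/(2a)$ with $a\asymp\tau$ only has Lipschitz constant $O(\tau^{-2})$, since $\nabla T_\tau(x)=\tau^{-1}\mathrm{Cov}(Y\mid x)$.
\end{itemize}
Smoothness $\beta>d/2$ would restore $n^{-1/2}$. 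But then you must track how the $C^\beta$ norms of $h^\ast$ and of $x\mapsto\int e^{\chi(x,y)}\tilde\pi^{q,\hat p_n}(x,y)\,d\hat p_n(y)$ grow in $\tau^{-1}$. Nothing in the sketch shows this lands on $\tau^{-d/2}$.

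\paragraph{Two further steps that fail as written.}
\begin{itemize}
\item The identity of Step~1 requires the norm to be taken against the $x$-marginal of the perturbed plan $\pi^{\hat q_n,\hat p_n}$. That is $L^2(\hat q_n)$, not $L^2(q)$, and the fluctuation terms you list are exactly those of the $L^2(\hat q_n)$ version. Before adding this to the first piece, you must transfer the bound to $L^2(q)$ for the sample-dependent map $T^{\hat q_n,\hat p_n}_\tau$. That is another uniform empirical-process problem of the same type.
\item Assumption~\ref{ass:A4} concerns only the population potential, while the conditionals $\pi^{q,\hat p_n}(\cdot\mid x)$ are discrete. Hoeffding therefore only gives nonpositivity of the first part of Term~A for $a\gtrsim\mathrm{diam}(\Omega)^2$. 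Your choice $a=C\tau$, which is what produces the factor $\tau$ in $\tau^{1-d/2}$, would need a $\tau$-sub-Gaussian bound on these discrete conditionals, and the assumptions do not supply one.
\end{itemize}

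As it stands, the second piece is asserted rather than proved. Either invoke the two-sample bound as the paper does, or supply a genuine smooth-class empirical-process argument with explicit $\tau$-dependence.
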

\begin{proof}
By Theorem~5 in \cite{pooladian2021entropic} , we have
\begin{align}
&\mathbb{E}\,\|T^{q,\hat{p}}_\tau-T^{\hat{q},\hat{p}}_\tau\|^2_{L^2(p)}\lesssim \tau^{1-d/2}\log(n)\,n^{-1/2}.\nonumber\\
\end{align}
By Proposition \ref{prop:one-sample}, we have 

\begin{align}
&\mathbb{E}\,\|T^{q,p}_\tau-T^{q,\hat{p}}_\tau\|^2_{L^2(p)}\lesssim \tau^{1-d/2}\log(n)\,n^{-1/2}.\nonumber\\
\end{align}

Combining this with the previous proposition yields
\begin{align}
&\mathbb{E}\left[\|T^{p,q}_\tau-T^{\hat{q},\hat{p}}_\tau\|_{L(p)}^2\right]\nonumber\\
&\lesssim \mathbb{E}\left[\|T^{q,p}_\tau-T^{q,\hat{p}}_\tau\|_{L(p)}^2\right]
+\mathbb{E}\left[\|T^{q,\hat{p}}_\tau-T^{\hat{q},\hat{p}}_\tau\|_{L(p)}^2\right]\nonumber\\
&\lesssim \tau^{1-d/2}\log(n)\,n^{-1/2}+O(\tau).\nonumber
\end{align}
\end{proof}

\subsection{Sample complexity of the Drift Field}
Based on Proposition~\ref{pro:entropic_T_sample}, we immediately obtain the following.

\begin{proposition}
Under the same assumptions in Proposition \ref{pro:entropic_T_sample}, 
let $\hat{p}$ and $\hat{q}$ be empirical distributions of size $n$, i.i.d.\ sampled from $p$ and $q$, respectively. Then
\begin{align}
\mathbb{E}\left[\|V_{q,p}^\infty - V^\infty_{\hat{q},\hat{p}}\|^2\right]
\lesssim \tau^{1-d/2}\log(n)\,n^{-1/2}.\nonumber
\end{align}
\end{proposition}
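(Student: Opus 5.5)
The plan is to write the Sinkhorn drift as a difference of two entropic barycentric projections and control each by the map-estimation bounds already established. By \eqref{eq:v_general_conditional_b} and its empirical counterpart \eqref{eq:Vl_empirical},
\[
V^\infty_{q,p}=T^{q,p}_\tau-T^{q,q}_\tau,\qquad V^\infty_{\hat q,\hat p}=T^{\hat q,\hat p}_\tau-T^{\hat q,\hat q}_\tau .
\]
Using $\|u-v\|^2\le 2\|u\|^2+2\|v\|^2$, we get
\[
\mathbb{E}\|V^\infty_{q,p}-V^\infty_{\hat q,\hat p}\|^2\le 2\,\mathbb{E}\|T^{q,p}_\tau-T^{\hat q,\hat p}_\tau\|^2+2\,\mathbb{E}\|T^{q,q}_\tau-T^{\hat q,\hat q}_\tau\|^2,
\]
with all norms taken in $L^2(q)$, the natural space since the drift is evaluated on model samples.

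For the cross term I would invoke Proposition~\ref{pro:entropic_T_sample_2}. Its proof already splits the error through the intermediate plan $\pi^{q,\hat p}$: Proposition~\ref{prop:one-sample} handles the one-sample error, and Theorem~5 of \cite{pooladian2021entropic} handles the two-sample step. Together these give $\tau^{1-d/2}\log(n)n^{-1/2}$. (If one reads the statement through Proposition~\ref{pro:entropic_T_sample} with $T_0$, an additional $O(\tau)$ bias appears, so the $T_\tau$ version is the right one to cite.)

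The self term is the main obstacle. The pair $(q,q)$ must satisfy (A1)--(A4): here the Brenier map is the identity, with $\phi_0=\tfrac12\|x\|^2$, so (A2)--(A3) hold with $\mu=L=1$. Assumption (A4) on $g_\tau$ for the self problem has to be assumed, or argued from symmetry since $f_\tau=g_\tau$ for the self problem. The second subtlety is that $\hat q$ enters both marginals with the same sample, whereas Proposition~\ref{prop:one-sample} and the two-sample theorem assume independent samples. I would first try to avoid this by drawing an independent copy $\hat q'$ for one side; the implementation effectively uses a separate negative batch, as in Algorithm~\ref{alg:split_sinkhorn_drift}. The same decomposition then applies verbatim with $p$ replaced by $q$.

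If the same-sample version is required, I would redo Step 2 of Proposition~\ref{prop:one-sample} with the dual potential symmetric in its two arguments. The empirical-process terms $\Delta_1,\Delta_2,\Delta_3$ then become sums of two fluctuations of the same type, each bounded by the same Dudley-chaining argument, which yields the same rate up to constants. Adding the cross and self bounds completes the proof.
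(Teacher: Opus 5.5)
Your argument is the paper's own (split into cross and self barycentric-projection errors, then invoke the earlier entropic-map bounds). Citing Proposition~\ref{pro:entropic_T_sample_2} rather than Proposition~\ref{pro:entropic_T_sample} is the better choice, since the latter compares to $T_0$ and leaves an $O(\tau)$ term the statement drops; but it brings in (A4) for the cross pair as well as the self pair, and the statement's hypotheses do not grant (A4).

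The shared-sample problem you flag is one the paper ignores. Your independent-copy fix, however, bounds a different estimator from the $\pi_{XX}$-based $V^\infty_{\hat q,\hat p}$ in \eqref{eq:sinkhorn_flow}; the paper takes $Y_{\mathrm{neg}}=X$, not a separate negative batch. So the same-sample route is the one that proves the statement. On that route, the Step~1 identity of Proposition~\ref{prop:one-sample} yields an $L^2(\hat q)$ rather than an $L^2(q)$ bound, so the cross term must be put in the same norm. Also, $\Delta_3$ needs a leave-one-out split, because the evaluation point is itself an atom of $\hat q$; the diagonal term costs only $O(\tau^{-d/2}n^{-1})$.
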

\begin{proof}
By definition \eqref{eq:sinkhorn_drif_general}, we have
\begin{align}
&\mathbb{E}\,\|V_{q,p}^\infty(x)-V_{\hat{q},\hat{p}}\|^2\nonumber\\
&\lesssim \mathbb{E}\,\|T_{q,p}(x)-T_{\hat{q},\hat{p}}\|^2
+\mathbb{E}\,\|T_{q,q}(x)-T_{\hat{q},\hat{q}}\|^2\nonumber\\
&\lesssim \tau^{1-d/2}\log(n)\,n^{-1/2},\nonumber
\end{align}
where the last step follows from Proposition~\ref{pro:entropic_T_sample}.
\end{proof}

\begin{remark}
At a stationary point where $V^\infty(x)\equiv 0$, the above proposition implies
\[
\mathbb{E}\,\|V_{\hat{q},\hat{p}}\|^2\lesssim \tau^{1-d/2}\log(n)\,n^{-1/2}.
\]
Intuitively, when the batch size $n$ is sufficiently large and the empirical drift $V^\infty_{\hat{q},\hat{p}}$ is sufficiently small, we can conclude with high confidence that $V^\infty_{q,p}=0$, and hence $q_\theta=q=p$.
\end{remark}

\section{Stop-gradient regression as a forward Euler discretization}
\label{subsec:sg_euler}

This subsection formalizes the connection between drifting-model updates and particle gradient flows. We show that a stop-gradient regression loss realizes an explicit forward Euler step when paired with a $q$-weighted (preconditioned) gradient descent update. Concretely, let $v_t$ denote the gradient-flow velocity and define the drift field as the velocity itself, $V_t:=v_t$. Regressing $x_t^i$ toward the stop-gradient target $x_t^i+V_t(x_t^i)$ and taking one $q$-weighted gradient step yields the forward Euler discretization of the ODE $\dot x_t^i = v_t(x_t^i)$.

\paragraph{Particle gradient flow and drift field.}
Let $X_t=\{x_t^i\}_{i=1}^n\subset\mathbb{R}^d$ denote a system of particles with weights $\{q_i\}_{i=1}^n$ satisfying $q_i>0$ and $\sum_{i=1}^n q_i=1$, and let $q_{X_t}$ be the associated empirical measure. Consider a differentiable functional $\mathcal{F}$ on measures and the corresponding particle dynamics
\begin{equation}
\dot x_t^i\;=\; v_t(x_t^i)
\;:=\; -\frac{1}{q_i}\nabla_{x_t^i}\mathcal{F}(q_{X_t}),
\qquad i=1,\ldots,n.
\label{eq:wgf_particles}
\end{equation}
Following the drift viewpoint, we define the \emph{drift field} as the velocity,
\begin{equation}
V_t(x_t^i) \;:=\; v_t(x_t^i)
\;=\; -\frac{1}{q_i}\nabla_{x_t^i}\mathcal{F}(q_{X_t}),
\qquad i=1,\ldots,n.
\label{eq:drift_def}
\end{equation}

\subsection{Sample complexity of the Drift Field}
Based on Proposition~\ref{pro:entropic_T_sample}, we immediately obtain the following.

\begin{proposition}
Under the same assumptions in Proposition \ref{pro:entropic_T_sample}, 
let $\hat{p}$ and $\hat{q}$ be empirical distributions of size $n$, i.i.d.\ sampled from $p$ and $q$, respectively. Then
\begin{align}
\mathbb{E}\left[\|V_{q,p}^\infty - V^\infty_{\hat{q},\hat{p}}\|^2\right]
\lesssim \tau^{1-d/2}\log(n)\,n^{-1/2}.\nonumber
\end{align}
\end{proposition}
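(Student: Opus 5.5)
The plan is to reduce the drift-field error to errors of the two barycentric projections and then apply the entropic-map sample-complexity bounds proved above. By \eqref{eq:sinkhorn_drif_general} and \eqref{eq:sinkhorn_flow_general_l}, $V^\infty_{q,p}=T_\tau^{q,p}-T_\tau^{q,q}$ and $V^\infty_{\hat q,\hat p}=T_\tau^{\hat q,\hat p}-T_\tau^{\hat q,\hat q}$, where $T_\tau^{\cdot,\cdot}$ is the barycentric projection \eqref{eq:bary}. The $-x$ terms cancel because both conditionals are probability measures. Since $\|u-v\|^2\le 2\|u\|^2+2\|v\|^2$, we obtain
\begin{equation*}
\mathbb{E}\|V^\infty_{q,p}-V^\infty_{\hat q,\hat p}\|^2_{L^2(q)}\le 2\,\mathbb{E}\|T_\tau^{q,p}-T_\tau^{\hat q,\hat p}\|^2_{L^2(q)}+2\,\mathbb{E}\|T_\tau^{q,q}-T_\tau^{\hat q,\hat q}\|^2_{L^2(q)}.
\end{equation*}
The norm is taken in $L^2(q)$. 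The empirical field is only defined on the support of $\hat q$, so I would extend it off-sample through the entropic potentials: the conditional $\pi(\cdot\mid x)$ is defined for all $x$ via the Schr\"odinger potential $\hat f$, exactly as in \cite{pooladian2021entropic}.

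The cross term is controlled directly by Proposition~\ref{pro:entropic_T_sample_2}. That proposition combines Proposition~\ref{prop:one-sample} (replacing $p$ by $\hat p$) with Theorem~5 of \cite{pooladian2021entropic} (replacing $q$ by $\hat q$). This gives $\lesssim\tau^{1-d/2}\log(n)n^{-1/2}$ with $\tau$ fixed, so the $O(\tau)$ bias of Proposition~\ref{pro:entropic_T_sample} never enters, because we compare to $T_\tau$ rather than $T_0$. If one insists on invoking Proposition~\ref{pro:entropic_T_sample} as stated, I would instead triangulate through $T_0$. That route would leave an additive $O(\tau)$, which must either be absorbed by restricting to $\tau\lesssim n^{-1/2}$ or dropped by working with the $\tau$-fixed version. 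I will follow the $\tau$-fixed route.

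The self term $T_\tau^{q,q}$ versus $T_\tau^{\hat q,\hat q}$ is the main obstacle. Here both marginals are replaced by the same empirical measure, so the samples are not independent across the two marginals and the two-sample argument does not apply verbatim. I would first check that assumptions (A1)--(A4) hold for the pair $(q,q)$. The Brenier map is the identity, so (A2)--(A3) hold with $\mu=L=1$, and (A4) asks that $\nabla^2 g_\tau\preceq(1-c_0)I$ for the self-potential; this should follow from smoothness of $q$ for small $\tau$. Then I would run the same chain: one-sample replacement of the second marginal via Proposition~\ref{prop:one-sample}, giving $T^{q,\hat q}_\tau$, followed by replacement of the first marginal. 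The dependence between the two copies of $\hat q$ only enters through the empirical-process terms $\Delta_1,\Delta_2,\Delta_3$. Their bounds rely on uniform (sup over a function class) control of $\int f\,d(\hat q-q)$, and the same data appearing twice does not break that control, so each step still yields $\tau^{1-d/2}\log(n)n^{-1/2}$. Summing the two contributions gives the claim.
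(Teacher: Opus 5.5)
Your proposal is essentially the paper's proof. It splits $V^\infty_{q,p}-V^\infty_{\hat q,\hat p}$ into the cross and self barycentric-projection errors via $\|u-v\|^2\le 2\|u\|^2+2\|v\|^2$, and bounds each with the entropic-map sample-complexity results of the preceding subsection. Where you deviate, you are more careful than the paper: it cites Proposition~\ref{pro:entropic_T_sample} for both terms, silently drops that result's $O(\tau)$ bias relative to $T_0$, and ignores that the self term reuses one sample for both marginals. One caveat: your $\tau$-fixed route through Proposition~\ref{pro:entropic_T_sample_2} needs (A4), which is not among the hypotheses of Proposition~\ref{pro:entropic_T_sample} that the statement invokes.
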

\begin{proof}
By definition \eqref{eq:sinkhorn_drif_general}, we have
\begin{align}
&\mathbb{E}\,\|V_{q,p}^\infty(x)-V_{\hat{q},\hat{p}}\|^2\nonumber\\
&\lesssim \mathbb{E}\,\|T_{q,p}(x)-T_{\hat{q},\hat{p}}\|^2
+ \mathbb{E}\,\|T_{q,q}(x)-T_{\hat{q},\hat{q}}\|^2\nonumber\\
&\lesssim \tau^{1-d/2}\log(n)\,n^{-1/2}.\nonumber
\end{align}
where the last step follows from Proposition~\ref{pro:entropic_T_sample}.
\end{proof}

\begin{remark}
At a stationary point where $V^\infty(x)\equiv 0$, the above proposition implies
\[
\mathbb{E}\,\|V_{\hat{q},\hat{p}}\|^2\lesssim \tau^{1-d/2}\log(n)\,n^{-1/2}.
\]
Intuitively, when the batch size $n$ is sufficiently large and the empirical drift $V^\infty_{\hat{q},\hat{p}}$ is sufficiently small, we can conclude with high confidence that $V^\infty_{q,p}=0$, and hence $q_\theta=q=p$.
\end{remark}

\paragraph{Stop-gradient regression loss.}
Given the drift field $V_t$ evaluated at the current particle system $X_t$, define the stop-gradient regression objective
\begin{equation}
\mathcal{L}_t(X)
\;:=\;
\frac{1}{2}\sum_{i=1}^n  q_i
\big\|x^i - \text{sg}\big(x_t^i+V_t(x_t^i)\big)\big\|^2,
\label{eq:sg_loss}
\end{equation}
where $\text{sg}(\cdot)$ denotes the stop-gradient operator (treated as a constant in backpropagation). Note that the target $x_t^i+V_t(x_t^i)$ is computed from $X_t$ and is held fixed when differentiating $\mathcal{L}_t$ with respect to $X$.

\paragraph{$q$-weighted GD equals forward Euler.}
We compute the gradient of $\mathcal{L}_t$ with respect to $x^i$:
$$
\nabla_{x^i}\mathcal{L}_t(X)
=q_i\Big(x^i-\text{sg}\big(x_t^i+V_t(x_t^i)\big)\Big).
$$
Evaluating at $X=X_t$ and using that $\text{sg}(\cdot)$ does not affect forward values yields
$$
\nabla_{x^i}\mathcal{L}_t(X_t)
=q_i\Big(x_t^i-(x_t^i+ V_t(x_t^i))\Big)
=-q_iV_t(x_t^i).
$$
We now take a single $q$-weighted (preconditioned) gradient descent step with learning rate $\eta>0$,
\begin{equation}
x_{t+1}^i
\;=\;
x_t^i-\eta\,\frac{1}{q_i}\nabla_{x^i}\mathcal{L}_t(X_t)
\;=\;
x_t^i+\eta V_t(x_t^i),
\qquad i=1,\ldots,n,
\label{eq:gd_update}
\end{equation}
which is exactly the forward Euler discretization of the ODE $\dot x_t^i=V_t(x_t^i)=v_t(x_t^i)$ with step size $\eta$.


\paragraph{Remark.}
The same argument applies when the particles $x_t^i$ are outputs of a parametric generator $x_t^i=f_\theta(\epsilon^i)$: the stop-gradient regression objective induces an output-space update in the direction $V_t$, up to the usual parameterization-dependent preconditioning through the Jacobian of $f_\theta$.

\section{Additional Drift Trajectory Results for Varying $\tau$}
\label{app:drift_tau}

Figure~\ref{fig:drift_trajectories_full_appendix} shows the full trajectory grid for
the temperature sweep discussed in Section~5.1. It confirms the same qualitative
pattern as in the teaser figure: Sinkhorn remains more stable at small $\tau$,
while one-sided and two-sided normalization are more sensitive to low-temperature
degeneration. The masking variant avoids diagonal self-interaction, but also
changes the resulting dynamics.

\begin{figure}[H]
\centering
\includegraphics[width=\textwidth]{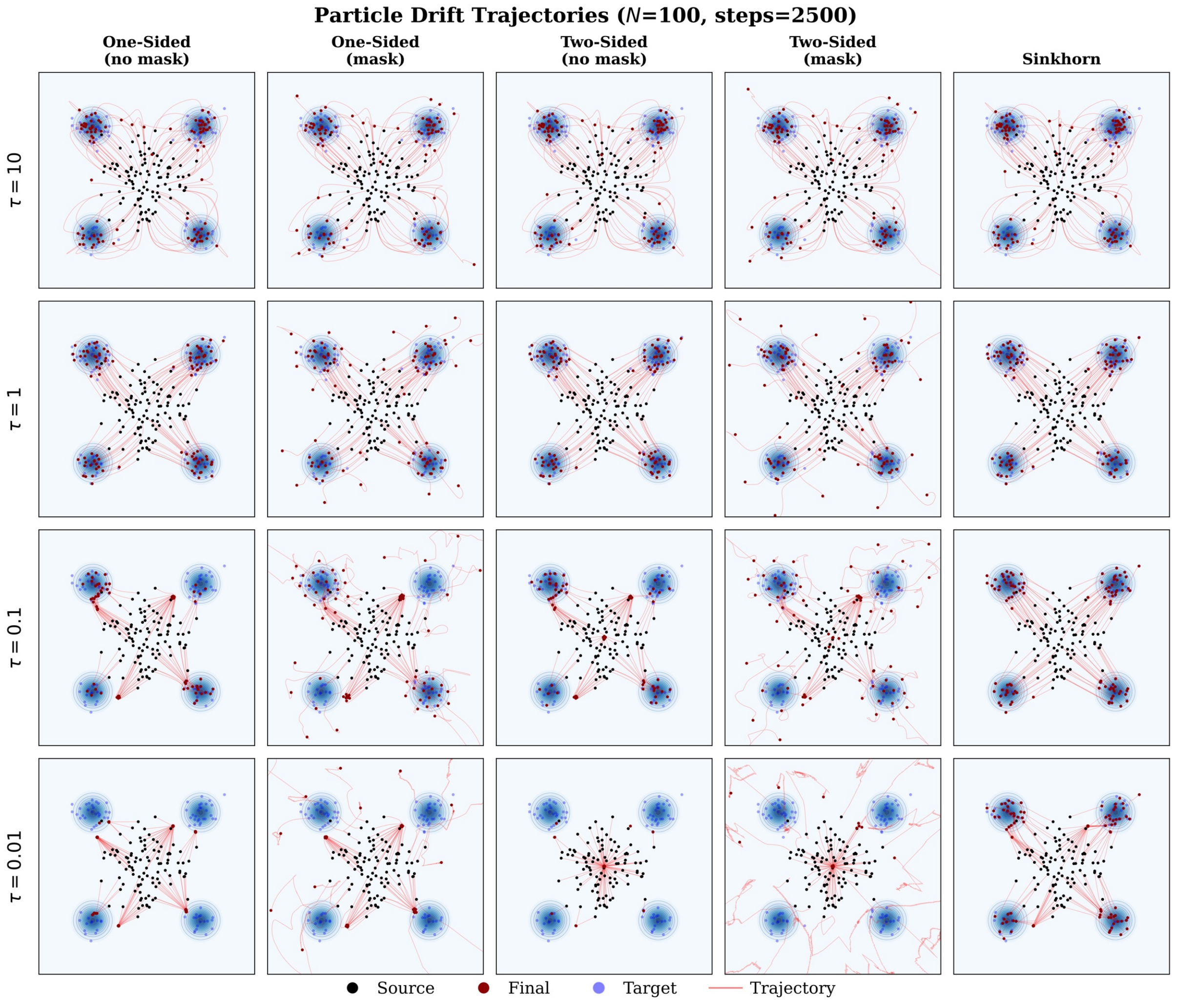}
\caption{
Full drift-trajectory grid for the temperature sweep in Section~\ref{sec:drift_tau}.
Columns correspond to one-sided, two-sided, and Sinkhorn normalization, with and
without self-distance masking where applicable; rows correspond to different
values of $\tau$. Sinkhorn exhibits the most stable trajectories as $\tau$
decreases, whereas one-sided and two-sided normalization become increasingly
sensitive in the low-temperature regime. Empirically, for any choice of $\tau$, the Sinkhorn method admits convergence that is at least as good as (and often better than) the classical one-/two-sided Drift methods.
}
\label{fig:drift_trajectories_full_appendix}
\end{figure}

\section{Additional Results for Toy Experiments}
\label{app:toy}
\raggedbottom
\let\AppendixFloatBarrier\FloatBarrier
\renewcommand{\FloatBarrier}{}
\setlength{\textfloatsep}{5pt plus 1pt minus 1pt}
\setlength{\floatsep}{5pt plus 1pt minus 1pt}
\setlength{\intextsep}{5pt plus 1pt minus 1pt}
\setlength{\abovecaptionskip}{2pt}
\setlength{\belowcaptionskip}{-2pt}
\captionsetup[figure]{skip=2pt}
\captionsetup[table]{skip=2pt}
\setcounter{topnumber}{5}
\setcounter{dbltopnumber}{2}
\setcounter{totalnumber}{8}
\renewcommand{\topfraction}{0.98}
\renewcommand{\dbltopfraction}{0.98}
\renewcommand{\textfraction}{0.02}
\renewcommand{\floatpagefraction}{0.92}
\renewcommand{\dblfloatpagefraction}{0.92}

Figure~\ref{fig:toy_extra}, Figure~\ref{fig:toy_extra_emd_lap} and Figure~\ref{fig:toy_extra_emd} extend Section~\ref{toy_exp} to additional 2D
targets with Gaussian and Laplacian kernels,
$k(x,y)=\exp(-\|x-y\|/\tau)$.
Across 2-Moons, Spiral, 8-Gaussians, and Checkerboard, Sinkhorn consistently
improves mode coverage and convergence, especially for small
$\tau\in\{0.01,0.05,0.1\}$.

\begin{figure}[!b]
\centering
\includegraphics[width=0.94\linewidth]{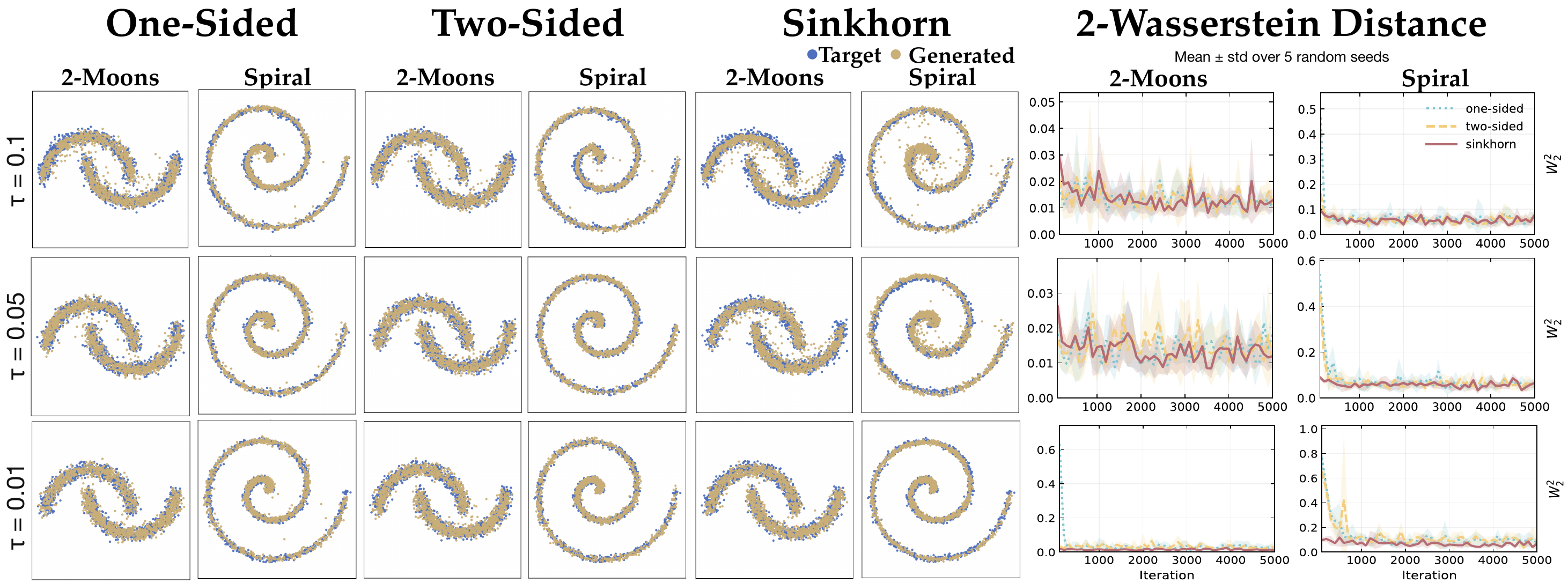}
\caption{2D generative training on 2-Moons and Spiral with a Gaussian kernel across $\tau\in\{0.01,0.05,0.1\}$ and three normalizations (one-sided, two-sided, Sinkhorn). Left: final samples (orange) vs.\ targets (blue); right: $W_2^2$ over $5{,}000$ iterations. Sinkhorn gives better coverage and lower $W_2^2$, especially at small $\tau$.}
\label{fig:toy_extra}
\end{figure}

\begin{figure}[!htbp]
\centering
\includegraphics[width=0.94\linewidth]{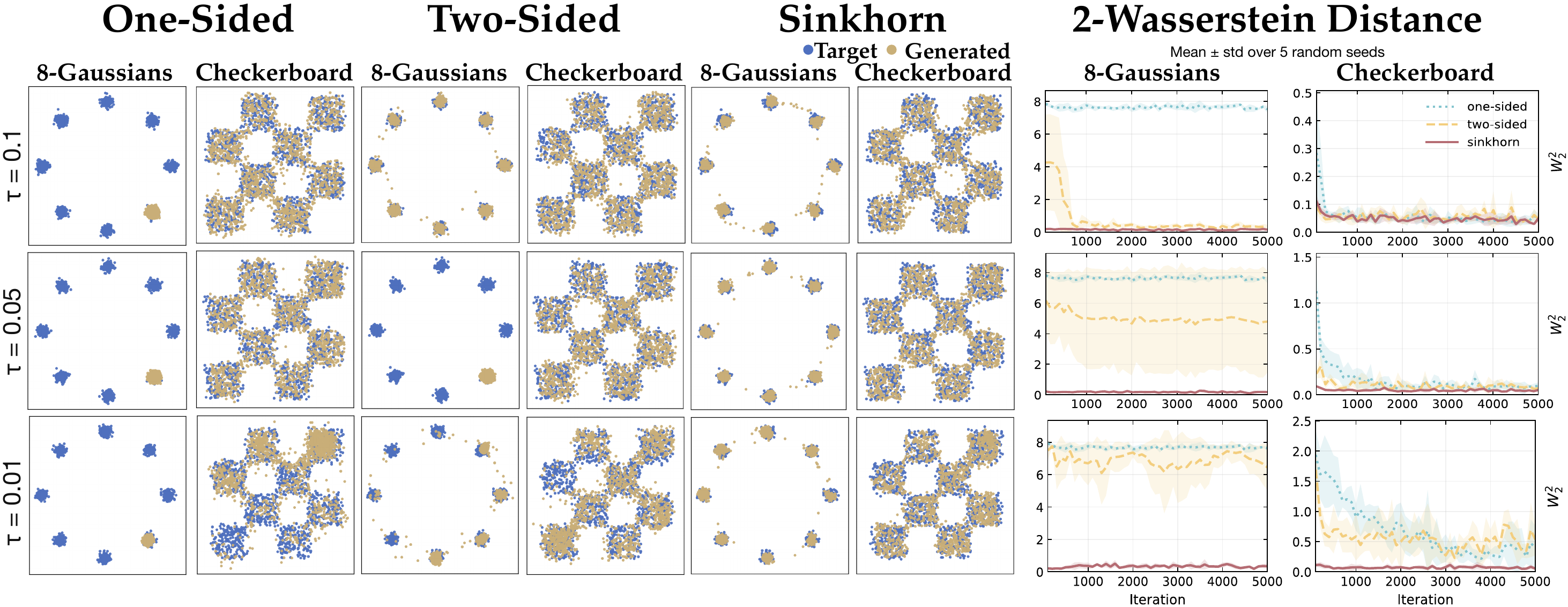}
\caption{2D generative training on 8-Gaussians and Checkerboard with a Laplacian kernel across $\tau\in\{0.01,0.05,0.1\}$ and three normalizations (one-sided, two-sided, Sinkhorn). \textbf{Left}: final samples (orange) vs.\ targets (blue); \textbf{Right}: $W_2^2$ over $5{,}000$ iterations. Sinkhorn gives better coverage and lower $W_2^2$, especially at small $\tau$.}
\label{fig:toy_extra_emd_lap}
\end{figure}

\begin{figure}[!htbp]
\centering
\includegraphics[width=0.94\linewidth]{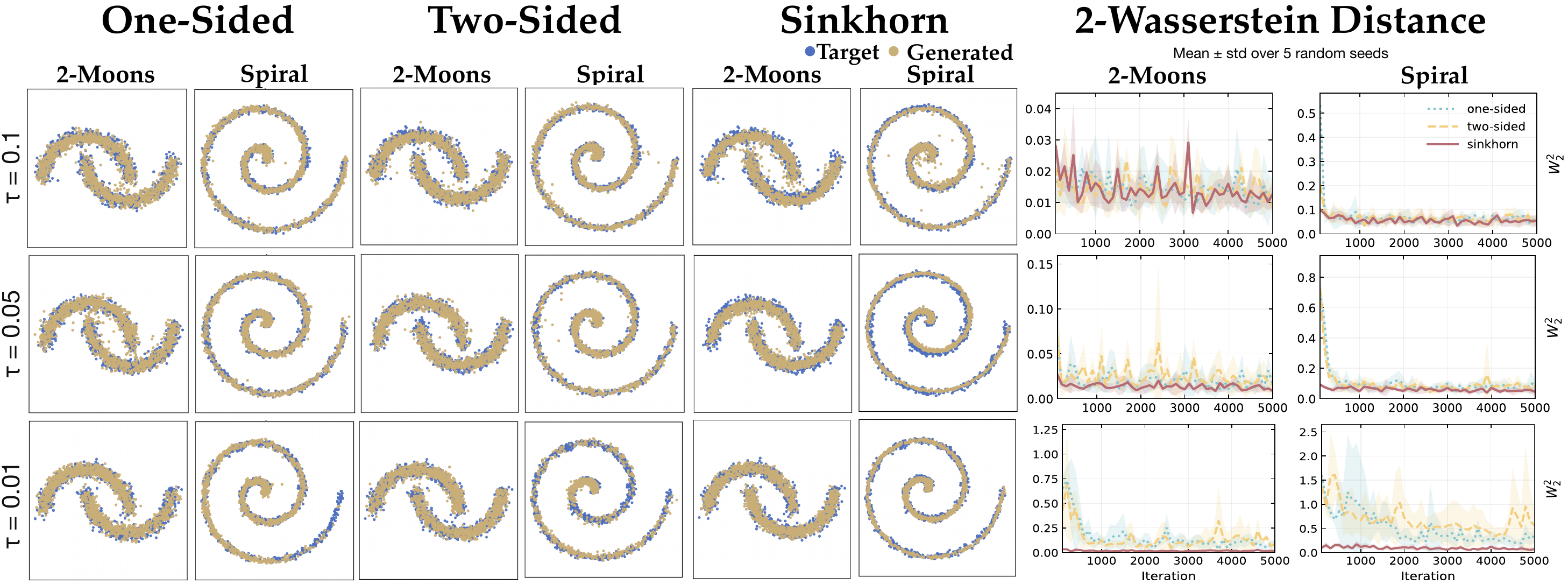}
\caption{2D generative training on 2-Moons and Spiral with a Laplacian kernel across $\tau\in\{0.01,0.05,0.1\}$ and three normalizations (one-sided, two-sided, Sinkhorn). \textbf{Left}: final samples (orange) vs.\ targets (blue); \textbf{Right}: $W_2^2$ over $5{,}000$ iterations. Sinkhorn gives better coverage and lower $W_2^2$, especially at small $\tau$.}
\label{fig:toy_extra_emd}
\end{figure}

\AppendixFloatBarrier
\section{MNIST Temperature Sweep: Laplacian Kernel}
\label{app:mnist_laplacian}

Table~\ref{tab:mnist_laplacian} reports the same $\tau$-sweep experiment as
Section~\ref{sec:mnist} using the Laplacian kernel
$k(x,y)=\exp(-\|x-y\|/\tau)$.
Compared to the Gaussian kernel, the baseline recovers earlier (from $\tau=0.03$
onward) but still completely collapses for $\tau \leq 0.02$.
Sinkhorn remains stable throughout the entire range.

\begin{table}[H]
\centering
\caption{Additional Results for MNIST Experiments.
$^\dagger$Accuracy $\approx 10\%$ indicates mode collapse.
$^\ddagger$Partial collapse.}
\label{tab:mnist_laplacian}
\scriptsize
\setlength{\tabcolsep}{1pt}
\begin{tabular*}{\linewidth}{@{\extracolsep{\fill}}c c c c c @ {\hspace{1pt}\vrule width 0.2pt\hspace{1pt}} c c c c c@{}}
\toprule
& \multicolumn{2}{c}{\textbf{EMD} $\downarrow$} & \multicolumn{2}{c}{\textbf{Accuracy} $\uparrow$}
& & \multicolumn{2}{c}{\textbf{EMD}$\downarrow$} & \multicolumn{2}{c}{\textbf{Accuracy} $\uparrow$} \\
\cmidrule(lr){2-3}\cmidrule(lr){4-5}\cmidrule(lr){7-8}\cmidrule(lr){9-10}
$\tau$ & Baseline & Sinkhorn & Baseline & Sinkhorn
& $\tau$ & Baseline & Sinkhorn & Baseline & Sinkhorn \\
\midrule
0.005 & 73.21 & \textbf{5.40} & $9.97\%^\dagger$ & \textbf{99.97\%}
& 0.030 & \textbf{5.70} & 6.48 & 94.86\% & \textbf{99.98\%} \\
0.010 & 73.21 & \textbf{7.14} & $9.99\%^\dagger$ & \textbf{100.00\%}
& 0.040 & \textbf{4.71} & 6.38 & 96.07\% & \textbf{99.97\%} \\
0.020 & 77.10 & \textbf{6.67} & $10.00\%^\dagger$ & \textbf{100.00\%}
& 0.050 & \textbf{4.46} & 6.34 & 96.17\% & \textbf{99.97\%} \\
0.025 & 12.25 & \textbf{6.55} & $84.21\%^\ddagger$ & \textbf{100.00\%}
& 0.100 & \textbf{4.18} & 6.61 & 96.56\% & \textbf{100.00\%} \\
\bottomrule
\end{tabular*}
\end{table}

\begin{figure}[H]
\centering
\includegraphics[width=\linewidth]{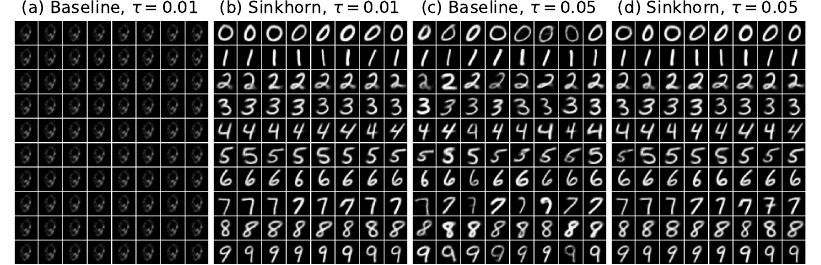}
\caption{Generated MNIST samples (Laplacian kernel).
\textbf{(a)} Baseline collapses at $\tau=0.01$.
\textbf{(b)} Sinkhorn generates all classes correctly at $\tau=0.01$.
\textbf{(c)} Baseline at $\tau=0.05$ shows partial recovery but with visible
class confusion.
\textbf{(d)} Sinkhorn at $\tau=0.05$ remains stable.}
\label{fig:mnist_laplacian}
\end{figure}

\AppendixFloatBarrier

\section{Additional ALAE Qualitative Results}
\label{sec:appendix_alae}

\begin{figure}[H]
  \centering
  \includegraphics[width=\textwidth]{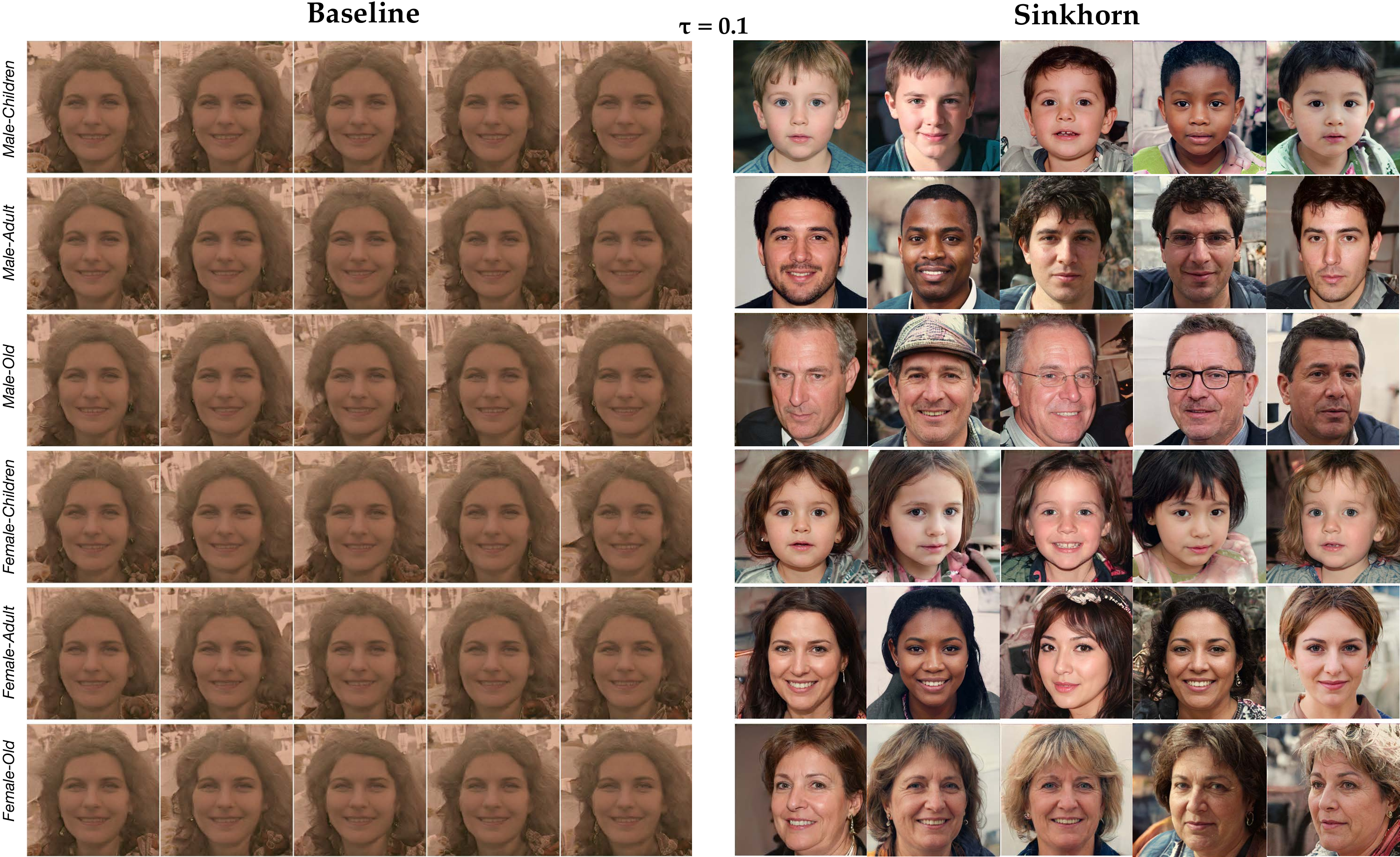}
  \caption{
    Additional qualitative comparison for class-conditional FFHQ generation at $\tau{=}0.1$.
    Each row corresponds to one class; \textbf{Baseline is on the left} and \textbf{Sinkhorn is on the right}.
  }
  \label{fig:alae_appendix_tau01}
\end{figure}

\end{document}